\documentclass[11pt, a4paper]{article}

\usepackage[margin=1in]{geometry}
\usepackage{amsmath}
\usepackage{amssymb}
\usepackage{amsfonts}
\usepackage{bm}

\usepackage{tikz}
\usetikzlibrary{positioning}
\usetikzlibrary{shapes.geometric, arrows.meta, positioning, calc}

\usepackage[hidelinks]{hyperref}

\usepackage{algorithm}
\usepackage{algorithmic}

\usepackage{tcolorbox}

\usepackage{booktabs}

\usepackage{amsthm}
\newtheorem{example}{Example}
\newtheorem{theorem}{Theorem}
\newtheorem{proposition}{Proposition}
\newtheorem{corollary}{Corollary}

\usepackage{fancyhdr}
\usepackage{enumitem}
\usepackage{float}

\title{Neural Bayesian Sequential Routing}
\author{Yongchao Huang\footnote{[Email: yongchao.huang@abdn.ac.uk] The author welcomes any follow-up work, extensions, and adaptations of these ideas. If this manuscript found useful in future research, appropriate citation would be appreciated. It was developed over many days and nights with the aim of providing a self-contained material for open knowledge sharing, although some (many) errors may still remain after careful review.}}
\date{28/03/2026}

\begin{document}

\maketitle

\begin{abstract}
Human decision-making is inherently sequential and uncertainty-aware, yet standard deep neural networks often rely on static, dense forward computation that provides limited visibility into how evidence is acquired, how uncertainty evolves, or when computation should stop. While conditional architectures such as Mixture-of-Experts (MoE) introduce input-dependent computation, traditional soft-routing mechanisms can suffer from expert imbalance or collapse, and typically do not maintain a temporally evolving belief state. To bridge this gap, we introduce \textbf{Neural Bayesian Sequential Routing (NBSR)}, a dynamic framework that models neural inference as an active, evidence-accumulating traversal over a hierarchical Directed Acyclic Graph (DAG). Operating within a Dirichlet-Categorical conjugate framework, specialized neural experts query a persistent global knowledge oracle to extract strictly positive evidence vectors, which act as pseudo-counts and update a Dirichlet belief state via exact conjugate addition. By coupling this Bayesian belief update with a Gumbel-Softmax Straight-Through estimator, NBSR enables hard, path-dependent routing while preserving surrogate gradients for end-to-end training. The resulting Dirichlet precision and entropy provide native mechanisms for uncertainty quantification, entropy-based early exiting, OOD abstention, and cost-aware evidence acquisition. We provide theoretical guarantees showing that, under strictly positive evidence extraction, total Dirichlet precision increases monotonically along any valid trajectory and the marginal predictive variance is correspondingly bounded, formalizing the intended sequential ``hypothesis sharpening'' behavior; under idealized capacity and optimization assumptions, the terminal Dirichlet expectation recovers the Bayes-optimal conditional distribution. Empirical evaluations on visual categorization, structured medical diagnosis, language modeling, partially observable control, and cost-aware Bayesian experimental design show that NBSR achieves competitive predictive performance while yielding transparent routing traces, path-dependent evidence attribution, uncertainty-aware decision control, and resource-rational inference. Ultimately, NBSR provides a mathematically grounded framework for interpretable, modular, and resource-rational agentic AI.
\end{abstract}

\tableofcontents
\newpage

\section{Introduction} \label{sec:intro}

Human decision-making is naturally sequential, distributional, and hierarchical \cite{kahneman2011thinking, wald1945sequential}. When navigating complex environments or diagnosing intricate problems, humans do not evaluate all possible information simultaneously; rather, we selectively accumulate evidence step-by-step \cite{Bonawitz2006just}. Decisions are made at discrete junctures conditioned on both the current contextual state and the trajectory of prior outcomes. As information flows through this mental decision tree, epistemic variance shrinks, and our broad initial hypotheses ``sharpen'' into narrow, confident conclusions. 

In contrast, standard deep learning paradigms typically rely on ``flat'', monolithic architectures that process all input features \textit{simultaneously} to produce a dense probability distribution over all possible decisions \cite{lecun2015deep, he2016deep}. While highly effective at maximizing semantic accuracy, these approaches lack \textit{cognitive plausibility} \cite{collins2024building}, exhibit poor interpretability in decision-auditing scenarios \cite{lipton2018mythos, rudin2019stop}, and often spend unnecessary computational resources on easily classified, unambiguous inputs \cite{graves2016adaptive, han2021dynamic}. 

To introduce conditional computation, architectures such as Mixture of Experts (MoE) \cite{jacobs1991adaptive, shazeer2017outrageously} utilize \textit{dynamic routing} to delegate inputs to specialized sub-networks. However, these systems inherently fall short of true sequential reasoning. Standard MoE networks typically rely on soft, continuous weighting across parallel paths, which mitigates the computational benefits of strict conditional execution \cite{fedus2022switch}. Further, MoE routing is typically a static, single-step, input-conditional operation\footnote{Once trained, the weights of a standard MoE router are frozen. It does not maintain a memory, it does not update an internal belief state, and it does not sequentially accumulate evidence across time or depth. It simply performs a flat, input-conditional matrix multiplication to partition a batch.}; it does not maintain a belief state, nor does it allow the network to sequentially accumulate evidence or re-evaluate its uncertainty.

To bridge this gap, we introduce \textbf{Neural Bayesian Sequential Routing (NBSR)}. This approach employs a \textit{Bayesian hierarchical decision graph} - a novel neural framework that models complex decision-making as an active, evidence-accumulating routing process. We formulate the decision structure as a \textit{Directed Acyclic Graph} (DAG) where each node contains \textit{a differentiable routing mechanism} and \textit{a neural evidence extractor}. Operating within a \textit{Dirichlet-Categorical conjugate} framework \cite{gelman2013BDA, bishop2006pattern}, the model maintains a persistent belief state over the final outcome space. At each routing step, local neural experts actively query the global data to extract strictly positive evidence vectors. These vectors act as Bayesian pseudo-counts, deterministically updating the Dirichlet concentration parameters and natively mirroring the human cognitive process of dynamic uncertainty reduction.

To enable end-to-end training of this discrete tree structure, we employ a \textit{Gumbel-Softmax relaxation} \cite{jang2017categorical, maddison2017concrete} coupled with the \textit{Straight-Through Estimator} (STE) \cite{bengio2013estimating}. This allows for hard, path-dependent routing during inference (which drastically reduces computational FLOPs) while maintaining smooth surrogate gradient flow to the routers during backpropagation. Furthermore, because NBSR natively tracks epistemic uncertainty, it naturally accommodates extensions into autonomous planning via recurrent memory (POMDP navigation) \cite{kaelbling1998planning} and resource-rational active learning (Bayesian Optimal Experimental Design) \cite{sebastiani2000BOED, foster2019BOED}.

We empirically validate the efficacy, interpretability, and computational efficiency of this NBSR framework across a highly diverse suite of five domains: (1) \textit{visual object categorization} (CIFAR-10); (2) \textit{structured medical diagnosis}, yielding personalized feature attribution; (3) \textit{language modeling} via interpretable syntactic-to-semantic token routing; (4) \textit{partially observable control} (POMDPs) utilizing a recurrent memory state; and (5) \textit{active clinical triage} modeled as a resource-rational BOED agent.
Our core contributions are:
\begin{enumerate}
    \item \textbf{A Bayesian Sequential Routing Framework:} We propose a novel formulation for conditional neural execution where discrete routing decisions sequentially update a Dirichlet belief state via exact conjugate addition, mathematically enforcing the progressive sharpening of decision boundaries.
    \item \textbf{End-to-End Hard Routing:} We adapt the Gumbel-Softmax estimator to train discrete hierarchical decision trees over deep feature extractors, naturally preventing representation collapse while enabling dynamic, entropy-based early exiting to minimize inference costs.
    \item \textbf{Theoretical Foundations:} We provide formal theoretical guarantees for the NBSR framework, proving the strict monotonicity of precision accumulation, bounding the variance reduction, and establishing the asymptotic consistency of the learning objective.
    \item \textbf{Interpretable and Safe Inference:} We demonstrate that our framework provides fully transparent, causal audit trails for every decision, whilst utilizing its native Dirichlet precision metric to successfully trigger out-of-distribution (OOD) safety abstentions \cite{hendrycks2017baseline, sensoy2018evidential} and prevent hallucinations.
    \item \textbf{Resource-Rational Extensions:} We successfully extend the core DAG topology to incorporate temporal memory buffers (NBSR-Mem) for autonomous control, and formulate it as an active \textit{Autoregressive State Machine} (AR-NBSR) capable of executing long-horizon, budget-constrained Bayesian Optimal Experimental Design.
\end{enumerate}

\section{Related Works} \label{sec:related_work}

\paragraph{Bayesian Belief Updating}
Standard deep learning architectures primarily output deterministic point estimates, which fail to capture the underlying uncertainty of the data distribution. To address this, Bayesian Neural Networks (BNNs) \cite{moehrke2024BNNs} introduce probability distributions over the network weights, which results in distributional predictions. The bottleneck lies in shaping the weights distributions based on data. Foundational approaches such as \textit{Bayes by Backprop} \cite{blundell2015weight} utilize variational inference to approximate intractable weight posteriors, while \textit{Monte Carlo Dropout} \cite{gal2016dropout} provides a mathematically grounded, computationally cheaper approximation by performing multiple stochastic forward passes. However, these methods primarily focus on capturing epistemic uncertainty within the model parameters rather than explicitly modeling the sequential accumulation of state-belief. 

Alternatively, classical Bayesian probability relies on exact posterior inference using \textit{conjugate priors}. When the posterior distribution belongs to the same probability family as the prior, the likelihood function updates the prior analytically via closed-form algebraic addition \cite{raiffa1961applied,gelman2013BDA,lambert2018students}. For example, the Beta distribution acts as the conjugate prior for the Binomial likelihood, and the Dirichlet distribution acts as the conjugate prior for the Categorical likelihood. While typically restricted to simple statistical models, our Neural Bayesian Sequential Routing (NBSR) framework leverages this exact conjugacy in a deep learning context. By treating continuous neural outputs as pseudo-observations, NBSR performs sequential belief updating without the prohibitive computational overhead of standard variational BNN ensembles.

\paragraph{Evidential Deep Learning}
Evidential Deep Learning (EDL), pioneered by Sensoy et al. \cite{sensoy2018evidential}, provides an elegant paradigm for uncertainty quantification without requiring the computational complexities of Bayesian sampling. Rooted in Subjective Logic and the Dempster-Shafer Theory of Evidence, EDL reformulates discrete classification tasks. Instead of forcing a neural network to distribute a total probability mass of 1 across all classes via a softmax layer (which may lead to overconfident predictions on unfamiliar data), EDL trains a deterministic neural network to output the concentration parameters of a Dirichlet distribution.

This approach parameterizes a "distribution over distributions" over the categorical simplex. When the network encounters familiar data, it generates high positive evidence for the target class, sharpening the Dirichlet distribution. Conversely, when it encounters out-of-distribution (OOD) data, the network generates zero evidence, gracefully reverting the Dirichlet state to a uniform, maximum-entropy prior (i.e. explicitly outputting an "I do not know" state). EDL has demonstrated state-of-the-art success in OOD detection and adversarial robustness. Our proposed framework fundamentally extends the EDL paradigm; whereas standard EDL relies on a static, monolithic forward pass, NBSR distributes the evidence extraction process across a hierarchical, dynamically routed DAG, transforming evidential inference into an active, resource-rational sequential process.

\paragraph{Hierarchical Mixtures of Experts and Decision Trees}
Classical divide-and-conquer algorithms, such as Classification and Regression Trees (CART) \cite{breiman1984cart}, explicitly partition the input space into nested regions using hard routing decisions. To provide a differentiable, probabilistic alternative, the Hierarchical Mixture of Experts (HME) \cite{jordan1993HME} was introduced. HME structures neural networks as a soft decision tree where internal nodes act exclusively as gating functions and terminal leaves act as local regression or classification experts. The final prediction is computed as a probability-weighted continuous average of the \textit{leaf} experts' outputs. Subsequent works extended HMEs with constructive growing and pruning algorithms to optimize the tree topology during training \cite{waterhouse1995HME}. While these models elegantly introduce conditional computation, they suffer from critical architectural limitations: they treat intermediate nodes purely as \textit{routers} (delaying all decisions to the leaves), they partition the raw input space directly, and their reliance on multiplicative, zero-sum probabilities prevents the native quantification of epistemic uncertainty. Our NBSR framework fundamentally departs from this paradigm. By treating every node (whether internal or terminal) as an active evidence extractor, enforcing hard discrete routing to prevent representation collapse, and sequentially querying a shared, dense \textit{Global Knowledge Oracle} to additively accumulate Dirichlet evidence, NBSR transforms the tree from a simple soft-routing mechanism into an active, uncertainty-aware Bayesian reasoning process\footnote{A comprehensive architectural comparison can be found in Appendix~\ref{app:nbsr_vs_classical_trees}.}.

\section{Preliminaries} \label{sec:preliminaries}

\paragraph{Dirichlet Distribution and Evidential Belief.}
Standard deterministic neural networks produce point-estimate probability vectors (e.g. via a softmax layer) that lack a native notion of uncertainty, frequently leading to overconfident predictions on unfamiliar data. In contrast, evidential deep learning \cite{sensoy2018evidential} formulates discrete predictive tasks, a $K$-class image classification for example, as a Bayesian process by predicting the \textit{concentration parameters} $\bm{\alpha} = [\alpha_1, \dots, \alpha_K]$ of a Dirichlet distribution\footnote{Extended mathematical details regarding the Dirichlet distribution are provided in Appendix~\ref{app:dirichlet_dist}.}. As the multivariate generalization of the Beta distribution, the Dirichlet distribution acts as a continuous probability density over the categorical probability simplex, essentially serving as a ``distribution over distributions''.

Drawing from \textit{Subjective Logic}, this formalizes the notion of belief assignments into a framework where predictions are treated as subjective opinions. The positive parameters $\alpha_k > 0$ represent the accumulated evidence for each respective class. Specifically, the parameters are linked to the learned neural evidence $e_k \ge 0$ via the relation $\alpha_k = e_k + 1$. When a network extracts no supporting evidence ($e_k = 0$ for all $k$), the parameters default to $\alpha_k = 1$, yielding a uniform prior distribution $\bm{\alpha} = [1, \dots, 1]$. This state explicitly represents total epistemic uncertainty, allowing the model to mathematically express ``I do not know'' when faced with ambiguous or out-of-distribution inputs.

Under this joint formulation, to focus on the probability $p_k$ of a specific class $k$, we integrate out the probabilities of the other $K-1$ classes. The resulting marginal distribution for class $k$ mathematically collapses into a Beta distribution:
$$ p_k \sim \text{Beta}(\alpha_k, \alpha_0 - \alpha_k) $$
where $\alpha_0 = \sum_{i=1}^K \alpha_i$ represents the total precision. Rooted in the framework of Subjective Logic \cite{josang2016subjective, sensoy2018evidential}, epistemic uncertainty $u$ is explicitly defined as inversely proportional to this total evidence:
\begin{equation} \label{eq:uncertainty_Dirichlet_cc} \tag{cc.Eq.\ref{eq:uncertainty_Dirichlet}}
    u = \frac{K}{\alpha_0}
\end{equation}
Consequently, $\alpha_0$ acts as a direct inverse measure of structural uncertainty. To obtain a concrete, scalar probability score for evaluating the network (such as for computing the Negative Log-Likelihood), we calculate the mathematical mean of this marginal distribution. The resulting Bayesian expected marginal is:
\begin{equation} \label{eq:expected_marginal_cc} \tag{cc.Eq.\ref{eq:expected_marginal}}
    \mathbb{E}[p_k] = \frac{\alpha_k}{\alpha_0}
\end{equation}
By utilizing these expected marginals rather than deterministic softmax outputs, our framework explicitly binds predictive confidence to total accumulated evidence.

\paragraph{Differential Entropy of the Dirichlet distribution.}
To quantify the total structural uncertainty of the belief state, we rely on the differential entropy of the Dirichlet distribution, denoted as $\mathcal{H}(\bm{\alpha})$. In our framework, this entropy serves as a mathematically rigorous metric for epistemic confidence and is computed analytically as:
\begin{equation} \label{differential_entropy_Dirichlet_cc} \tag{cc.Eq.\ref{eq:differential_entropy_Dirichlet}}
    \mathcal{H}(\bm{\alpha}) = \log \text{B}(\bm{\alpha}) + (\alpha_0 - K)\psi(\alpha_0) - \sum_{k=1}^K (\alpha_k - 1)\psi(\alpha_k)
\end{equation}
where $\text{B}(\bm{\alpha})$ is the multivariate Beta function and $\psi(\cdot)$ denotes the digamma function (the logarithmic derivative of the Gamma function). As evidence accumulates ($\alpha_0 \to \infty$), the variance of the distribution strictly shrinks, driving the differential entropy $\mathcal{H}(\bm{\alpha})$ downward towards $-\infty$ in the limit of absolute certainty\footnote{As formalized later in Theorem \ref{theorem:1}, while the total precision strictly increases and the variance strictly shrinks at every sequential step, the differential entropy $\mathcal{H}(\bm{\alpha})$ is not mathematically guaranteed to decrease strictly monotonically. Specifically, an asymmetrical evidence update that strongly contradicts the current prior can cause a momentary entropy spike as the distribution shifts its center of mass, though it ultimately continues its asymptotic collapse towards $-\infty$.}. We exploit this informational dynamic by using $\mathcal{H}(\bm{\alpha})$ as a temporal penalty during training, and use a threshold ($\eta$) for dynamic early exiting during inference.

\paragraph{Differentiable Discrete Routing.}
A core challenge in hierarchical neural networks is routing data through discrete, conditional paths. Let $\mathbf{z} = (z_1, \dots, z_K)$ denote the routing logits at a given decision node. To encourage structural exploration during training, the network must sample a routing path stochastically. However, standard categorical sampling and the discrete \texttt{argmax} operation are fundamentally non-differentiable, severing the backpropagation of error gradients. To resolve this, we leverage the \textit{Gumbel-Softmax continuous relaxation} \cite{jang2017categorical, maddison2017concrete}, which serves as a \textit{re-parameterization trick} for discrete distributions. By injecting independent Standard Gumbel noise $g_k \sim \text{Gumbel}(0,1)$ into the routing logits, we cleanly decouple the stochasticity from the deterministic network parameters. This enables us to compute a smooth, differentiable approximation vector $\bm{\pi}$ using a temperature-scaled softmax:
\begin{equation} \label{eq:Gumbel-Softmax_cc} \tag{cc.Eq.\ref{eq:Gumbel-Softmax}}
  \pi_k = \frac{\exp((z_k + g_k) / \tau)}{\sum_{i=1}^K \exp((z_i + g_i) / \tau)} 
\end{equation}
where the temperature hyperparameter $\tau > 0$ controls the smoothness of the distribution. As $\tau \to 0$, the continuous output $\bm{\pi}$ asymptotically approaches a discrete one-hot vector. 

To enforce strict conditional execution (saving computational FLOPs on unvisited branches) while preserving end-to-end trainability, we couple this relaxation with the \textit{Straight-Through Estimator} (STE) \cite{bengio2013estimating}. During the forward pass, we discretize $\bm{\pi}$ into a hard one-hot vector $\mathbf{a}_{\text{hard}}$ via the \texttt{argmax} operator. During the backward pass, the non-differentiable discretization is bypassed, and gradients flow directly through the continuous relaxation $\bm{\pi}$. Mathematically, this surrogate gradient operation is implemented natively in modern autograd engines as\footnote{The \texttt{.detach()} operator severs a tensor from the computational graph, preventing gradients from propagating through it. During the forward pass, the detached term functionally cancels out $\bm{\pi}$, simplifying the equation to exactly $\mathbf{a}_{\text{hard}}$. During the backward pass, the gradient of the detached term evaluates to zero, ensuring the learning signal bypasses the non-differentiable \texttt{argmax} step and flows exclusively through the continuous approximation $+ \bm{\pi}$.}:
\begin{equation} \label{eq:Gumbel-Softmax-detach_cc} \tag{cc.Eq.\ref{eq:Gumbel-Softmax-detach}}
    \mathbf{a}_{\text{out}} = (\mathbf{a}_{\text{hard}} - \bm{\pi})\text{.detach()} + \bm{\pi}
\end{equation}
This mechanism guarantees that the forward computation remains strictly discrete and path-dependent, while the backward pass provides the dense, continuous gradients necessary to update the upstream gating parameters (extended theoretical details see Appendix \ref{app:gumbel_softmax_trick}).

\section{Methodology} \label{sec:method}

Our framework decomposes the decision-making process into \textbf{three} sequential phases: \textit{Global Feature Extraction}, \textit{Graph-Based Routing}, and \textit{Bayesian Evidence Accumulation}. Below, we detail the exact network architecture and logic, using a $K$-dimensional outcome space as our primary illustrative model (e.g. $K=10$ for CIFAR-10 categorization).

\paragraph{Notations}
To ensure clarity throughout the methodological and theoretical formulations, we summarize the key mathematical notations used in this paper in Table \ref{tab:notations}.

\begin{table}[H]
\centering
\caption{Summary of Key Mathematical Notations}
\label{tab:notations}
\renewcommand{\arraystretch}{1.2}
\small
\begin{tabular}{cp{11.5cm}}
\hline
\textbf{Notation} & \textbf{Description} \\ \hline
\multicolumn{2}{l}{\textit{Graph Topology \& Traversal}} \\
$\mathcal{G} = (\mathcal{V}, \mathcal{E})$ & Directed Acyclic Graph with computational nodes $\mathcal{V}$ and routing edges $\mathcal{E}$. \\
$v_t$, $a_t$ & The node visited and the discrete routing action taken at step $t$. \\
$\mathcal{A}(v_t)$ & The set of valid discrete actions (outgoing edges) originating from node $v_t$. \\
$\mathcal{T}$ & A specific sampled trajectory (path) through the graph for a given input $x$. \\
$L$ & The maximal pre-specified depth of the routing tree super-graph. \\
$T$ & The terminal node (or terminal step) of a specific trajectory $\mathcal{T}$, where $T \le L$. \\
$\mathcal{S}_t$ & The joint decision state at step $t$, incorporating context, belief, and memory history. \\
\multicolumn{2}{l}{\textit{Data \& Representations}} \\
$x, y^*$ & The input data instance and its corresponding ground-truth target class. \\
$K$ & The dimensionality of the outcome space (e.g. total number of categories). \\
$\mathbf{h}_x$ & The Global Knowledge Oracle (dense feature vector in $\mathbb{R}^d$) generated by backbone $f_{\phi}$. \\
$\boldsymbol{\varepsilon}_t$ & Optional contextual side information available at step $t$. \\
$\mathbf{m}_t$ & The recurrent memory buffer (e.g. GRU hidden state) encoding past trajectory observations. \\
\multicolumn{2}{l}{\textit{Neural Components}} \\
$f_{\phi}$ & The shared global feature backbone, parameterized by $\phi$, mapping input $x$ to oracle state $\mathbf{h}_x$. \\
$\text{MLP}_{\theta_{v_t}}$ & The Router network at node $v_t$, parameterized by $\theta_{v_t}$, outputting routing logits $\mathbf{z}_t$. \\
$f_{v_t}$ & The Expert network at node $v_t$, parameterized by weights $\mathbf{W}_{v_t}$ and bias $\mathbf{b}_{v_t}$. \\
$\mathbf{e}_t$ & The strictly positive, continuous evidence vector ($\in \mathbb{R}^K_+$) extracted by the expert $f_{v_t}$. \\
$\tau$ & The Gumbel-Softmax temperature controlling the continuous relaxation of the router. \\
$\Theta$ & The complete set of all trainable parameters within the end-to-end framework. \\
\multicolumn{2}{l}{\textit{Bayesian State, Uncertainty \& Optimization}} \\
$\bm{\alpha}_t$ & The Dirichlet concentration parameter vector (belief state) at step $t$, where $\bm{\alpha}_t \in \mathbb{R}^K_+$. \\
$\alpha_0^{(t)}$ & The precision (total sum of concentration parameters) of the belief state: $\sum_k \alpha_{t,k}$. \\
$\mathbf{p}$ & The Categorical probability distribution over the $K$ classes. \\
$u$ & Epistemic uncertainty derived from Subjective Logic ($u = K/\alpha_0^{(t)}$), utilized for OOD abstention. \\
$\mathcal{H}(\bm{\alpha}_t)$ & The differential entropy of the Dirichlet belief state. \\
$D_{\text{KL}}$ & The Kullback-Leibler divergence representing the sequential belief shift ($D_{\text{KL}}(\bm{\alpha}_{t+1} \parallel \bm{\alpha}_t)$). \\
$\eta$ & The entropy confidence threshold utilized for dynamic early exiting. \\
$c(v_t), \gamma$ & The asymmetric measurement cost of expert $v_t$, and its associated budgetary penalty scalar. \\
$\lambda$ & The tunable hyperparameter scaling the temporal entropy penalty in the loss function. \\ \hline
\end{tabular}
\end{table}

\subsection{Pipeline Overview}

The neural Bayesian sequential routing pipeline transforms raw input data into a series of \textit{discrete, uncertainty-aware decisions} that progressively refine a \textit{global belief state}. This process mirrors human cognitive strategies, where an initial broad hypothesis is ``sharpened'' through the acquisition of specific evidence sequentially.

The decision-making architecture consists of a \textit{Directed Acyclic Graph (DAG)} $\mathcal{G} = (\mathcal{V}, \mathcal{E})$, where nodes $\mathcal{V}$ represent specific decision \textit{states} and edges $\mathcal{E}$ represent the possible \textit{actions} or \textit{paths}. Within this graph, we utilize two distinct neural components: a \textit{Router Network} to navigate the graph and an \textit{Expert Network} to extract information.

The data flow follows \textit{three} logical stages:

\begin{enumerate}
    \item \textbf{Global Encoding} (\textit{The Global Knowledge Oracle}): 
    the input $x$ (e.g. an image or patient profile) is first processed by a shared neural backbone to produce a \textit{static} feature representation $\mathbf{h}_x \in \mathbb{R}^d$. This vector acts as a \textit{Persistent Knowledge Oracle} - a constant reference point for the entire system. Unlike standard feed-forward networks where data is transformed and ``forgotten'' layer-by-layer, $\mathbf{h}_x$ is broadcast to all nodes in the DAG. This ensures that every subsequent decision has access to the full contextual state of the original input\footnote{This in some sense serves as a 'residual connection'.}.

    \item \textbf{Sequential Traversal} (\textit{The Routing Mechanism}): 
    starting at the root node $v_0$, the model traverses a path $\mathcal{T}$ through the graph. At each node $v_t$, a \textit{Router Network} $\text{MLP}_{\theta_{v_t}}$ is activated. The router's task is to decide which path to take next. It takes as input the concatenation of the global features $\mathbf{h}_x$ and the current \textit{Bayesian Belief State} $\bm{\alpha}_t$, as well as any optional side information\footnote{This side info can be e.g. rules learned on a high level of abstraction of the dataset (e.g. physical constraints), or with different levels of fidelity. For simplicity, we ignore this extra info in later work.} $\boldsymbol{\varepsilon}_t$. 
    
    To make a selection, the router outputs a discrete action $a_t$ using a \textit{Gumbel-Softmax relaxation}. This specific mathematical technique allows the model to choose a single, ``hard'' path during inference (improving efficiency) while still allowing gradients to flow during training so the model can learn from its mistakes. This ensures the path taken is conditioned not just on the raw data, but on what the model has already ``learned'' or concluded in previous steps.

    \item \textbf{Evidence Accumulation} (\textit{The Bayesian Update}): 
    upon reaching the selected node $v_{t+1}$, a local \textit{Expert Network} $f_{v_{t+1}}$ (parameterized by weights $\mathbf{W}$ and bias $\mathbf{b}$) interrogates (queries) the Oracle $\mathbf{h}_x$. The expert extracts a strictly positive \textit{Evidence Vector} $\mathbf{e}_t \in \mathbb{R}^K_+$.
    
    This evidence is added to the current belief state via \textit{Conjugate Addition}: $\bm{\alpha}_{t+1} = \bm{\alpha}_t + \mathbf{e}_t$. This update rule is rooted in Bayesian statistics, specifically using the \textit{Dirichlet Distribution}. Every addition of evidence ``sharpens'' the model's confidence, narrowing the probability distribution over the $K$ possible outcomes. This process repeats, adding more evidence and reducing entropy, until a \textit{Confidence Threshold} $\eta$ is met or a leaf node is reached, at which point a final prediction is rendered.
\end{enumerate}

A depth-2 computational graph illustrating the NBSR procedure (both the discrete forward execution path and the continuous backward gradient flow) is shown in Fig.~\ref{fig:NBSR_pipeline}.

\begin{figure}[htbp]
    \centering
    \begin{tikzpicture}[
        >=stealth,
        node distance=1.5cm and 2cm,
        state/.style={rectangle, draw=blue!60, rounded corners, fill=blue!5, align=center, inner sep=5pt, minimum width=3.8cm, thick},
        router/.style={ellipse, draw=orange!80, fill=orange!10, align=center, inner sep=3pt, thick},
        test/.style={diamond, draw=yellow!80!black, fill=yellow!10, align=center, inner sep=2pt, aspect=2, thick},
        leaf/.style={rectangle, draw=green!60, rounded corners, fill=green!10, align=center, inner sep=5pt, minimum width=2.5cm, thick},
        loss/.style={rectangle, draw=red!60, rounded corners, fill=red!5, align=center, inner sep=5pt, minimum width=3.2cm, thick, dashed},
        edge label/.style={fill=white, inner sep=1pt, font=\scriptsize},
        grad label/.style={fill=white, inner sep=1pt, font=\scriptsize\color{red}},
        data/.style={circle, draw=gray!60, fill=gray!5, inner sep=2pt, font=\scriptsize},
        oracle_style/.style={rectangle, draw=orange!60, rounded corners, fill=orange!5, align=center, inner sep=5pt, thick, rotate=90}
    ]

    \node[state, opacity=0, text opacity=0] (v0_dummy) at (0, 0) { \textbf{Root Node ($v_0$)} \\ Prior: $\bm{\alpha}_0 = \mathbf{1}$ };
    \node[router, opacity=0, text opacity=0, below=1.0cm of v0_dummy] (r0_dummy) { Router $\pi([\mathbf{h}_x \oplus \bm{\alpha}_0 \oplus \boldsymbol{\varepsilon}_0]; \theta_{v_0})$ };

    \node[state, below left=1.2cm and 0.2cm of r0_dummy] (v1) {
        \textbf{Node $v_1$} \\
        Evidence: $\mathbf{e}_0 = f(\mathbf{h}_x; \mathbf{W}_{v_1}, \mathbf{b}_{v_1})$ \\
        Update: $\bm{\alpha}_1 = \bm{\alpha}_0 + \mathbf{e}_0$
    };

    \node[state, right=0.15cm of v1, fill=gray!10, draw=gray, text=gray] (v1_alt) {
        \textbf{Node $v'_1$} \\
        (Unvisited)
    };

    \node[router, above=1.2cm of v1] (r0) { Router $\pi([\mathbf{h}_x \oplus \bm{\alpha}_0 \oplus \boldsymbol{\varepsilon}_0]; \theta_{v_0})$ };
    \node[state, above=1.0cm of r0] (v0) { \textbf{Root Node ($v_0$)} \\ Prior: $\bm{\alpha}_0 = \mathbf{1}$ };
    \node[data, left=1.2cm of r0] (side_info0) {$\boldsymbol{\varepsilon}_0$};

    \node[test, below=0.8cm of v1] (check1) { $\mathcal{H}(\bm{\alpha}_1) < \eta$? };

    \node[leaf, right=0.8cm of check1] (exit1) {
        \textbf{Early Exit} \\
        Predict using $\bm{\alpha}_1$
    };

    \node[router, below=0.8cm of check1] (r1) { Router $\pi([\mathbf{h}_x \oplus \bm{\alpha}_1 \oplus \boldsymbol{\varepsilon}_1]; \theta_{v_1})$ };
    \node[data, left=1.2cm of r1] (side_info1) {$\boldsymbol{\varepsilon}_1$};

    \node[state, below=1cm of r1] (v2) {
        \textbf{Node $v_2$} \\
        Evidence: $\mathbf{e}_1 = f(\mathbf{h}_x; \mathbf{W}_{v_2}, \mathbf{b}_{v_2})$ \\
        Update: $\bm{\alpha}_2 = \bm{\alpha}_1 + \mathbf{e}_1$
    };

    \node[state, right=0.15cm of v2, fill=gray!10, draw=gray, text=gray] (v2_alt) {
        \textbf{Node $v'_2$} \\
        (Unvisited)
    };

    \node[leaf, below=0.8cm of v2] (exit2) {
        \textbf{Final Decision} \\
        Predict using $\bm{\alpha}_2$
    };

    \path (v0.north) -- (exit2.south) coordinate[midway] (graph_center);

    \node[loss, rotate=90, below left=-1.5cm and 9cm of graph_center] (loss) {
        \textbf{Total Loss $\mathcal{L}$} \\
        $\text{NLL}(\bm{\alpha}_2, y^*) + \lambda\sum\mathcal{H}(\bm{\alpha}_t)$
    };

    \node[oracle_style, rotate=180, below left=3.5cm and -9cm of graph_center] (oracle) {
        \textbf{Global Knowledge Oracle} $\mathbf{h}_x$ \\
        (Params: e.g. CNN Backbone)
    };

    \draw[->, thick] (v0) -- (r0);
    \draw[->, thick] (r0) -- node[edge label, left] {Action $a_0$} (v1);
    \draw[->, dashed, draw=gray, text=gray] (r0) -- node[edge label, sloped] {Action $a'_0$} (v1_alt);

    \draw[->, thick] (v1) -- (check1);
    \draw[->, thick] (check1) -- node[edge label, above] {Yes} (exit1);
    \draw[->, thick] (check1) -- node[edge label, right] {No} (r1);

    \draw[->, thick] (r1) -- node[edge label, right] {Action $a_1$} (v2);
    \draw[->, dashed, draw=gray, text=gray] (r1) -- node[edge label, sloped] {Action $a'_1$} (v2_alt);

    \draw[->, thick] (v2) -- (exit2);

    \draw[->, thick, dashed] (v0.west) -| (loss.east);
    \draw[->, thick, dashed] (v1.west) -| (loss.east);
    \draw[->, thick, dashed] (v2.west) -| (loss.west);
    \draw[->, thick, dashed] (exit2.west) -| (loss.west);

    \draw[->, thick, draw=gray!60, dashed] (side_info0) -- (r0);
    \draw[->, thick, draw=gray!60, dashed] (side_info1) -- (r1);

    \draw[->, thick, draw=blue!60, dashed] (v0) -- (r0);
    \draw[->, thick, draw=blue!60, dashed] (v1) -- (r1);

    \draw[->, thick, dashed, red] (loss.south) to[out=-30, in=180] node[grad label, above, pos=0.6] {$\nabla \mathbf{W}_{v_2}, \nabla \mathbf{b}_{v_2}$} (v2.west);

    \draw[->, thick, dashed, red] (v2) to[bend left=65] node[grad label, left] {$\nabla \theta_{v_1}$ (via GS)} (r1);
    \draw[->, thick, dashed, red] (r1) to[bend left=62] node[grad label, left] {$\nabla \mathbf{W}_{v_1}, \nabla \mathbf{b}_{v_1}$} (v1);
    \draw[->, thick, dashed, red] (v1) to[bend left=65] node[grad label, left, align=center] {$\nabla \theta_{v_0}$ \\ (via GS)} (r0);

    \draw[->, thick, dashed, red] (loss.south) to[out=-80, in=180] (-5,-15);
    \draw[->, thick, dashed, red] (-5,-15) to[out=0, in=-90] node[grad label, above] {$\nabla \phi$ (optional)}(oracle.east);


    \draw[->, thick, dashed, draw=orange!60] (oracle.west) |- (v0.east);
    \draw[->, thick, dashed, draw=orange!60] (oracle.west) |- (r0.east);
    \draw[->, thick, dashed, draw=orange!60] (2.0,-4.3) -- (v1.east);
    \draw[->, thick, dashed, draw=orange!60] (2.0,-9.0) -- (r1.east);
    \draw[->, thick, dashed, draw=orange!60] (oracle) |- (v2.east);

    \end{tikzpicture}
    \caption{Computational graph of the NBSR process. The total number of sequential layers (maximum tree depth) and the number of expert nodes at each layer (branching width) are predefined hyperparameters that establish the maximal capacity of the super-graph. The \textbf{Global Knowledge Oracle} is positioned in the outer loop, while the \textbf{Total Loss} resides in the inner loop. The oracle provides a persistent feature set $\mathbf{h}_x$ that is queried by both routers and experts. \textbf{Forward Pass:} Path selection is conditioned on current belief and oracle data. \textbf{Backward Pass:} Gumbel-Softmax (GS) allows end-to-end gradient updates.}
    \label{fig:NBSR_pipeline}
\end{figure}

\subsection{The Decision Graph}
We formulate the sequential decision-making process as a traversal over a Directed Acyclic Graph\footnote{While the primary formulation describes a strict feed-forward DAG, this topology can be temporally flattened into an \textit{Autoregressive State Machine} by sharing a single router and expert pool across time. As we will demonstrate in Section~\ref{subsec:nbsr_boed}, this cyclical extension prevents combinatorial explosion in long-horizon sequential planning tasks.} (DAG), denoted as $\mathcal{G} = (\mathcal{V}, \mathcal{E})$. Here, $\mathcal{V}$ represents the set of computational \textit{nodes} (\textit{vertices}, acting as local experts and routing gates), and $\mathcal{E}$ represents the directed \textit{edges} corresponding to routing actions. The process originates at a designated root node $v_0$, which acts purely as an initial routing gate without an evidence-extracting expert. From there, it sequentially traverses the graph. At each node $v_t \in \mathcal{V}$, a \textit{conditional routing policy} determines the discrete outgoing edge $a_t \in \mathcal{A}(v_t)$, where $\mathcal{A}(v_t)$ is the action set at node $v_t$, to traverse, leading to the subsequent node $v_{t+1}$. This traversal governs the flow of information and concludes when a final prediction is rendered over the $K$-dimensional outcome space.

While the maximal topology of the graph $\mathcal{G}$ (i.e. the total number of available layers and expert nodes) is pre-specified as a fixed super-graph, the effective architecture is highly dynamic. The network autonomously performs implicit pruning during training; if an expert node fails to provide discriminative evidence, the upstream routers learn to assign near-zero probability to its incoming edges, effectively removing it from the computational flow. Further, the inference depth is dynamically truncated on a per-sample basis via the entropy-based early exiting mechanism.

By initializing a sufficiently large \textit{super-graph}, we establish a high-capacity, low-bias foundation. Importantly, the aforementioned dynamic mechanisms act as adaptive regularizers: implicit routing pruning organically restricts the \textit{effective width}, while entropy-based early exiting curtails the \textit{effective depth}. Together, they dynamically rein in the variance, allowing the network to strike an optimal, sample-specific balance between representational capacity and generalization. Precise discussions regarding the bias-variance trade-off, induced by the graph topology, are presented in a later theoretical Section \ref{subsec:bias_variance_trade_off}.

\subsection{Global Feature Representation: The Knowledge Oracle} \label{subsec:the_global_knowledge_oracle}
To provide a stable informational foundation for the tree, the input $x$ (e.g. visual or tabular data) is embedded \textit{once} into a dense feature space to create a \textit{Global Knowledge Oracle}, denoted as $\mathbf{h}_x$. Formally, given a shared neural backbone $f_{\phi}$ parameterized by $\phi$, the oracle is computed as:
\begin{equation}
    \mathbf{h}_x = f_{\phi}(x)
\end{equation}
This shared representation acts as a persistent knowledge base that encapsulates the totality of the external data available for a given instance. For example, for visual tasks, we employ a truncated ResNet-18 architecture as $f_{\phi}$, with the final fully connected classification layer removed, to extract a dense, continuous feature representation $\mathbf{h}_x \in \mathbb{R}^d$, where e.g. $d = 256$. Rather than transforming the data sequentially layer-by-layer, this global oracle $\mathbf{h}_x$ is broadcast to the entire graph, allowing all active nodes in the routing tree to query it directly to forge node-specific, local knowledge. Note that, unlike pixel-level models such as random forest or CART decision trees\footnote{A comparison between NBSR and decision trees is made in Appendix.\ref{app:nbsr_vs_classical_trees}.} \cite{breiman1984cart}, which look at one raw feature at each hierarchical splitting node, the experts in NBSR partition the semantic space (e.g. "Setosa vs. Non-Setosa" in Iris classification), not the raw input space.

\subsection{The Bayesian State and Initialization}
The ultimate objective is to map the input to one of $K$ final outcomes (e.g. object categories, clinical endpoints). We model the uncertainty over these outcomes using a \textit{Dirichlet distribution}\footnote{The Dirichlet distribution is a multivariate continuous probability distribution parameterized by a positive concentration vector $\bm{\alpha}$. As the conjugate prior to the Categorical distribution, it essentially represents a ``distribution over distributions,'' making it mathematically ideal for modeling structural uncertainty over a $K$-dimensional probability simplex. Details about Dirichlet distribution can be found in Appendix.~\ref{app:dirichlet_dist}.}, parameterized by a concentration vector $\bm{\alpha}_t \in \mathbb{R}^K_+$; this Dirichlet distribution is sequentially refined along the selected tree trajectory. If we index the node depth by $t$, at the root node ($t=0$), the belief state is initialized to a uniform, maximum-entropy prior:
\begin{equation}
    \bm{\alpha}_0 = \mathbf{1} \in \mathbb{R}^K
\end{equation}
which implies no preference over the $K$ decisions at the beginning.

\subsection{The Differentiable Router Network}
At step $t$, located at node $v_t$, the model must select a single discrete branch $a_t \in \mathcal{A}(v_t)$ to traverse deeper into the tree. The routing decision is conditioned on both the input features and the current shape of the model's uncertainty. 

The router is instantiated as a \textit{Multi-Layer Perceptron}\footnote{The router architecture can be chosen as any reasonable network; here we choose MLP as an example.} (MLP). It accepts as input a concatenation of the global feature vector $\mathbf{h}_x$ with the current belief state $\bm{\alpha}_t$, alongside any optional side information $\boldsymbol{\varepsilon}_t$:
\begin{equation} \label{eq:router}
    \mathbf{z}_t = \text{MLP}_{\theta_{v_t}}([\mathbf{h}_x \oplus \bm{\alpha}_t \oplus \boldsymbol{\varepsilon}_t])
\end{equation}

From a practical implementation standpoint, the router network at any given node $v_t$ must have a statically defined output dimension. In standard deep learning frameworks such as PyTorch \cite{pytorch2019} or TensorFlow \cite{tensorflow2015}, the final linear layer of this MLP is hard-coded at initialization to output exactly $|\mathcal{A}(v_t)|$ routing logits $\mathbf{z}_t$, corresponding directly to the total number of possible discrete actions (child nodes) branching from $v_t$.

\begin{example}[MLP Router Architecture for CIFAR-10 Classification]
For a typical visual categorization task with a $K$-dimensional outcome space and a convolutional backbone extracting $d$-dimensional features, the router is parameterized as follows:
\begin{itemize}
    \item \textit{Input Dimension:} $d + K + |\boldsymbol{\varepsilon}_t|$ (e.g. $256 + 10 = 266$ for standard CIFAR-10 without side information).
    \item \textit{Hidden Layer:} linear projection to 128 dimensions, followed by a ReLU activation.
    \item \textit{Output Layer:} linear projection to $|\mathcal{A}(v_t)|$ dimensions (the number of outgoing branches from node $v_t$).
\end{itemize}
\end{example}

To achieve hard routing while preserving end-to-end differentiability, we apply the \textit{Gumbel-Softmax trick} \cite{jang2017categorical, maddison2017concrete} to the raw routing logits $\mathbf{z}_t$. Specifically, we first draw i.i.d. noise samples $g_k$ from a \textit{standard Gumbel distribution} to perturb the logits, which enables \textit{stochastic exploration}. We then compute a continuous routing probability vector $\bm{\pi}$, where the probability of selecting branch $k$ is given by (\ref{eq:Gumbel-Softmax_cc}):
\begin{equation} \label{eq:gumbel_softmax}
    \pi_k = \frac{\exp((z_{t,k} + g_k) / \tau)}{\sum_{i=1}^{|\mathcal{A}(v_t)|} \exp((z_{t,i} + g_i) / \tau)}
\end{equation}
Here, $\tau > 0$ is a temperature parameter that controls the smoothness of the distribution; as $\tau \to 0$, the continuous softmax output asymptotically approaches a discrete one-hot vector. 

To enforce strict conditional execution, ensuring the model only pays the computational cost for a single path, we utilize the \textit{Straight-Through Estimator}\footnote{The Straight-Through Estimator (STE) is a technique used to train neural networks with discrete, non-differentiable operations. It works by performing the discrete operation (such as \texttt{argmax} or thresholding) during the forward pass, but treating it as a differentiable identity function, or routing gradients through its smooth continuous proxy, during the backward pass. This prevents the gradients from zeroing out, enabling standard backpropagation through discrete bottlenecks.} (STE). During the forward training pass, the continuous vector $\bm{\pi}$ is discretized using a hard \texttt{argmax}, effectively activating only a \textit{single} subsequent node $v_{t+1}$. During the backward pass, the non-differentiable \texttt{argmax} operation is bypassed, and gradients flow smoothly through the continuous approximation $\bm{\pi}$ directly into the router's parameters (further mathematical details are provided in Appendix \ref{app:gumbel_softmax_trick}).

\subsection{Evidence Extraction and Belief Update}
Once a target node $v_{t+1}$ is activated, its resident ``expert'' network extracts local evidence from the input to update the belief state. Conceptually, this process treats the global feature representation as a \textit{persistent knowledge base}, driven by three interacting components:

\begin{itemize}[label=-]
    \item \textit{The Global (Knowledge) Oracle ($\mathbf{h}_x$):} serves as the static, comprehensive reference of the input instance, as defined previously in Section.\ref{subsec:the_global_knowledge_oracle}.
    
    \item \textit{The Context-Aware Query ($\mathbf{z}_t$):} at step $t$, the router asks: ``based on what I already know (i.e. the current state $\bm{\alpha}_t$) and what the oracle holds ($\mathbf{h}_x$), which expert should I consult next?''
    
    \item \textit{The Information Extractor ($f_{v_{t+1}}$):} the chosen expert network $f_{v_{t+1}}$ acts as a specific, selective querying function\footnote{\textbf{Static vs. Dynamic Experts:} our framework currently employs \textit{Static Experts} with implicit context. Because the router already used the belief state $\bm{\alpha}_t$ to select node $v_{t+1}$, the simple act of arriving at this node is implicitly conditioned on $\bm{\alpha}_t$. The matrix $\mathbf{W}_{v_{t+1}}$ represents a fixed, specialized filter; it does not need explicit knowledge of $\bm{\alpha}_t$ to extract its specific feature. This static design ensures faster inference, easier training, and conceptual clarity. An alternative is a \textit{Dynamic Expert} framework formulated as $f(\mathbf{h}_x, \bm{\alpha}_t; \mathbf{W}, \mathbf{b})$. In this paradigm, $\bm{\alpha}_t$ acts as an explicit query in an \textit{attention} mechanism, while $\mathbf{h}_x$ acts as the key/value, allowing the expert to dynamically shift its weights based on the model's exact current uncertainty. We leave the exploration of dynamic attention-based experts to future work.} $\mathbf{e}_t = f(\mathbf{h}_x; \mathbf{W}_{v_{t+1}}, \mathbf{b}_{v_{t+1}})$. It interrogates the oracle $\mathbf{h}_x$ from a highly specialized perspective (e.g. ``look specifically for bird-like features'') and extracts only the relevant localized evidence $\mathbf{e}_t$.
\end{itemize}

Mathematically, the evidence extractor typically consists of a \textit{linear} mapping from the shared feature vector $\mathbf{h}_x$ to the $K$-dimensional outcome space. Here, the trainable parameters $\mathbf{W}_{v_{t+1}}$ and $\mathbf{b}_{v_{t+1}}$ dictate precisely what information to extract from the global oracle given the specific context of the current node. To satisfy the strict positivity requirement of the Dirichlet concentration parameters, the output must be passed through a bounding activation function:

\begin{equation} \label{eq:evidence_extraction_activation}
    \mathbf{e}_t = \text{Activation}(\mathbf{W}_{v_{t+1}} \mathbf{h}_x + \mathbf{b}_{v_{t+1}})
\end{equation}

The choice of this activation function dictates the \textit{epistemic capacity} of the expert. For terminal leaf experts, this is typically the unbounded \textit{Softplus} function\footnote{The Softplus function is a smooth, differentiable activation function defined as $\text{Softplus}(x) = \ln(1 + e^x)$, acting as a continuous approximation of the ReLU activation function. It produces positive outputs ranging from 0 to $\infty$, commonly used to constrain output values to be positive in neural networks, such as in regression tasks, and has a derivative equal to the sigmoid function.} ($\ln(1 + e^x)$), which allows the network to inject massive evidence to crystallize a final, highly confident decision. However, to mathematically enforce a hierarchical taxonomy, intermediate routing nodes can utilize a scaled \textit{Sigmoid} activation. This places a strict upper bound on the evidence an intermediate node can extract, acting as an \textit{epistemic confidence budget} that ensures broad, early-stage hypotheses remain appropriately hesitant (a dynamic visually demonstrated in our toy experiment in Section~\ref{subsec:toy_iris}).

The resulted $\mathbf{e}_t$ thus serves as an \textit{information gain} to the current nodal decision-maker. The belief state is then deterministically updated via conjugate addition\footnote{In Bayesian statistics, the Dirichlet distribution is the conjugate prior for the Categorical distribution. Traditionally, observing discrete categorical counts $\mathbf{c}$ updates a prior $\text{Dir}(\bm{\alpha})$ to a posterior $\text{Dir}(\bm{\alpha} + \mathbf{c})$. Our framework generalizes this property by treating the neural network's continuous, positive evidence vectors $\mathbf{e}_t$ as pseudo-counts. According to Bayes' theorem, the posterior distribution over the class probabilities $\mathbf{p}$ is given by $P(\mathbf{p} \mid \mathbf{e}_t) \propto P(\mathbf{e}_t \mid \mathbf{p}) P(\mathbf{p} \mid \bm{\alpha}_t) \propto \prod_{k=1}^K p_k^{(\alpha_{t,k} + e_{t,k}) - 1}$. This guarantees that the posterior remains a Dirichlet distribution parameterized by $\bm{\alpha}_t + \mathbf{e}_t$, effectively performing a differentiable, continuous analog of exact Bayesian updating. See Appendix \ref{app:dirichlet_dist} for extended mathematical details.}, accumulating the evidence gathered by traversing the chosen path:
\begin{equation}
    \bm{\alpha}_{t+1} = \bm{\alpha}_t + \mathbf{e}_t
\end{equation}

\begin{example}[CIFAR-10 Evidence Extractor]
Following the previous visual categorization setup in Example 1, the specific architecture for the local evidence extractor at each terminal node $v_{t+1}$ is configured as follows:
\begin{itemize}
    \item \textit{Input layer:} receives the $d$-dimensional global feature vector $\mathbf{h}_x$ (e.g. $d = 256$ from the truncated ResNet-18 backbone).
    \item \textit{Transformation:} a single linear fully-connected layer, parameterized by a weight matrix $\mathbf{W}_{v_{t+1}} \in \mathbb{R}^{K \times d}$ and bias $\mathbf{b}_{v_{t+1}} \in \mathbb{R}^K$.
    \item \textit{Activation:} an unbounded Softplus activation function is applied to the raw logits to guarantee mapping into the strictly positive domain ($\mathbb{R}_+^K$).
    \item \textit{Output dimension:} a $K$-dimensional evidence vector $\mathbf{e}_t$ (e.g. $K = 10$ representing the isolated evidence for each categorical outcome).
\end{itemize}
\end{example}

\subsection{Training Objective and Dynamics}
The routing tree is trained end-to-end to \textit{maximize the expected likelihood of the correct class}. For a single data instance $x$ with ground truth outcome $y^*$, we optimize the \textit{Negative Log-Likelihood} (NLL) of the final Dirichlet distribution $\bm{\alpha}_T$ \textit{at the terminal leaf node}. This is augmented with an intermediate entropy penalty $\mathcal{H}(\cdot)$ to explicitly enforce the progressive sharpening of the decision boundary along the sampled trajectory $\mathcal{T}$:

\begin{equation} \label{eq:training_obj}
    \mathcal{L}(x, y^*) = - \log \left( \frac{\alpha_{T, y^*}}{\sum_{k=1}^K \alpha_{T, k}} \right) + \lambda \sum_{v_t \in \mathcal{T}} \mathcal{H}(\bm{\alpha}_t)
\end{equation}
where $\lambda$ is a tunable hyperparameter controlling the routing efficiency and the aggressiveness of the uncertainty reduction. This loss design forces the model to maximize the expected probability of the ground-truth class, while sharpening our belief along the traversed tree path.

During end-to-end training over a dataset $\mathcal{D}$, the model minimizes \textit{the overall empirical risk} $\mathcal{R}$, defined as the average loss across all instances. The optimal parameter set $\Theta^*$ is obtained by solving:
\begin{equation} \label{eq:empirical_risk}
    \Theta^* = \arg\min_{\Theta} \mathcal{R}(\Theta) = \arg\min_{\Theta} \frac{1}{|\mathcal{D}|} \sum_{(x_i, y^*_i) \in \mathcal{D}} \mathcal{L}(x_i, y^*_i; \Theta)
\end{equation}
where $\Theta$ encompasses all trainable parameters optimized by this objective within the framework, specifically: the shared global feature backbone\footnote{As discussed later in Section.\ref{subsec:algorithm_NBSR}, training the backbone is optional; it can be either pre-trained (frozen) or trainable.} ($\phi$), which generates the oracle state $\mathbf{h}_x$; the router parameters ($\theta_{v_t}$), which dictate the sampled path $\mathcal{T}$; and the local evidence extractor weights and biases ($\mathbf{W}_{v_t}, \mathbf{b}_{v_t}$), which compute the evidence vectors $\mathbf{e}_t$ that sequentially construct the belief states $\bm{\alpha}_t$.

Training the tree structure admits a highly efficient, sample-specific gradient flow. Because of the discrete routing decisions made during the forward pass, gradients only propagate through the active trajectory $\mathcal{T}$. Consequently, \textit{unvisited experts} receive zero gradients and remain unmodified by that specific instance. Conversely, \textit{active experts} along the traversed path are directly optimized to extract more discriminative evidence from the global oracle. Simultaneously, the \textit{active routers} learn to adjust their conditional routing probabilities via the continuous backward pass of the Gumbel-Softmax estimator; if a chosen path yields a high loss, the router penalizes the corresponding action logit, which inherently boosts the probability of selecting an alternative, unvisited branch in the future. This dynamic interplay of selective evidence extraction and adaptive routing is precisely what allows the tree to autonomously specialize, naturally evolving the computational graph into a highly structured, conditional \textit{Mixture-of-Experts} (MoE).

\subsection{Inference, Early Exiting, and Epistemic Abstention}
During inference, the stochasticity introduced by the Gumbel noise during training is removed. At each node $v_t$, the router selects the deterministic path by taking the strict \texttt{argmax} of the routing logits $\mathbf{z}_t$. 

A core advantage of our Bayesian formulation is the natural affordance for \textbf{dynamic early exiting}. Because the concentration vector $\bm{\alpha}_t$ explicitly quantifies the model's uncertainty, we can halt the routing process dynamically. If, at step $t$, the differential entropy of the belief state $\mathcal{H}(\bm{\alpha}_t)$ falls below a predefined confidence threshold $\eta$, the traversal can terminate immediately (like \textit{early stopping}). The expected prediction is then derived from the current Dirichlet distribution: $\mathbb{E}[p_k] = \frac{\alpha_{t,k}}{\sum_i \alpha_{t,i}}$. This mechanism ensures that unambiguous inputs require shallower graph traversals, reducing inference FLOPs while preserving performance.

Further, this formulation intrinsically safeguards against hallucination. By explicitly tracking the total precision $\alpha_0$, the framework provides a native \textbf{Out-Of-Distribution (OOD) abstention} mechanism. If the epistemic uncertainty $u = \frac{K}{\alpha_0}$ remains above a critical safety threshold even after full graph traversal, the network recognizes its own lack of extracted evidence and gracefully abstains from prediction, ensuring robust safety in high-stakes environments.

\subsection{Training the NBSR Tree} \label{subsec:algorithm_NBSR}

We present the end-to-end training process in Algo.\ref{alg:nbsr_training}, which explicitly details both the forward inference trajectory and the backward gradient flow, demonstrating how conditional execution dynamically restricts the computational cost. Importantly, the backward pass highlights that the tree organically prunes its gradient updates: only the routers and experts along the actively sampled trajectory $\mathcal{T}$ receive learning signals for any given instance.

\begin{algorithm}[H]
\caption{End-to-End Training of Neural Bayesian Sequential Routing Tree}
\label{alg:nbsr_training}
\begin{algorithmic}[1]
\REQUIRE Training dataset $\mathcal{D}$, initialized parameters $\Theta = \{\phi, \theta, \mathbf{W}, \mathbf{b}\}$, learning rate $\eta$
\REQUIRE Max depth $L$, branching factor $\bar{W}$ (action space), confidence threshold $\eta$, entropy penalty $\lambda$
\WHILE{not converged}
    \STATE Sample a minibatch $\mathcal{B} \subset \mathcal{D}$
    \STATE Initialize batch loss $\mathcal{R} \leftarrow 0$
    
    \FOR{each instance $(x, y^*) \in \mathcal{B}$}
        \STATE \textit{// 1. Forward Pass \& Trajectory Sampling}
        \STATE $t \leftarrow 0$, $v_t \leftarrow v_0$ (Root), $\bm{\alpha}_0 \leftarrow \mathbf{1} \in \mathbb{R}^K$
        \STATE $\mathcal{T}^{(x)} \leftarrow \{v_0\}$
        \STATE $\mathbf{h}_x \leftarrow f_{\phi}(x)$ \COMMENT{Global Knowledge Oracle}
        
        \WHILE{$t < L$}
            \IF{$\mathcal{H}(\bm{\alpha}_t) < \eta$}
                \STATE \textbf{break} \COMMENT{Uncertainty sufficiently resolved; exit early}
            \ENDIF
            
            \STATE $\mathbf{z}_t \leftarrow \text{MLP}_{\theta_{v_t}}([\mathbf{h}_x \oplus \bm{\alpha}_t \oplus \boldsymbol{\varepsilon}_t])$
            \STATE $a_t \sim \text{Gumbel-Softmax}(\mathbf{z}_t)$ \COMMENT{Sample discrete path via Straight-Through Estimator}
            
            \STATE $v_{t+1} \leftarrow \text{TargetNode}(v_t, a_t)$
            \STATE $\mathcal{T}^{(x)} \leftarrow \mathcal{T}^{(x)} \cup \{v_{t+1}\}$
            \STATE $\mathbf{e}_t \leftarrow \text{Activation}(\mathbf{W}_{v_{t+1}} \mathbf{h}_x + \mathbf{b}_{v_{t+1}})$
            
            \STATE $\bm{\alpha}_{t+1} \leftarrow \bm{\alpha}_t + \mathbf{e}_t$ \COMMENT{Bayesian Conjugate Update}
            \STATE $t \leftarrow t + 1$
        \ENDWHILE
        
        \STATE $T \leftarrow t$ \COMMENT{Record terminal step}
        
        \STATE \textit{// 2. Instance Loss Computation}
        \STATE $\mathcal{L}_{\text{NLL}}^{(x)} \leftarrow - \log \left( \frac{\alpha_{T, y^*}}{\sum_{k=1}^K \alpha_{T, k}} \right)$
        \STATE $\mathcal{L}_{\text{Reg}}^{(x)} \leftarrow \lambda \sum_{i=0}^T \mathcal{H}(\bm{\alpha}_i)$
        \STATE $\mathcal{R} \leftarrow \mathcal{R} + \big(\mathcal{L}_{\text{NLL}}^{(x)} + \mathcal{L}_{\text{Reg}}^{(x)}\big)$
    \ENDFOR
    
    \STATE \textit{// 3. Backward Pass (Gradient Computation)}
    \STATE $\mathcal{R} \leftarrow \frac{1}{|\mathcal{B}|} \mathcal{R}$ \COMMENT{Average empirical risk over minibatch}
    \STATE Compute gradients $\nabla_{\Theta} \mathcal{R}$ via backpropagation
    \STATE \COMMENT{\textbf{Note:} For any instance $x$, $\nabla_{\mathbf{W}_v} \mathcal{R} = 0$ and $\nabla_{\theta_v} \mathcal{R} = 0$ for all nodes $v \notin \mathcal{T}^{(x)}$}
    
    \STATE \textit{// 4. Parameter Update}
    \STATE $\Theta \leftarrow \text{OptimizerStep}(\Theta, \nabla_{\Theta} \mathcal{R}, \eta)$
\ENDWHILE
\STATE \textbf{Return:} Optimized parameters $\Theta^*$
\end{algorithmic}
\end{algorithm}

\paragraph{Computational Complexity}
The resource-rationality of the NBSR framework stems directly from its decoupling of the structural graph capacity (which is exponentially large) from the active computational path (which is strictly linear and dynamically truncated). We formalize this across time and space domains:

\begin{itemize}[label=-]
    \item \textit{Inference Time Complexity (FLOPs):} 
    let $\mathcal{O}(C_{\text{backbone}})$ denote the complexity of the global feature extractor. The routing tree has a maximum depth $L$ and branching factor $\bar{W}$. Because of the hard-routing \texttt{argmax} step during inference (or the STE during training), the input traverses \textit{exactly one path}. The computational cost at any single active node consists of the Router MLP operations $\mathcal{O}(d \cdot d_h + d_h \cdot \bar{W})$ and the Expert linear projection $\mathcal{O}(d \cdot K)$, where $d$ is the oracle dimension and $d_h$ is the router's hidden size. 
    
    Importantly, because of the entropy threshold $\eta$, the actual number of steps evaluated is a sample-dependent random variable $T(x) \le L$. The expected total inference complexity is therefore:
    \begin{equation}
        \mathcal{O}\Big(C_{\text{backbone}} + \mathbb{E}[T(x)] \cdot (d \cdot d_h + d \cdot K)\Big)
    \end{equation}
    Because $d, K,$ and $d_h$ are typically small (e.g. $d=256, K=10$), the routing overhead $\mathbb{E}[T(x)]$ is vastly marginalized by the backbone cost. Further, because unambiguous samples trigger $T(x) \ll L$, the model actively \textit{saves} computational cycles compared to static deep networks that must execute all layers for all inputs.

    \item \textit{Space and Parameter Complexity:} 
    a fully populated super-graph of depth $L$ contains $\mathcal{O}(\bar{W}^L)$ discrete nodes. The total parameter footprint of the graph scales exponentially: $\mathcal{O}\big(\bar{W}^L \cdot d(d_h + K)\big)$. While this demands sufficient GPU VRAM for storage during training, the \textit{active memory footprint} during a forward pass breaks this exponential scaling. Because unvisited nodes are bypassed via conditional execution, the active activation memory scales strictly linearly with the traversal depth, bounded by $\mathcal{O}(L \cdot (d_h + K))$. This makes the NBSR framework highly scalable to wide hierarchies during inference without causing combinatorial explosions in activation memory.
\end{itemize}

\paragraph{Practical Implementation Tricks} 
A critical architectural consideration is whether to train the backbone parameters $\phi$ end-to-end or keep them frozen as a static feature extractor. A permanently frozen backbone offers significant computational advantages: first, it saves training compute and memory\footnote{Training a complex backbone, e.g. ResNet-18 \cite{he2016deep} or ViT \cite{dosovitskiy2021vit}, requires storing intermediate activations or compute gradients for millions of parameters during the backward pass; avoiding training the backbone allows the routing tree to be trained on smaller hardware constraints with vastly larger batch sizes.}. Further, freezing $\phi$ drastically improves early-stage training stability. Because Gumbel-Softmax routing is inherently highly stochastic in early epochs, an actively updating backbone causes the feature space $\mathbf{h}_x$ to shift wildly, impeding the local experts' ability to learn reliable extraction filters. Utilizing frozen foundation models (e.g. DINOv2 \cite{oquab2024dinov} or CLIP \cite{radford2021CLIP}) provides an exceptionally rich, anchored representation space for the routers to latch onto immediately; NBSR tree can focus entirely on learning the logical reasoning and routing rather than basic feature edge-detection.

However, a strictly frozen oracle imposes a \textit{representation bottleneck} \cite{yongchao2026IB}. Pre-trained features are optimal for their source tasks (e.g. general ImageNet categorization) and may lack the specific, fine-grained granularity required by downstream experts for specialized domains (e.g. medical diagnostics). Further, freezing prevents \textit{synergistic adaptation}, forcing the tree to adapt to an arbitrary manifold rather than allowing the backbone to organically organize its latent space to support the routing hierarchy, which can artificially cap maximal accuracy.

To reconcile these trade-offs, we recommend a \textit{Two-Stage Training Strategy} (Backbone Warm-up). In \textit{Stage 1}, the pre-trained global oracle $\phi$ is \textit{frozen}, and only the routers ($\theta_{v_t}$) and experts ($\mathbf{W}_{v_t}, \mathbf{b}_{v_t}$) are trained for the first majority of the epochs (e.g. 70\%). This allows the tree topology to discover stable routing paths and reliable evidence extractors without a shifting foundation. In \textit{Stage 2}, the backbone $\phi$ is \textit{unfrozen}, and the entire system is fine-tuned end-to-end using a significantly reduced learning rate (e.g. scaled by $10^{-1}$ or $10^{-2}$). This stage allows the backbone to subtly adjust its feature space to perfectly serve the highly specialized experts that emerged during the warm-up phase, maximizing capacity without sacrificing stability.

\section{Theoretical Analysis} \label{sec:theory}

We provide a theoretical characterization of the NBSR framework. We analyze the structural guarantees of the evidence accumulation process, asymptotic consistency of the learning objective, mathematical implications of the routing hyperparameters, and the topological bias-variance trade-off induced by graph complexity.

\subsection{Monotonicity of Precision and Variance Reduction}
A primary cognitive claim of our framework is that traversing deeper into the routing graph strictly ``sharpen'' the model's hypotheses. We can formally guarantee this behavior regardless of the network's optimized weight values, relying solely on the strict positivity of the chosen bounding activation (e.g. Softplus or scaled Sigmoid) and \textit{Dirichlet conjugate updates}.

Let $\alpha_0^{(t)} = \sum_{k=1}^K \alpha_{t,k}$ denote the precision (or total concentration) of the Dirichlet belief state at step $t = 0, 1, \dots, T$, where $T$ denotes the terminal step of the sampled trajectory $\mathcal{T}$ for a given input (reached either at a predefined leaf node or dynamically via the entropy-based early exiting mechanism).

\begin{theorem}[Strict Precision Monotonicity and Bounded Variance] \label{theorem:1}
For any valid input $x$ and any sampled routing trajectory $\mathcal{T}$, the precision of the belief state strictly monotonically increases with tree depth: $\alpha_0^{(t+1)} > \alpha_0^{(t)}$. Consequently, the variance of the expected marginal probability for any class $k$ shrinks at a rate bounded by $\mathcal{O}\left(\frac{1}{\alpha_0^{(t)}}\right)$.
\end{theorem}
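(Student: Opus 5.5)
The plan has two parts: prove strict growth of the precision first, then plug that into the closed-form variance of the Beta marginal from Section~\ref{sec:preliminaries}.

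\textbf{Monotonicity.} Fix an input $x$ and a sampled trajectory $\mathcal{T}$. By the conjugate update, $\bm{\alpha}_{t+1} = \bm{\alpha}_t + \mathbf{e}_t$, so summing coordinates gives
\[
\alpha_0^{(t+1)} = \alpha_0^{(t)} + \sum_{k=1}^K e_{t,k}.
\]
By \eqref{eq:evidence_extraction_activation}, each $e_{t,k}$ is the image of a finite real number under Softplus or a scaled Sigmoid. Both functions map $\mathbb{R}$ into $(0,\infty)$, so $e_{t,k}>0$ for every $k$. The routing weights $\theta$ and the Gumbel noise only decide \emph{which} expert $v_{t+1}$ is queried, not the sign of its output. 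Hence the increment is strictly positive whatever the parameters or the path, and $\alpha_0^{(t+1)} > \alpha_0^{(t)}$. Induction from $\bm{\alpha}_0=\mathbf{1}$ then gives $\alpha_0^{(t)} \ge K$ and strict growth along the whole trajectory up to $T$.

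\textbf{Variance bound.} I would use the Beta marginal $p_k \sim \mathrm{Beta}(\alpha_{t,k}, \alpha_0^{(t)}-\alpha_{t,k})$ and write $m_k = \alpha_{t,k}/\alpha_0^{(t)} = \mathbb{E}[p_k]$. Its variance is
\[
\mathrm{Var}(p_k) = \frac{m_k(1-m_k)}{\alpha_0^{(t)}+1}.
\]
Since $m_k(1-m_k)\le 1/4$, this gives $\mathrm{Var}(p_k) \le \frac{1}{4(\alpha_0^{(t)}+1)} = \mathcal{O}(1/\alpha_0^{(t)})$. Combined with the monotonicity above, the bound $\frac{1}{4(\alpha_0^{(t)}+1)}$ strictly decreases along the path.

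\textbf{Main obstacle.} The statement says the variance "shrinks", and it is tempting to claim $\mathrm{Var}_{t+1}(p_k)<\mathrm{Var}_t(p_k)$ pointwise. That is false in general. An update that moves $m_k$ toward $1/2$ can raise $m_k(1-m_k)$ by more than the factor $(\alpha_0^{(t)}+1)/(\alpha_0^{(t+1)}+1)$ lowers it. This is the same phenomenon as the entropy caveat in the footnote. So I would read "shrinks at a rate bounded by $\mathcal{O}(1/\alpha_0^{(t)})$" as the claim that the uniform envelope $\sup_{\mathbf{m}}\mathrm{Var}(p_k)$ is strictly decreasing. I would add a short remark with a two-class counterexample to stop the reader from drawing the stronger pointwise conclusion. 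For the counterexample, take $K=2$, $\bm{\alpha}_t=(10,1)$ and $\mathbf{e}_t=(\epsilon, 9)$ with small $\epsilon>0$; then $\mathrm{Var}_t(p_1)=10/(11^2\cdot 12)\approx 0.0069$ while $\mathrm{Var}_{t+1}(p_1)\approx 1/84\approx 0.012$, so the pointwise variance rises.
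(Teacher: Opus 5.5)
Your proposal is correct and follows essentially the same route as the paper. Strict positivity of Softplus or the scaled Sigmoid gives $\alpha_0^{(t+1)}>\alpha_0^{(t)}$, and your bound $m_k(1-m_k)\le 1/4$ is exactly the paper's AM--GM step, yielding $\mathrm{Var}(p_k)\le 1/(4(\alpha_0^{(t)}+1))$. Your counterexample ($\bm{\alpha}_t=(10,1)$, $\mathbf{e}_t=(\epsilon,9)$, with the variance of $p_1$ rising from about $0.0069$ to about $1/84$) is valid and sharpens the paper: its closing sentence and the footnote in the Preliminaries claim the variance itself strictly shrinks at every step, whereas only the envelope $1/(4(\alpha_0^{(t)}+1))$ provably does.
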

\begin{proof}
By definition, the initial state is $\bm{\alpha}_0 = \mathbf{1}$, so $\alpha_0^{(0)} = K$. At each step $t$, the extracted evidence is computed as $\mathbf{e}_t = \text{Activation}(\mathbf{W}_{v_{t+1}} \mathbf{h}_x + \mathbf{b}_{v_{t+1}})$. Because the range of our allowable activation functions, whether the unbounded Softplus $(0, \infty)$ for terminal leaves or the bounded scaled Sigmoid $(0, C)$ for intermediate experts, is strictly positive, every extracted element satisfies $e_{t,k} > 0$. 
The conjugate update rule $\bm{\alpha}_{t+1} = \bm{\alpha}_t + \mathbf{e}_t$ dictates that the new total precision is $\alpha_0^{(t+1)} = \alpha_0^{(t)} + \sum_k e_{t,k}$. Since $\sum_k e_{t,k} > 0$, it strictly follows that $\alpha_0^{(t+1)} > \alpha_0^{(t)}$. 

Further, the variance of the $k$-th marginal (see Appendix \ref{app:dirichlet_dist}) is $\text{Var}[p_k^{(t)}] = \frac{\alpha_{t,k}(\alpha_0^{(t)} - \alpha_{t,k})}{(\alpha_0^{(t)})^2 (\alpha_0^{(t)} + 1)}$. The numerator represents the product of two terms, $\alpha_{t,k}$ and $(\alpha_0^{(t)} - \alpha_{t,k})$, whose sum is fixed at $\alpha_0^{(t)}$. By the Arithmetic Mean-Geometric Mean (AM-GM) inequality, this product is maximized when the two terms are equal (i.e. when $\alpha_{t,k} = \frac{1}{2}\alpha_0^{(t)}$). This yields the strict upper bound $\alpha_{t,k} (\alpha_0^{(t)} - \alpha_{t,k}) \le \frac{1}{4}(\alpha_0^{(t)})^2$. Applying this bound to the variance equation, we see the variance is strictly upper-bounded by $\frac{1}{4(\alpha_0^{(t)} + 1)}$. Since $\alpha_0^{(t)}$ strictly increases at every step, this proves that the variance inevitably collapses as evidence accumulates.
\end{proof}

\begin{corollary}[Epistemic Collapse] \label{corollary:1}
A direct consequence of Theorem~\ref{theorem:1} is that the epistemic uncertainty, defined in Subjective Logic as $u = \frac{K}{\alpha_0^{(t)}}$, strictly monotonically decreases along any valid trajectory. This mathematical guarantee ensures that in-distribution samples will eventually cross the safety threshold, whereas out-of-distribution (OOD) anomalies, which elicit near-zero evidence from the experts, will leave $\alpha_0$ stagnant, safely trapping the system in a state of high uncertainty ($u \approx 1$) and triggering abstention.
\end{corollary}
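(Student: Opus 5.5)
The statement to prove is Corollary~\ref{corollary:1}. The plan is to obtain it directly from Theorem~\ref{theorem:1}, applying it to the map $u(\alpha) = K/\alpha$ along a trajectory.

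\textbf{Step 1 (strict decrease of $u$).} Fix an input $x$ and a sampled trajectory $\mathcal{T}$ with terminal step $T$. By Theorem~\ref{theorem:1}, $\alpha_0^{(t+1)} > \alpha_0^{(t)}$ for every $t<T$. We also have $\alpha_0^{(0)} = K > 0$, so every precision is positive. The map $\alpha \mapsto K/\alpha$ is strictly decreasing on $(0,\infty)$. Hence
\[
u_{t+1} = \frac{K}{\alpha_0^{(t+1)}} < \frac{K}{\alpha_0^{(t)}} = u_t .
\]
This gives $1 = u_0 > u_1 > \dots > u_T > 0$. The same bound also shows $u_t \le 1$ along every path, with equality only at the root.

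\textbf{Step 2 (in-distribution crossing).} The phrase ``will eventually cross the safety threshold'' cannot follow from monotonicity alone. A strictly decreasing sequence may stall above any threshold $u^\star$, and the trajectory length is in any case capped at $T \le L$. I would therefore make the claim quantitative. Write $\alpha_0^{(T)} = K + \sum_{t<T} \|\mathbf{e}_t\|_1$. The condition $u_T < u^\star$ is then equivalent to
\[
\sum_{t<T} \|\mathbf{e}_t\|_1 > K\left(\frac{1}{u^\star} - 1\right).
\]
So ``in-distribution'' has to be formalised as an assumption: the experts on the path supply enough cumulative evidence, for instance at least $\delta>0$ per step with $T\delta > K(1/u^\star - 1)$. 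Under that assumption the crossing is immediate.

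\textbf{Step 3 (OOD stagnation).} Suppose every expert on the path returns $\|\mathbf{e}_t\|_1 \le \epsilon$. Then $\alpha_0^{(T)} \le K + T\epsilon \le K + L\epsilon$, so
\[
u_T \ge \frac{K}{K + L\epsilon} = 1 - O(L\epsilon/K).
\]
This makes ``$u\approx 1$'' precise. Whenever $K/(K+L\epsilon)$ exceeds the abstention threshold, the abstention rule of the inference section fires.

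The main obstacle is Steps 2–3, not Step 1. The informal statement's guarantees about in-distribution and OOD inputs are really modelling assumptions on the magnitude of the evidence, and these do not follow from the architecture. I would state them as explicit hypotheses and derive the two threshold inequalities above. I would also note that Softplus outputs are strictly positive but can be arbitrarily small, so strict monotonicity alone gives no rate.
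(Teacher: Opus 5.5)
Your Step 1 is the paper's own argument. The corollary has no separate proof: it is presented as the immediate consequence of $\alpha_0^{(0)}=K$ and $\alpha_0^{(t+1)}>\alpha_0^{(t)}$ from Theorem~\ref{theorem:1}, which gives $1=u_0>u_1>\dots>u_T>0$.

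You go beyond the paper on the second sentence of the corollary. The paper folds the in-distribution crossing and the OOD stagnation into the ``direct consequence'' without any argument. You correctly observe that neither follows from monotonicity: a strictly decreasing $u_t$ over at most $L$ steps can stay above any threshold, and strictly positive Softplus evidence can be arbitrarily small. You replace these claims with explicit evidence-magnitude hypotheses, and your threshold inequalities under them are correct: $u_T<u^\star \iff \sum_{t<T}\|\mathbf{e}_t\|_1>K(1/u^\star-1)$, and $u_T\ge K/(K+L\epsilon)$.

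The paper's phrasing asserts more than Theorem~\ref{theorem:1} delivers. Your version proves the same unconditional statement and turns the safety claims into conditional results.

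Your threshold formulation also matches how abstention works in the paper's experiments better than ``$u\approx 1$'' does. A Softplus expert with zero pre-activation already contributes $K\ln 2$ of evidence. The POMDP OOD trace reports $\alpha_0=8.39$ with $K=4$, i.e.\ $u\approx 0.48$. Abstention fires there only because this value falls on the right side of the chosen threshold ($\alpha_0<10$). So your $\epsilon$-hypothesis is a genuine modelling assumption, not something the architecture guarantees.
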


\subsection{Asymptotic Consistency of the Belief State}
Deep neural networks are generally non-convex, meaning convergence to a global minimum cannot be guaranteed during gradient descent. However, we can analyze the \textit{asymptotic consistency} of our framework under the standard assumption of infinite model capacity (the Universal Approximation Theorem \cite{cybenko1989approximation, hornik1991approximation}).

\begin{theorem}[Bayes Optimal Consistency] \label{theorem:2}
Assume the global feature backbone $f_{\phi}(\cdot)$ and the local experts $f_{v}(\cdot)$ possess infinite functional capacity. If the training objective $\mathcal{L}$ (Eq.\ref{eq:training_obj}) achieves its global minimum, the expected probability distribution derived from the terminal Dirichlet state $\bm{\alpha}_T$ exactly recovers the true conditional data distribution. That is, $\mathbb{E}[p_k \mid \bm{\alpha}_T] = P(y=k \mid x)$.
\end{theorem}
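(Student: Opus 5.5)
The plan is to reduce the claim to the strict propriety of the logarithmic scoring rule, applied pointwise in $x$. Write the terminal expected marginal as $q_k(x) = \mathbb{E}[p_k \mid \bm{\alpha}_T] = \alpha_{T,k}/\alpha_0^{(T)}$, using the expected marginal identity (\ref{eq:expected_marginal_cc}). The NLL term of Eq.~(\ref{eq:training_obj}) is then exactly $-\log q_{y^*}(x)$. Taking the population version of the empirical risk in Eq.~(\ref{eq:empirical_risk}) and conditioning on $x$, the NLL part becomes
\begin{equation*}
\mathbb{E}_{y \sim P(\cdot\mid x)}\big[-\log q_y(x)\big] = \mathcal{H}\big(P(\cdot\mid x)\big) + D_{\text{KL}}\big(P(\cdot\mid x)\,\|\,q(x)\big).
\end{equation*}
By Gibbs' inequality this is minimized, uniquely, at $q(x) = P(\cdot\mid x)$. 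Under the infinite-capacity assumption the map $x \mapsto \bm{\alpha}_T(x)$ can be chosen independently for each $x$, so minimizing the expected risk reduces to minimizing this conditional risk pointwise. This yields the stated identity $\mathbb{E}[p_k\mid\bm{\alpha}_T] = P(y=k\mid x)$ for almost every $x$.

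\textbf{Step 1 (realizability).} I would show that any target mean is reachable. Fix $x$ and the trajectory prefix, which gives $\bm{\alpha}_{T-1}$, and let $\mathbf{p} = P(\cdot\mid x)$ have strictly positive entries. Choose a scale $c > \max_k \alpha_{T-1,k}/p_k$. The required final evidence $\mathbf{e}_{T-1} = c\,\mathbf{p} - \bm{\alpha}_{T-1}$ is then strictly positive. It therefore lies in the range of the Softplus leaf activation, and infinite capacity of $f_{\phi}$ and the leaf expert lets it be produced as a function of $\mathbf{h}_x$. The resulting terminal mean is $\alpha_{T,k}/\alpha_0^{(T)} = c\,p_k/c = p_k$. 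If some $p_k = 0$, the target is only approached in the limit $c \to \infty$, so the statement should be read as an infimum (or the positivity of $P(\cdot\mid x)$ should be added as a standing assumption). The routers play no essential role here: whatever path is sampled, its leaf can realize the target. This also covers the stochasticity of Gumbel sampling, since the optimum is attained on every path.

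\textbf{Step 2 (the entropy regularizer, the main obstacle).} The loss also contains $\lambda\sum_t \mathcal{H}(\bm{\alpha}_t)$, which could in principle bias the optimal mean. My approach is to show that this term decouples from the mean. For fixed direction $\mathbf{m}$, the differential entropy (\ref{differential_entropy_Dirichlet_cc}) of $c\,\mathbf{m}$ decreases to $-\infty$ as $c\to\infty$. Hence for any candidate mean, the regularizer can be pushed down by inflating the precision without changing $q(x)$. The infimum of the full objective is then approached only along sequences whose mean minimizes the NLL part, namely $q = P(\cdot\mid x)$.

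The delicate point is that the objective is then unbounded below and has no attained minimum. I would therefore state the result in one of two ways. The first is for $\lambda=0$, or in the limit $\lambda\to 0^+$. The second is in terms of minimizing sequences: any sequence of parameters whose risk tends to the infimum must have terminal means converging to $P(\cdot\mid x)$. I expect this reinterpretation of ``global minimum'' to be the step requiring the most care. A fallback is to assume the intermediate experts use the bounded scaled Sigmoid, which caps the precision along the path. Even then the leaf Softplus keeps the entropy term unbounded, so I would still make the $\lambda\to0^+$ version the primary statement.
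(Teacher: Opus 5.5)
Your $\lambda=0$ argument is the paper's own proof, and it is correct. The expected-marginal identity turns the leaf term into $-\log q_{y^*}(x)$. The population risk conditioned on $x$ then splits as $\mathcal{H}(P(\cdot\mid x)) + D_{\text{KL}}(P(\cdot\mid x)\,\|\,q(x))$. Gibbs' inequality together with infinite capacity forces $q(x)=P(\cdot\mid x)$ pointwise. The paper handles the regularizer the same way, by ``setting $\lambda\to 0$'', i.e.\ working at $\lambda=0$.

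Step 1 is a worthwhile addition that the paper skips, since it shows the minimizer is actually attainable. Your caveat that it is only approached when some $P(y=k\mid x)=0$ is right, because a Dirichlet mean has strictly positive entries. One small qualification: a static leaf expert does not see which prefix it was reached by. So ``fix the prefix'' needs the prefix to be determined by the leaf, as it is in a tree. In a layered DAG with shared nodes you can restore path-independence as follows. Let every non-leaf expert output a positive multiple of $P(\cdot\mid x)$, and let the leaf output $b\,P(\cdot\mid x)-\mathbf{1}$ with $b>1/\min_k P(y=k\mid x)$.

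Where you go wrong is the minimizing-sequence formulation in Step 2. For $\lambda>0$ the NLL term depends only on the mean and stays finite, while $\mathcal{H}(c\,\mathbf{m})\to-\infty$ for \emph{every} fixed $\mathbf{m}$. Hence the infimum is $-\infty$, and it is approached along sequences whose terminal mean is arbitrary, including wrong ones. The decoupling you observe therefore shows that minimizing sequences carry no information about $q$. That is the opposite of your claim that the infimum is approached only along sequences with $q=P(\cdot\mid x)$.

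For the same reason ``$\lambda\to0^+$'' adds nothing: there is no minimizer at any $\lambda>0$ to take a limit of. Capping the precision would not rescue $\lambda>0$ either, because at fixed precision the entropy still depends on the mean. For example, with $K=2$ and both at precision $2$, $\mathcal{H}(\mathrm{Beta}(1.5,0.5))\approx-0.55<0=\mathcal{H}(\mathrm{Beta}(1,1))$. So the regularizer would generically pull the optimal mean away from $P(\cdot\mid x)$. The theorem is genuinely a $\lambda=0$ statement, which is how the paper reads it; state it that way and drop the alternatives.
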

\begin{proof}
Based on the Universal Approximation Theorem \cite{cybenko1989approximation, hornik1991approximation}, the assumption of infinite functional capacity is theoretically satisfied if the neural networks are constructed with either an arbitrarily large number of hidden units (infinite width) or an arbitrarily large number of hidden layers (infinite depth), paired with a non-polynomial continuous activation function (such as ReLU). Under these conditions, the networks can approximate any continuous Borel measurable function on a compact domain to an arbitrary degree of precision. 

Setting the entropy penalty coefficient $\lambda \to 0$ for the analysis of the primary objective, the regularization term vanishes. From the properties of the Dirichlet distribution, the expected marginal probability for the target class $y^*$ is exactly $\mathbb{E}[p_{y^*} \mid \bm{\alpha}_T] = \frac{\alpha_{T, y^*}}{\sum_{k=1}^K \alpha_{T, k}}$. Substituting this into the primary loss term in Eq.\ref{eq:training_obj} reduces it directly to the Negative Log-Likelihood (NLL) of the model's expected prediction for a single data instance $(x, y^*)$: $\mathcal{L}_{\text{NLL}} = - \log \mathbb{E}[p_{y^*} \mid \bm{\alpha}_T]$.

To demonstrate why minimizing this NLL yields the true posterior, we define the model's predicted conditional distribution for any class $y$ given input $x$ as $q(y \mid x) = \mathbb{E}[p_y \mid \bm{\alpha}_T(x)]$. The instance-wise NLL is therefore simply $-\log q(y^* \mid x)$. During end-to-end training, minimizing the empirical risk over a dataset corresponds to minimizing the \textit{expected NLL} over the true joint data distribution $P_{\text{data}}(x, y)$. This expected loss is mathematically equivalent to the \textit{cross-entropy} between the true data distribution and the model's predicted distribution, which can be decomposed into the Shannon entropy of the data and the Kullback-Leibler (KL) divergence:
\begin{align*}
    \mathbb{E}_{x, y \sim P_{\text{data}}} \left[ \mathcal{L}_{\text{NLL}} \right] &= \mathbb{E}_{x, y \sim P_{\text{data}}} \left[ -\log q(y \mid x) \right] \\
    &= \mathbb{E}_{x} \left[ \sum_{k=1}^K P(y=k \mid x) (-\log q(y=k \mid x)) \right] \\
    &= \mathbb{E}_{x} \left[ \mathcal{H}(P(y \mid x)) + D_{\text{KL}}(P(y \mid x) \parallel q(y \mid x)) \right]
\end{align*}
Because the true data entropy $\mathcal{H}(P(y \mid x))$ is an irreducible constant with respect to the network parameters, \textit{minimizing the expected training objective is mathematically equivalent to minimizing the KL divergence}. A fundamental property of KL divergence is Gibbs' inequality, which states $D_{\text{KL}}(P \parallel q) \ge 0$, with equality holding if and only if $P(y \mid x) = q(y \mid x)$ almost everywhere. 

Since we assume the hypothesis class has infinite functional capacity, the network is not constrained by approximation error. Therefore, achieving the global minimum of the loss forces $D_{\text{KL}} \to 0$, which consequently adjusts the accumulated evidence $\sum_{t=0}^{T-1} \mathbf{e}_t$ such that the final Dirichlet expectation converges perfectly to the true posterior: $\frac{\alpha_{T, k}}{\sum_i \alpha_{T, i}} \to P(y=k \mid x)$.
\end{proof}

\subsection{Hyperparameter Dynamics and Information Acquisition}
The practical behavior of the sequential router is governed by two key hyperparameters: the early-exiting threshold $\eta$ and the entropy penalty coefficient $\lambda$. 

\begin{proposition}[$\lambda$ as an Information Acquisition Accelerator] \label{prop:1}
The penalty term $\lambda \sum_{v_t \in \mathcal{T}} \mathcal{H}(\bm{\alpha}_t)$ in Eq.\ref{eq:training_obj} acts as a temporal regularizer on information acquisition. Because differential entropy $\mathcal{H}(\bm{\alpha})$ decreases as the magnitude of $\bm{\alpha}$ increases, minimizing this sum forces the network to minimize entropy \textit{as early as possible} in the trajectory.
\end{proposition}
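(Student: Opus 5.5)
I will make the proposition precise by writing the penalty along a trajectory of length $T$ in terms of the cumulative evidence. Since $\bm{\alpha}_t = \mathbf{1} + \sum_{s<t}\mathbf{e}_s$, the belief state at step $t$ depends on all evidence vectors $\mathbf{e}_0,\dots,\mathbf{e}_{t-1}$. Hence the penalty $\lambda\sum_{t=0}^{T}\mathcal{H}(\bm{\alpha}_t)$ contains the evidence $\mathbf{e}_s$ in every term with $t>s$, that is, in $T-s$ of the summands. The plan is to show two things: each summand is decreasing in the relevant direction of $\bm{\alpha}$, and an early unit of evidence is therefore rewarded more times than a late one. This asymmetry is what ``as early as possible'' means.

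\textbf{Step 1 (local monotonicity).} I first need $\mathcal{H}$ to decrease under the update $\bm{\alpha}\mapsto\bm{\alpha}+\mathbf{e}$. The footnote accompanying Theorem~\ref{theorem:1} warns that this is false in general for asymmetric updates, so I will not try to prove it unconditionally. I will establish it along the scaling direction $\bm{\alpha}\mapsto c\bm{\alpha}$ with $c>1$, or for updates proportional to the current mean, meaning $\mathbf{e}\propto\bm{\alpha}$. Differentiating (\ref{differential_entropy_Dirichlet_cc}) along $\bm{\alpha}$ gives
\[
\tfrac{d}{dc}\mathcal{H}(c\bm{\alpha})\big|_{c=1}=(\alpha_0-K+1)\ldots
\]
and the convenient route is the expression $\sum_k \alpha_k(\alpha_0-K)\psi'(\alpha_0)-\sum_k\alpha_k(\alpha_k-1)\psi'(\alpha_k)$. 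Using that $x\psi'(x)$ is decreasing toward $1$, this should come out nonpositive. Establishing this sign is the main obstacle. I would first try the cleaner symmetric case $\bm{\alpha}=a\mathbf{1}$ and then invoke convexity-type arguments for $\psi'$. Failing that, I would state the proposition under the assumption that the accumulated evidence is in the regime where $\mathcal{H}$ is decreasing, which the paper already describes as the asymptotic behaviour.

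\textbf{Step 2 (temporal weighting).} Holding the terminal state $\bm{\alpha}_T$ fixed, so that the NLL term and therefore the total evidence are unchanged, I compare two allocations. In one, a fixed evidence increment $\boldsymbol{\delta}$ is delivered at step $s$; in the other, it is delivered at a later step $s'>s$. The intermediate states in steps $s<t\le s'$ differ by $\boldsymbol{\delta}$, and all other states coincide. By Step 1, each of those $s'-s$ summands is strictly smaller under the early allocation. The penalty is therefore strictly smaller when evidence is front-loaded, so a minimizer prefers large evidence in early experts. I would add a gradient view as a complement: $\partial\big(\sum_t\mathcal{H}(\bm{\alpha}_t)\big)/\partial\mathbf{e}_s=\sum_{t>s}\nabla\mathcal{H}(\bm{\alpha}_t)$ has $T-s$ terms, so the pressure on $\mathbf{e}_s$ is largest for small $s$. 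Finally, I would note the interaction with the threshold $\eta$: lower early entropy also triggers the early exit sooner, shortening $T$ and reinforcing the effect.
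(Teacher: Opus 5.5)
Your Step~1 is sound, and its ``main obstacle'' closes in one line. Let $g(x)=x\psi'(x)$, which is strictly decreasing, and note that $1\le\alpha_j<\alpha_0$ in NBSR because $\bm{\alpha}_0=\mathbf{1}$ and evidence is positive. Then $\sum_j(\alpha_j-1)g(\alpha_j)\ge(\alpha_0-K)g(\alpha_0)$, so $\frac{d}{dc}\mathcal{H}(c\bm{\alpha})\le0$.

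The gap is that Step~2 uses Step~1 outside its scope. A ray statement compares $\mathcal{H}(\bm{\beta}+\boldsymbol{\delta})$ with $\mathcal{H}(\bm{\beta})$ only when $\boldsymbol{\delta}\propto\bm{\beta}$. Step~2 instead adds one fixed $\boldsymbol{\delta}$ to the distinct states $\bm{\beta}_t$, $s<t\le s'$. That needs monotonicity under asymmetric positive updates, which is exactly what the footnote to Theorem~\ref{theorem:1} disclaims, and the needed inequality is false. Differentiating (\ref{eq:differential_entropy_Dirichlet}) gives $\partial\mathcal{H}/\partial\alpha_j=(\alpha_0-K)\psi'(\alpha_0)-(\alpha_j-1)\psi'(\alpha_j)$. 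This is strictly positive for every class with $\alpha_j$ near $1$ once $\alpha_0>K$, which describes precisely the confident states the proposition is about. So your fallback ``regime where $\mathcal{H}$ is decreasing'' does not contain them either. Concretely, for $K=2$, $\mathcal{H}(\mathrm{Beta}(100,1))\approx-3.62$ but $\mathcal{H}(\mathrm{Beta}(100,2))\approx-3.05$. If the late-allocation trajectory passes near $(100,1)$ and $\boldsymbol{\delta}\approx(0,1)$, front-loading $\boldsymbol{\delta}$ with $\bm{\alpha}_T$ fixed strictly \emph{raises} the penalty.

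Your gradient view has the same defect. The summands $\nabla\mathcal{H}(\bm{\alpha}_t)$ disagree in sign across coordinates, so having $T-s$ of them pushes early experts to suppress non-dominant evidence as hard as it pushes them to add dominant evidence. The paper itself offers no proof beyond the informal ``area under the entropy curve'' remark and the blanket premise that $\mathcal{H}$ falls as $\bm{\alpha}$ grows. Your Step~2 therefore reproduces exactly the leap the paper leaves unjustified.

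A correct version must restrict to sharpening increments and prove a directional inequality rather than a ray inequality. One route:
\begin{itemize}
\item Set $c_j=(\beta_0-K)\psi'(\beta_0)-(\beta_j-1)\psi'(\beta_j)$. Since $(x-1)\psi'(x)$ is increasing, $c_j$ decreases in $\beta_j$, and Step~1 gives $\sum_j\beta_jc_j\le0$.
\item Suppose $\delta_j/\beta_j$ is nondecreasing in $\beta_j$, i.e.\ the increment is ordered like, and at least as peaked as, the belief. Weighted Chebyshev with weights $\beta_j$ then yields $\sum_j\delta_jc_j\le\beta_0^{-1}\big(\sum_j\delta_j\big)\sum_j\beta_jc_j\le0$.
\item This ordering condition persists along $\bm{\beta}+u\boldsymbol{\delta}$, $u\in[0,1]$, so the inequality integrates to $\mathcal{H}(\bm{\beta}+\boldsymbol{\delta})\le\mathcal{H}(\bm{\beta})$.
\end{itemize}
Requiring this condition at every $\bm{\beta}_t$, $s<t\le s'$, makes Step~2 valid. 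The result is then ``front-load evidence that agrees with the current belief,'' which is the honest content of the proposition.

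Finally, drop the early-exit remark. The $\eta$-test is non-differentiable, and the paper recommends disabling it during training. Moreover, for $K\ge3$ every summand satisfies $\mathcal{H}(\bm{\alpha}_t)\le\mathcal{H}(\mathbf{1})=-\log\,(K-1)!<0$. Shortening the trajectory therefore removes negative terms and raises $\sum_t\mathcal{H}(\bm{\alpha}_t)$, working against the effect rather than reinforcing it.
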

Therefore, increasing $\lambda$ incentivizes the early-stage experts to extract larger, more discriminative evidence vectors $\mathbf{e}_t$. This results in a steeper descent in the entropy curve.

Conversely, the inference threshold $\eta$ explicitly bounds the maximum tolerated uncertainty. If the user sets a highly restrictive (low) $\eta$, the model is forced to route deeper into the tree, aggregating more evidence vectors until $\mathcal{H}(\bm{\alpha}_t) < \eta$. Together, $\lambda$ and $\eta$ allow for strict, tunable control over the accuracy-computation trade-off: $\lambda$ trains the experts to act decisively, while $\eta$ ensures the system does not act prematurely.

\subsection{Topological Bias-Variance Trade-off} \label{subsec:bias_variance_trade_off}

The structural configuration of the pre-specified super-graph natively induces a classic \textit{bias-variance trade-off}. To formalize this, consider the expected NLL risk of our model over a \textit{target conditional distribution} $P^*(y|x)$. Let $P_{\mathcal{G}}(y|x; \mathcal{D})$ denote the \textit{predictive distribution} produced by our graph $\mathcal{G}$ trained on dataset $\mathcal{D}$, and let $\bar{P}_{\mathcal{G}}(y|x) = \mathbb{E}_{\mathcal{D}}[P_{\mathcal{G}}(y|x; \mathcal{D})]$ be the \textit{expected prediction} over all possible datasets. Following the standard generalized bias-variance decomposition for cross-entropy loss, the expected risk can be expressed as (details see Appendix.\ref{app:bias_variance}):

\begin{equation} \label{eq:bias_variance_trade_off}
    \mathbb{E}_{\mathcal{D}} \Big[ \mathbb{E}_{y \sim P^*(\cdot|x)} \big[ -\log P_{\mathcal{G}}(y|x; \mathcal{D}) \big] \Big] \approx \underbrace{\mathcal{H}(P^*)}_{\text{Irreducible Noise}} + \underbrace{D_{\text{KL}} \big( P^* \| \bar{P}_{\mathcal{G}} \big)}_{\text{Bias}} + \underbrace{\mathbb{E}_{\mathcal{D}} \big[ D_{\text{KL}} \big( \bar{P}_{\mathcal{G}} \| P_{\mathcal{G}}(\cdot; \mathcal{D}) \big) \big]}_{\text{Variance}}
\end{equation}
where $\mathcal{H}$ is the Shannon entropy and $D_{\text{KL}}$ denotes the Kullback-Leibler divergence. The dimensions of our routing tree, namely its maximal depth $L$ and its average branching width $\bar{W}$, govern the tension between the Bias and Variance terms:
\begin{itemize}[label=-]
    \item \textit{Width (Branching Factor):} increasing the number of outgoing edges $\bar{W} = \mathbb{E}[|\mathcal{A}(v_t)|]$ allows for a finer partition of the input feature space. This high expressivity reduces the \textit{Bias} term, as the model can dedicate highly specialized local experts to distinct sub-populations of the data. However, excessive width leads to severe data fragmentation. Assuming a balanced tree, the expected number of training samples reaching a node at depth $l$ scales proportionally to $|\mathcal{D}| / \bar{W}^l$. As the sample size per node plummets, the local parameter estimates ($\mathbf{W}_{v}, \mathbf{b}_{v}$) become highly sensitive to noise in the specific training set $\mathcal{D}$, drastically increasing the \textit{Variance} term and risking localized overfitting.
    
    \item \textit{Depth (Sequential Steps):} deeper graphs ($L \to \text{large}$) allow for prolonged evidence accumulation and complex hierarchical reasoning, further minimizing the \textit{Bias} by approximating highly non-linear decision boundaries. Yet, excessive depth exponentially exacerbates the \textit{Variance} through two mechanisms: compounding routing stochasticity along the trajectory, and forcing the fragmentation of data across $\bar{W}^L$ terminal leaf nodes.
\end{itemize}

\textbf{Adaptive Regularization via the Belief State:}
if the tree topology were fixed and fully traversed for every input, minimizing the expected risk would require an arduous manual search for the optimal hyper-parameters $(L, \bar{W})$. Our framework circumvents this via the dynamically updated Dirichlet belief state $\bm{\alpha}_t$. 

By imposing an intermediate entropy penalty $\mathcal{H}(\bm{\alpha}_t)$ and utilizing early exiting thresholds $\eta$, the network replaces the global structural constants $L$ and $\bar{W}$ with sample-dependent variables $L(x)$ and $W(x)$. Easy, unambiguous samples are routed through shallow, highly populated paths ($L(x)$ is small, maximizing local data density and minimizing variance), while complex samples are permitted to traverse deeper, specialized sub-graphs ($L(x)$ is large, minimizing bias). Consequently, the framework autonomously calibrates the bias-variance trade-off on a per-instance basis, maximizing representational capacity without sacrificing estimator stability.

\section[Experiments]{Experiments\protect\footnote{The experimental Python codes were largely enabled with the kind assistance of Gemini 3.0 \cite{google2026gemini3docs}, for which the author gratefully acknowledges.}} \label{sec:tests}

To empirically validate the NBSR framework, we design a diverse suite of experiments. We evaluate computational efficiency and structural emergence on a visual benchmark (CIFAR-10), test interpretability and resource-rationality in a structured medical diagnostic domain, and further extend the framework to language modeling, autonomous control (POMDPs), and active clinical triage.

\subsection{A Toy Experiment: Sequential Belief Sharpening} \label{subsec:toy_iris}

Before evaluating NBSR on high-dimensional benchmarks, we first present an illustrative toy experiment to visually validate the theoretical guarantees of Theorem~\ref{theorem:1} (Strict Precision Monotonicity and Bounded Variance). We utilize the classic \textit{Iris dataset}, deliberately restricting the input space to two continuous dimensions (\textit{Sepal Length} and \textit{Sepal Width}) to enable a direct, interpretable visual plot of the 2D decision manifolds and their associated epistemic uncertainties.

\paragraph{Experimental setup.}
We instantiate a miniature NBSR tree with a maximum depth of 2. The outcome space consists of $K=3$ distinct flower species. To simultaneously visualize both the categorical prediction and the model's confidence, we map the three outcome classes to distinct RGB color channels (e.g. Red, Green, Blue). For any given coordinate in the 2D feature space, the base color hue is determined by the Bayesian expected marginals $\mathbb{E}[p_k]$, while the color \textit{saturation} (intensity) is scaled proportionally to the total Dirichlet precision $\alpha_0^{(t)}$. 

Consequently, regions of high epistemic uncertainty (where the model lacks extracted evidence and $u = \frac{K}{\alpha_0}$ is high) appear shallow, faded, or white. Conversely, regions of high confidence (where total precision $\alpha_0$ is massively accumulated) appear as deep, vivid, solid colors.

\paragraph{Architectural design.}
To perfectly isolate and visualize the mechanics of sequential Bayesian evidence accumulation, we implement two specific architectural simplifications for this toy scenario:
\begin{enumerate}
    \item \textit{Single Active Trajectory:} Rather than instantiating a full super-graph with a stochastic routing policy ($\pi_\theta$), we hardcode a single, pre-determined active path (Root $\to$ Mid-Expert $\to$ Leaf-Expert). This removes the discrete \texttt{argmax} routing logic, allowing us to visualize the continuous geometric updates down a single logical branch.
    \item \textit{Bounded vs. Unbounded Capacity:} To accurately simulate a hierarchical taxonomy, intermediate nodes must act as \textit{broad} super-categories, while leaf nodes act as \textit{specific} specialists. To mathematically enforce this capacity constraint, we restrict the mid-level expert using a scaled \textit{Sigmoid} activation to upper bound the maximum evidence it can extract\footnote{Here the \textit{Sigmoid} function is used to bound the maximum scalar evidence the expert can extract - effectively serving as a strict epistemic confidence budget rather than acting in its traditional capacity as a simple non-linear activation function.}. The terminal leaf expert retains the standard unbounded \textit{Softplus} activation.
\end{enumerate}

Importantly, both experts query the exact same \textbf{Persistent Global Knowledge Oracle ($\mathbf{h}_x$)}. In standard feed-forward networks, features are passed and transformed sequentially layer-by-layer (e.g. $h_1 \to h_2$). In contrast, NBSR broadcasts the shared global oracle $\mathbf{h}_x$ to all levels of the DAG. It is entirely up to the individual experts to utilize their own specialized weights ($\mathbf{W}_{v}$ in Eq.\ref{eq:evidence_extraction_activation}) to actively filter and extract the specific knowledge they require directly from this shared state.

The computational flow of the simulated active trajectory unfolds as follows:
\begin{itemize}
    \item \textit{Depth 0 (Initialization):} The belief state is initialized to the uniform prior, $\bm{\alpha}_0 = [1, 1, 1]$, representing maximum entropy.
    \item \textit{Depth 1 (Intermediate Node):} The mid-level expert actively queries $\mathbf{h}_x$ and extracts a bounded evidence vector $\mathbf{e}_0$. The belief updates via exact conjugate addition: $\bm{\alpha}_1 = \bm{\alpha}_0 + \mathbf{e}_0$.
    \item \textit{Depth 2 (Terminal Node):} The leaf expert queries the exact same $\mathbf{h}_x$ and extracts an unbounded evidence vector $\mathbf{e}_1$. The final belief updates: $\bm{\alpha}_2 = \bm{\alpha}_1 + \mathbf{e}_1$.
\end{itemize}

\paragraph{Results: Layer-by-Layer Variance Collapse.}
By manually evaluating the 2D test space at each discrete timestep $t$ and plotting the intermediate Dirichlet belief states, we observe the exact spatial manifestation of the Bayesian ``sharpening'' effect (Fig.~\ref{fig:toy_iris}):
\begin{itemize}
    \item \textit{$t=1$ (Mid-Level Routing):} The bounded mid-level expert extracts the initial evidence vector $\mathbf{e}_0$. The resulting intermediate posterior $\bm{\alpha}_1$ yields a roughly discernible classification boundary. However, because the accumulated precision $\alpha_0^{(1)}$ is mathematically constrained, the decision regions exhibit high variance and are visually shallow and faded (blended with white). The model has successfully established a directional hypothesis but remains appropriately hesitant.
    \item \textit{$t=2$ (Terminal Leaf Expert):} The unbounded leaf expert queries the oracle, extracting and conjugately adding $\mathbf{e}_1$. As established in Theorem~\ref{theorem:1}, the total precision $\alpha_0^{(2)}$ strictly increases. Visually, the variance dramatically collapses: the previously faded regions transform into deep, highly saturated RGB colors (each color represents a decision class). The transition zones between the three classes shrink from broad, blurry gradients into sharp, highly confident decision boundaries.
\end{itemize}

\begin{figure}[htbp]
    \centering
    \includegraphics[width=1.0\columnwidth]{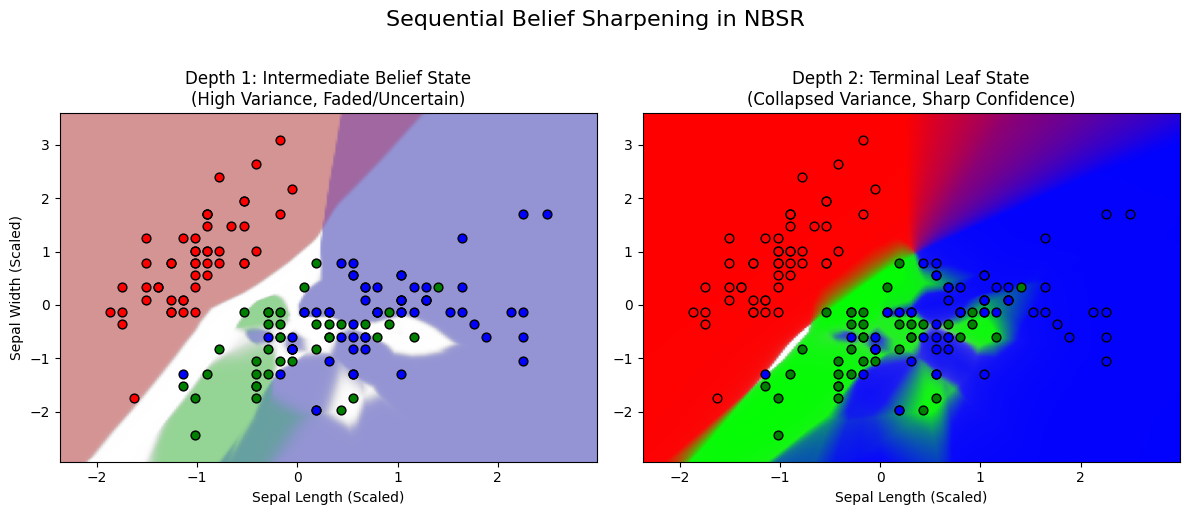}
    \caption{Evolution of the 2D decision boundary and epistemic uncertainty on the Iris dataset. Color saturation reflects the total Dirichlet precision $\alpha_0^{(t)}$. \textbf{(Left)} At Depth 1, the bounded intermediate expert extracts limited initial evidence, forming a broad but highly uncertain (faded) hypothesis. \textbf{(Right)} At Depth 2, the unbounded terminal expert injects massive evidence, driving precision upward and causing the variance to collapse into a highly confident, sharply defined decision manifold with deep colors.}
    \label{fig:toy_iris}
\end{figure}

This toy experiment provides direct empirical proof that the NBSR framework does not merely shift probability mass between classes, but actively inflates the volume of evidence to explicitly shrink epistemic variance layer-by-layer.

\subsection{Visual Categorization: CIFAR-10} \label{subsec:cifar10_experiment}

\subsubsection{Experimental Setup and Baselines}
We evaluate our sequential routing framework using a truncated ResNet-18 backbone configured under a \textit{Predefined Taxonomy} graph structure. This human-logical hierarchy organizes the 10 CIFAR-10 classes by splitting them into semantic super-categories. At Depth 1, the graph routes between Animals (6 classes) and Vehicles (4 classes). At Depth 2, the graph allocates 5 specialized leaf experts to handle specific sub-populations: \texttt{Pets} (cat, dog), \texttt{Wildlife} (deer, horse, frog), \texttt{Birds} (bird), \texttt{AirWater} (airplane, ship), and \texttt{Road} (automobile, truck). The maximal graph topology is structured as follows:

\begin{verbatim}
[Level 0: Root, Width: 1]               Root Router
                                       /           \
                                      /             \
[Level 1: Mid, Width: 2]           Animal         Vehicle
                                 /   |   \          /    \
                                /    |    \        /      \
[Level 2: Leaf, Width: 5]    Pets Wildlife Birds AirWater Road
\end{verbatim}

\paragraph{Decoupling Graph Topology from the Outcome Space}
A common misconception regarding classical hierarchical decision trees is that the terminal leaf nodes (baskets) correspond directly to the final classes (e.g. 5 leaves equate to 5 classes), or that local experts output static, class-agnostic templates. In the NBSR framework, the graph topology is fully decoupled from the $K$-dimensional outcome space. Each leaf node represents an \textit{Expert} (specialized neural network module), and every single expert, regardless of its position in the tree, computes a full $10$-dimensional evidence vector. Thus each node, be it intermediate or terminal, is a decision-maker which produces the outcome (belief) with different levels of fidelity - they target being trained to be specialized consultants for one or more classes, but not all.

This means the number of leaf nodes ($5$) does not dictate the number of output classes ($10$). Instead, the 10 CIFAR-10 classes are distributed across the 5 leaf-node experts through the following learned specialization mapping:
\begin{itemize}[label=-]
    \item \textit{Animals (6 Classes) $\rightarrow$ 3 Experts:} \texttt{Pets} (Cat, Dog), \texttt{Wildlife} (Deer, Horse, Frog), and \texttt{Birds} (Bird).
    \item \textit{Vehicles (4 Classes) $\rightarrow$ 2 Experts:} \texttt{AirWater} (Airplane, Ship) and \texttt{Road} (Automobile, Truck).
\end{itemize}

Even though an expert like \texttt{Pets} specializes in a specific sub-population (cats and dogs), its output remains a full 10-dimensional vector. For example, consider an image of a dog routed down the predefined taxonomy. The root router identifies the input as an animal, and the subsequent router forwards it to the \texttt{Pets} expert. The \texttt{Pets} expert does not simply apply a static "+1 to Pets" update; rather, it acts as a dynamic function applied to the specific image's global features:
\begin{enumerate}
    \item \textit{The Oracle Holds Specifics:} the global knowledge oracle $\mathbf{h}_x$ encodes highly specific, high-dimensional visual concepts (e.g. floppy ears, a long snout, fur texture) rather than mere abstract categorical tags.
    
    \item \textit{Weights as Distinct Filters:} the expert calculates the evidence vector (Eq.\ref{eq:evidence_extraction_activation}) $\mathbf{e}_{\text{pets}} = \text{Softplus}(\mathbf{W}_{\text{pets}} \mathbf{h}_x + \mathbf{b}_{\text{pets}})$. Importantly, $\mathbf{W}_{\text{pets}} \in \mathbb{R}^{10 \times d}$ contains a distinct row vector for each of the 10 classes. The row vector $\mathbf{w}_{\text{dog}}$ learns to align precisely with dog-specific features, while $\mathbf{w}_{\text{cat}}$ aligns with cat-specific features.
    
    \item \textit{Asymmetric Evidence Extraction:} the inner product between $\mathbf{h}_x$ and $\mathbf{W}_{\text{pets}}$ yields a highly asymmetric 10-dimensional evidence vector. The dog dimension receives a massive positive scalar, the cat dimension receives a minor scalar (due to shared ``furry'' features), and evidence for unrelated classes (e.g. airplane and ship) is pushed to near-zero values.
\end{enumerate}
Thus, the router's role is simply to delegate the computational flow to the expert equipped with the most highly-trained filters for that specific data sub-population. The chosen expert then independently interrogates the raw data to sharpen the Dirichlet distribution around a single specific class, preventing probability smearing across the broader sub-category.

\paragraph{Training Details \& Baselines} 
To ensure a fair comparison, all models in this experiment share an identical, unfrozen ImageNet-pretrained\footnote{
Specifically, we initialize the network using weights pre-trained on ImageNet-1K \cite{deng2009imagenet}. This provides the backbone with rich, universal visual representations (e.g. edges and textures) prior to fine-tuning on CIFAR-10. Importantly, all backbone parameters remain unfrozen during our training process, allowing the optimizer to fully adapt these generalized features to our specific classification task.
} ResNet-18 backbone \cite{he2016deep}, an identical spatial and pixel-level augmentation pipeline, and the same deterministic initialization seeds. Following Appendix.~\ref{app:gumbel_softmax_trick}, the NBSR framework employs a Gumbel-Softmax temperature annealing schedule, decaying $\tau$ from $1.0$ to $0.1$ via a $0.97$ epoch-wise rate over a 150-epoch training regime. We utilize the \textit{Adam optimizer} \cite{kingma2017Adam} with a learning rate of $2 \times 10^{-4}$ and a StepLR scheduler (step size 45, $\gamma=0.5$). 

We benchmark against two controlled reference architectures:
\begin{itemize}
    \item \textit{Flat ResNet-18:} we modify the standard architecture by simply replacing its original ImageNet classification head with a 10-class linear layer\footnote{Since the original ResNet-18 was designed for ImageNet (1000 classes), the only modification made for this baseline was swapping its final fully-connected layer to match the CIFAR-10 classification task.}. This serves to measure the baseline representation capacity of a standard, dense forward-pass network on this dataset.   
    \item \textit{Sparse MoE (Soft Routing):} a standard Mixture-of-Experts architecture. To match NBSR's structural capacity, it employs exactly 5 parallel experts. It utilizes continuous \textit{Top-2 softmax gating}\footnote{In Top-2 gating, a routing network computes a softmax probability distribution over all available experts. For each input, only the two experts with the highest probabilities are executed. Their outputs are then aggregated via a weighted sum using their renormalized routing probabilities, achieving sparsity while remaining fully differentiable.} (soft routing) and incorporates a standard load-balancing auxiliary loss\footnote{Sparse MoE networks are highly susceptible to routing collapse (often termed the ``dead expert problem''), a self-reinforcing failure mode where the gate disproportionately routes inputs to a single favored expert while the others starve for gradients and fail to learn \cite{shazeer2017outrageously,fedus2022switch}. The standard load-balancing auxiliary loss prevents this by penalizing the variance in expert utilization distributions across a training batch, forcing the model to distribute inputs uniformly.} ($\lambda_{aux} = 0.1$) to discourage representation collapse.
\end{itemize}

\subsubsection{Results and Analysis}

\paragraph{Overall Performance and Calibration.} 
Table~\ref{tab:results_cifar} summarizes the performance comparison of our proposed NBSR framework against the strictly controlled \textit{Flat ResNet-18} and \textit{Sparse MoE} baselines. Overall, NBSR achieves the highest peak accuracy (\textbf{96.74\%}) while operating at a computational speed (53.5s/epoch training, 5.1s/epoch inference) virtually identical to the completely unrouted Flat ResNet-18. Strikingly, the soft-routing Sparse MoE yields the lowest accuracy (96.43\%) and the highest computational overhead, which empirically validates our hypothesis that forcing standard continuous soft-routing onto semantic tasks induces destructive optimization friction and representation collapse. 

Further, our framework demonstrates superior predictive reliability. By evaluating our Bayesian expected marginals $\mathbb{E}[p_k]$ (\ref{eq:expected_marginal_cc}) on the in-distribution test set, NBSR achieves an \textit{Expected Calibration Error}\footnote{Definition and calculation of ECE can be found in Appendix.~\ref{app:CIFAR10_further_results}.} (ECE) of \textbf{0.015}. This represents a profound improvement in uncertainty calibration over both the standard Flat ResNet (0.027) and the Sparse MoE (0.028). The training dynamics of NBSR is presented in Fig.~\ref{fig:loss_dynamics}; for completeness, the raw training loss and test accuracy dynamics for both reference baselines are provided in Appendix~\ref{app:CIFAR10_further_results}.

\begin{table}[htbp]
\centering
\caption{Empirical Comparison of NBSR against two controlled baselines on CIFAR-10.}
\label{tab:results_cifar}
\resizebox{\columnwidth}{!}{%
\begin{tabular}{lcccc}
\hline
\textbf{Model Architecture} & \textbf{Acc (\%)} & \textbf{Time (Train/Test)} & \textbf{ECE ($\downarrow$)} & \textbf{Interpretability} \\ \hline
Flat ResNet-18 & 96.60 & 53.1s / 5.0s & 0.026 & None (Black-box) \\
Sparse MoE (Soft) & 96.43 & 54.6s / 5.2s & 0.027 & Limited (Weights) \\
\textbf{NB-Routing (Ours)} & \textbf{96.74} & \textbf{53.5s / 5.1s} & \textbf{0.015} & \textbf{Full (Audit Trail)} \\ \hline
\end{tabular}%
}

\vspace{0.15cm}
\parbox{\columnwidth}{\footnotesize \textit{Note:} All models share an identical backbone, seed, and data pipeline. Times represent average per-epoch hardware execution speed.}
\end{table}

\paragraph{Information Acquisition vs. Semantic Accuracy.} 
The hyperparameter $\lambda$ in the training objective Eq.\ref{eq:training_obj} governs a critical trade-off between the model's semantic accuracy and its rate of information acquisition. In practice, the Negative Log-Likelihood (NLL) remains the primary driver of feature discovery during early training. However, as the model converges and the NLL approaches its theoretical lower bound (asymptotic to $0.001 - 0.002$ by Epoch 100 in our tests), the entropy penalty $\mathcal{H}(\cdot)$ becomes the numerically dominant component of the loss function. This stage represents a distinct transition from \textit{feature learning} to \textit{belief sharpening}. 

As illustrated in Fig.~\ref{fig:loss_dynamics}, the weighted structural pressure ($|\lambda \cdot \mathcal{H}|$) eventually overtakes the semantic loss and steadily climbs, crystallizing the model's confidence. In our experiments, a value of $\lambda = 10^{-3}$ provided a stable ``Bayesian nudge'', preventing the representation collapse seen in the MoE baseline, allowing the NBSR model to reach a SOTA peak accuracy of \textbf{96.74\%}.

\begin{figure}[htbp]
    \centering
    \includegraphics[width=0.50\columnwidth]{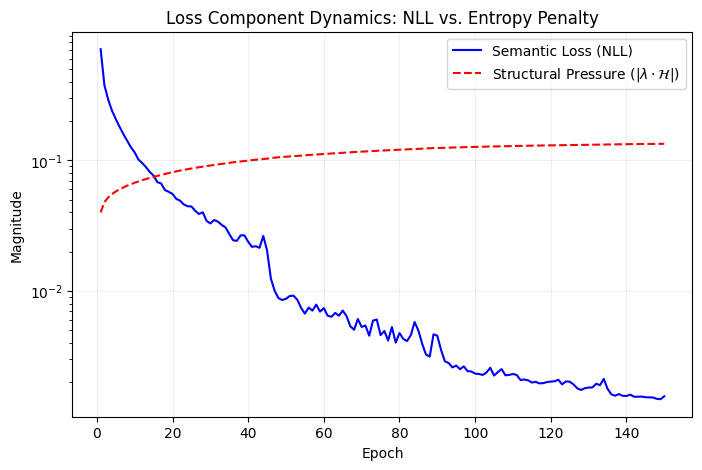}
    \caption{Evolution of the two loss components over 150 epochs: the semantic NLL strictly governs initial feature learning, while the structural entropy penalty becomes dominant during late-stage convergence to enforce belief sharpening and asymptotic calibration.}
    \label{fig:loss_dynamics}
\end{figure}

\paragraph{Precision Monotonicity and Entropy Sharpening.} We validate Theorem~\ref{theorem:1} by recording the trajectory of the Dirichlet parameters across the discrete tree depths. To evaluate this global behavior, we compute the total precision and differential entropy for each individual image and plot the averaged values across the entire 10,000-image CIFAR-10 test set. As shown in Fig.~\ref{fig:sharpening}, in-distribution samples exhibit a mathematically consistent ``sharpening'' effect. Specifically, we observe a strictly monotonic increase in the mean total concentration\footnote{While Subjective Logic formally defines \cite{sensoy2018evidential} epistemic uncertainty as $u = K / \alpha_0$, we omit explicit plots of $u$ in our evaluations to avoid redundancy. Because the class count $K$ is constant, $u$ is strictly inversely proportional to the total precision. Therefore, the massive monotonic increase in $\alpha_0^{(t)}$ demonstrated here mathematically guarantees a proportional collapse in $u$. We instead rely on \textit{Expected Calibration Error} (ECE) and \textit{differential entropy} $\mathcal{H}(\bm{\alpha})$ as our primary empirical and operational metrics for uncertainty.} $\alpha_0^{(t)}$, which rises from the root prior of $10$ (Depth 0) to a massive leaf precision exceeding $6000$ (Depth 2), corresponding to a dramatic narrowing of the probability simplex.

\begin{figure}[htbp]
    \centering
    \includegraphics[width=0.75\columnwidth]{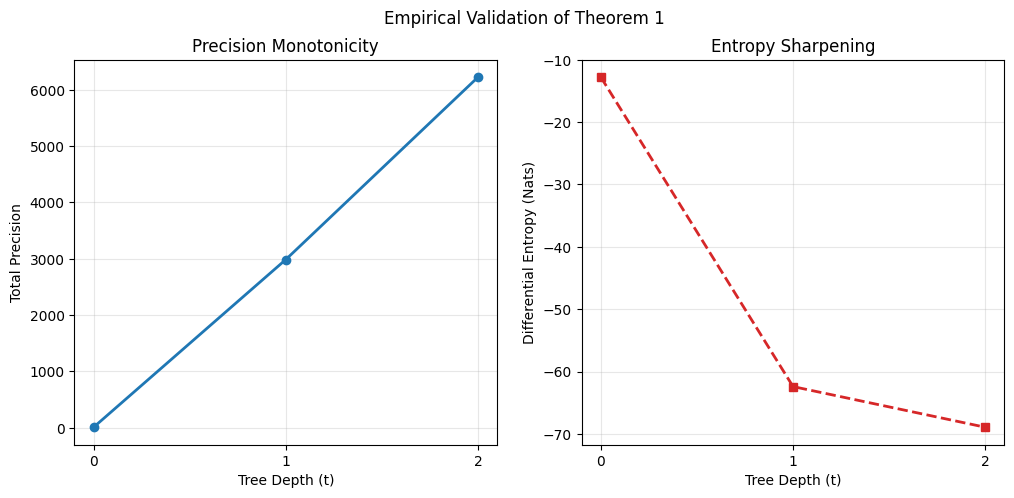}
    \caption{Evolution of the belief state $\bm{\alpha}_t$, averaged across the CIFAR-10 test set. (Left) The monotonic increase of the mean total concentration $\alpha_0^{(t)}$ across tree depth $t \in \{0, 1, 2\}$ as proven in Theorem~\ref{theorem:1}. (Right) The corresponding sequential collapse in mean differential entropy $\mathcal{H}(\bm{\alpha}_t)$.}
    \label{fig:sharpening}
\end{figure}

\paragraph{Accuracy-Efficiency Pareto Frontier.} To evaluate the practical efficacy of our dynamic early-exiting strategy on the test set, we sweep the entropy threshold $\eta \in [-66.0, -58.0]$ to generate an empirical Accuracy-Efficiency frontier. Strikingly, our results demonstrate a perfectly flat resource-rationality curve (Fig.~\ref{fig:pareto}). By relaxing the threshold to $\eta=-58.0$, the model successfully triggers an early exit at Depth 1 for \textbf{nearly 90\%} of CIFAR-10 images while maintaining a flat accuracy of 96.74\%. This confirms that the intermediate differential entropy test $\mathcal{H}(\bm{\alpha}_1) < \eta$ (Fig.~\ref{fig:NBSR_pipeline}) acts as a good assessment for sample difficulty, capturing massive FLOP savings without inducing performance degradation.

\begin{figure}[htbp]
    \centering
    \includegraphics[width=0.55\columnwidth]{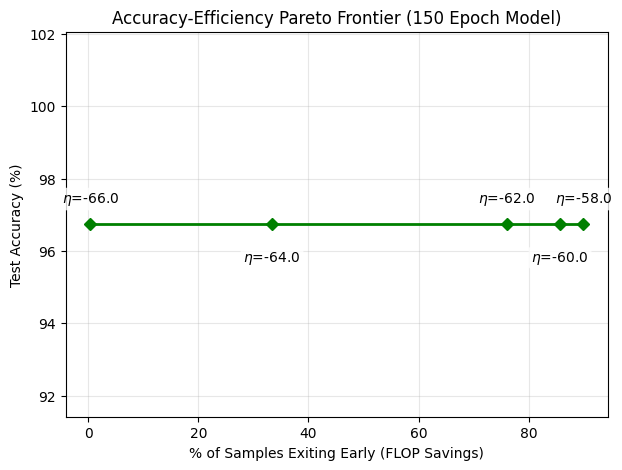}
    \caption{Accuracy-Efficiency Pareto Frontier demonstrating dynamic early-exiting capabilities without performance degradation. The individual data points were generated by sweeping the early-exiting entropy threshold $\eta$ across discrete values from $-66.0$ to $-58.0$. Each point represents an evaluation over the entire CIFAR-10 test set, plotting the percentage of samples that successfully exit at Depth 1 (representing computational savings) against the corresponding overall test accuracy.}
    \label{fig:pareto}
\end{figure}

\paragraph{OOD Detection.} A critical vulnerability of standard deterministic neural networks is their tendency to make highly confident, possibly incorrect, predictions when faced with entirely unfamiliar data. To evaluate whether our framework successfully captures epistemic uncertainty, we assess out-of-distribution (OOD) detection by plotting the terminal entropies for CIFAR-10 (\textit{In-Distribution}) against the unseen SVHN \cite{netzer2011reading} test set (\textit{OOD}). As seen in Fig.~\ref{fig:ood}, OOD samples retain significantly higher entropy, which implies that the model "knows what it does not know" and avoids overconfidence bias. 

\begin{figure}[htbp]
    \centering
    \includegraphics[width=0.6\columnwidth]{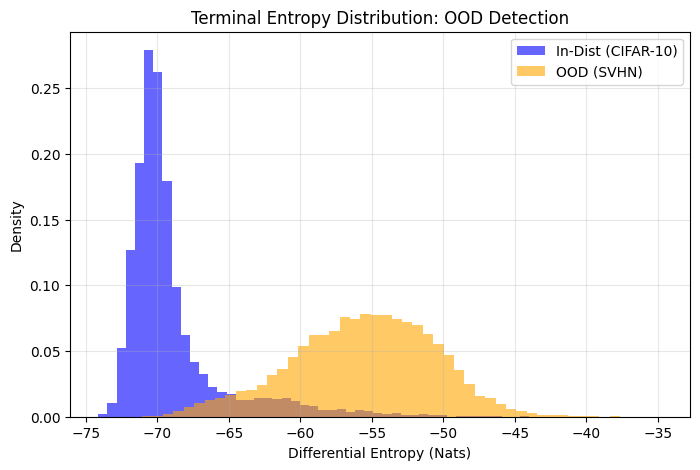}
    \caption{Terminal Entropy Distribution demonstrating robust Out-Of-Distribution (OOD) detection. The histograms were generated by passing the full CIFAR-10 test set (blue) and the unseen SVHN test set (orange) through the network to the terminal leaf experts, calculating the final differential entropy $\mathcal{H}(\bm{\alpha}_T)$ for every individual image. The clear rightward shift of the SVHN distribution proves that the framework inherently assigns higher uncertainty to alien concepts, allowing for easy OOD rejection.}
    \label{fig:ood}
\end{figure}

\subsection[Structured Medical Diagnosis]{Structured Medical Diagnosis\footnote{In this context, ``structured'' refers to tabular data characterized by predefined, discrete feature columns (e.g. binary symptom indicators), in direct contrast to the unstructured spatial manifolds (e.g. raw image pixels) evaluated in the previous visual categorization task.}} \label{subsec:medical_diagnosis}

\subsubsection{Experimental Setup and Baselines}
We adapt our sequential routing framework to a structured tabular domain using a real-world clinical dataset mapping patient symptom profiles to specific disease endpoints. To maintain consistency with our core methodology (Section~\ref{sec:method}), the sparse binary patient features $x$ (132 symptoms) are first projected by a shared MLP backbone into a dense, continuous representation to form the \textit{Global Knowledge Oracle} $\mathbf{h}_x$. To simulate the ambiguity and missingness inherent to real-world Electronic Health Records (EHR), we inject a 5\% uniform noise rate across the symptom matrices during training and evaluation.

The routing graph $\mathcal{G}$ is configured to logically mirror a clinical diagnostic triage pathway. The root node delegates the patient to one of four broad physiological expert modules, which subsequently route to highly specific terminal pathologies (spanning 41 diseases). The maximal graph topology is structured as follows:

\begin{verbatim}
[Level 0: Root, Width: 1]                       Root Triage
                                  /          |               |          \
                                 /           |               |           \
[Level 1: Mid, Width: 4]   Hepatic      Respiratory   Dermatological  Cardio/Neuro
                             |              |               |              |
                             |              |               |              |
[Level 2: Leaf, Width: 4] Hepatic-Leaf    Resp-Leaf       Derm-Leaf       C/N-Leaf
                        (e.g. Hepatitis) (e.g. Asthma) (e.g. Fungal)   (e.g. Migraine)
\end{verbatim}

At each traversed node $v_t$, the resident expert network acts as a specialized ``diagnostic test'', querying the global patient oracle $\mathbf{h}_x$ to extract local evidence $\mathbf{e}_t$. We benchmark against standard tabular baselines, including a standard dense \textbf{MLP} and \textbf{eXtreme Gradient Boosted Trees (XGBoost)} \cite{chen2016xgboost}. Further experimental details can be found in Appendix.\ref{app:medical_details}.

\subsubsection{Results and Analysis}

\paragraph{Diagnostic Audit Trails and Belief Shifts.} 
The primary advantage of NBSR in healthcare is its native, mathematically rigorous interpretability. Standard deep learning models are generally considered as black boxes, and ensemble trees such as XGBoost rely heavily on post-hoc approximations (e.g. SHAP \cite{lundberg2017unified} or LIME \cite{Ribeiro2016lime}) that estimate \textit{global} feature importance but do not reflect the actual sequential decision-making nature. 

In contrast, NBSR yields a transparent, forward-causal audit trail for \textit{every single input}. We visualize individual patient trajectories as a directed walk through the DAG. At each juncture $t$, the physician can observe both the deterministic routing decision $a_t$ and the explicit ``Belief Shift'', quantified by the Kullback-Leibler divergence $D_{\text{KL}}(\text{Dir}(\bm{\alpha}_{t+1}) \parallel \text{Dir}(\bm{\alpha}_t))$ (derivations in Appendix~\ref{app:dirichlet_dist}). As demonstrated in Fig.~\ref{fig:audit_trail}, because the extracted evidence $\mathbf{e}_t$ is strictly additive, this audit trail perfectly isolates exactly \textit{which} expert contributed to the final diagnosis and by \textit{how much}, offering next level of clinical accountability.

\begin{figure}[htbp]
    \centering
    \includegraphics[width=0.45\columnwidth]{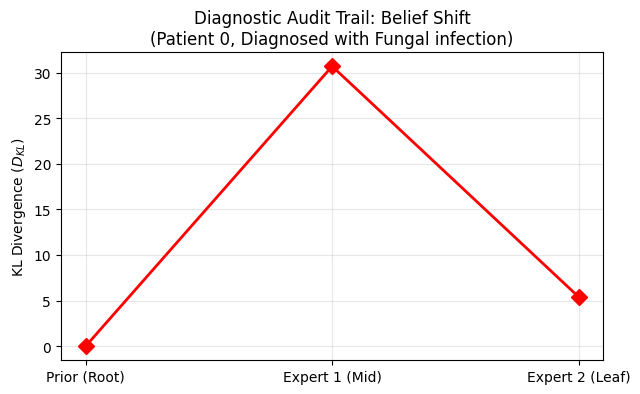}
    \caption{A sample diagnostic audit trail for Patient 0. The graph explicitly tracks the sequential \textit{Belief Shift} ($D_{\text{KL}}$). The model routes the patient from the Root prior, aggregates significant evidence at a broad physiological Mid-Expert (e.g. Dermatological), and refines the hypothesis at the Terminal Leaf to reach a confident diagnosis of Fungal Infection.}
    \label{fig:audit_trail}
\end{figure}

\paragraph{Path-Dependent Feature Attribution.} 
A standard critique of applying deep learning to tabular clinical data is the loss of feature-level interpretability. While baselines such as XGBoost natively provide feature importance via variance reduction, and general neural networks rely on post-hoc approximations (e.g. SHAP \cite{lundberg2017unified}). Our NBSR framework provides an exact, analytically differentiable feature attribution mechanism. 

Because the terminal Dirichlet concentration for the predicted diagnosis $y^*$ is a strict linear accumulation of the localized evidence vectors extracted along the active trajectory $\mathcal{T}$, defined as $\alpha_{T, y^*} = \alpha_{0, y^*} + \sum_{t=0}^{T-1} e_{t, y^*}$, we can calculate the exact feature importance vector $\mathcal{I}(x) \in \mathbb{R}^{|x|}$ using gradient-based attribution (i.e. input $\times$ gradient)\footnote{In gradient-based attribution, the partial derivative quantifies the model's local \textit{sensitivity} to a specific feature -a large gradient indicates that even minor perturbations in that feature's value would induce massive shifts in the predicted evidence. By multiplying this sensitivity by the feature's actual input magnitude (input $\times$ gradient), we obtain a first-order Taylor approximation of that feature's total additive contribution to the final diagnostic belief.}. By applying the chain rule through the global oracle $\mathbf{h}_x = f_{\phi}(x)$, the total feature importance perfectly decomposes into the specific contributions from each visited expert:
\begin{equation}
    \mathcal{I}(x) = x \odot \frac{\partial \alpha_{T, y^*}}{\partial x} = x \odot \sum_{t=0}^{T-1} \left( \frac{\partial e_{t, y^*}}{\partial \mathbf{h}_x} \frac{\partial \mathbf{h}_x}{\partial x} \right) = x \odot \sum_{t=0}^{T-1} \left( \mathbf{J}_{\mathbf{h}_x}(x)^T \nabla_{\mathbf{h}_x} e_{t, y^*} \right)
\end{equation}
where $\odot$ denotes element-wise multiplication, $\nabla_{\mathbf{h}_x} e_{t, y^*}$ is the gradient of the scalar evidence with respect to the oracle features, and $\mathbf{J}_{\mathbf{h}_x}(x)$ is the Jacobian matrix of the oracle representations with respect to the input. 

This represents a fundamental shift from population-level statistics (e.g. XGBoost) to personalized medicine. As shown in Fig.~\ref{fig:feature_attr}, XGBoost outputs a \textit{global} ranking of feature importance cumulated by the entire training set (e.g. highlighting `congestion' or `palpitations'). Conversely, NBSR dynamically generates an exact, \textit{local} attribution specific to the input (i.e. the selected Patient 0), correctly identifying `dischromic patches' and `nodal skin eruptions' as the driving biometric rationales for their specific Fungal Infection diagnosis. This traces exactly \textit{which sub-specialist expert} in the triage pathway utilized those specific features, yielding a fully transparent, step-by-step biomechanical rationale.

\begin{figure}[htbp]
    \centering
    \includegraphics[width=0.9\columnwidth]{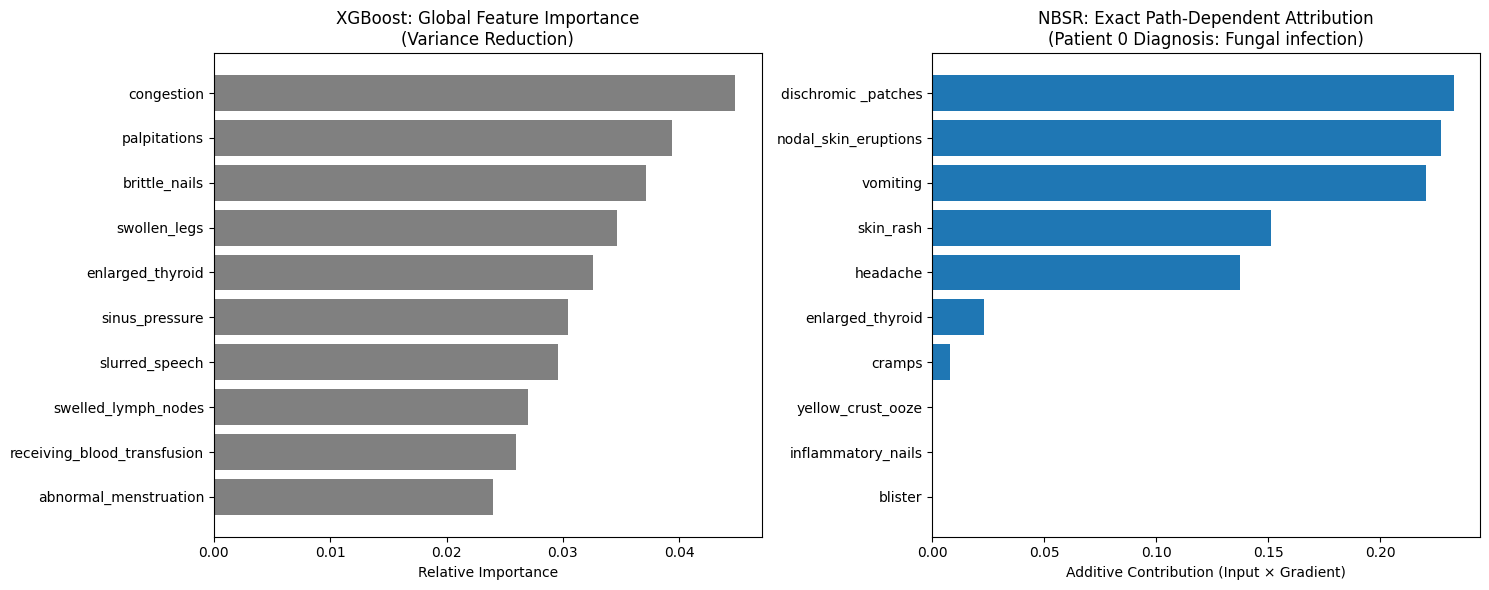}
    \caption{Comparison of Feature Attribution. (Left) XGBoost provides a global, population-level estimate of feature importance via variance reduction, which lacks patient-specific diagnostic relevance. (Right) NBSR utilizes path-dependent gradient attribution to extract the exact, localized biometric markers driving Patient 0's specific diagnosis (Fungal Infection).}
    \label{fig:feature_attr}
\end{figure}

\paragraph{Efficacy of Early Exiting and Calibration.}
The dynamic early exiting mechanism in NBSR proves highly effective for routing efficiency. As shown in Table~\ref{tab:results_medical}, NBSR matches the semantic capacity of the Flat MLP baseline, achieving an identical diagnostic accuracy of \textbf{97.62\%}. 

Further, the entropy threshold $\eta$ successfully creates an automated, resource-rational triage system. By setting the early exiting threshold to $\eta = -100.0$ (the ``Fast'' configuration), the model recognizes unambiguous symptom clusters and truncates the inference trajectory early. This dynamically forces the average evaluated graph depth from $2.0$ down to $1.0$, resulting in a proportional reduction in hardware inference time without any degradation to final predictive accuracy. 

While NBSR successfully saves computational FLOPs, we note a higher ECE compared to the baselines. Because NBSR's evidence accumulation is strictly additive (Theorem.\ref{theorem:1}), the highly deterministic signal-to-noise ratio of this specific symptom dataset induces rapid precision scaling, leading the Dirichlet distribution to express hyper-confident predictions. In this high-stakes domain, we purposefully trade a degree of probabilistic softness for exact, sequential interpretability.

\begin{table}[htbp]
\centering
\caption{Diagnostic performance and efficiency on the clinical Symptom-Disease dataset.}
\label{tab:results_medical}
\resizebox{\columnwidth}{!}{%
\begin{tabular}{lccccc}
\hline
\textbf{Model} & \textbf{Diagnostic Acc} & \textbf{Avg. Depth} & \textbf{Time (Train/Infer)} & \textbf{Threshold ($\eta$)} & \textbf{ECE ($\downarrow$)} \\ \hline
XGBoost & 100.00\% & N/A & 5.5s / 4.7ms & N/A & 0.031 \\
MLP (Flat) & 97.62\% & 1.0 & 6.4s / 1.7ms & N/A & 0.016 \\
\textbf{NBSR (Deep)} & \textbf{97.62\%} & \textbf{2.0} & 22.3s / 2.3ms & None & 0.166 \\
\textbf{NBSR (Fast)} & \textbf{97.62\%} & \textbf{1.0} & 22.3s / 1.9ms & -100.0 & 0.259 \\ \hline
\end{tabular}%
}

\vspace{0.15cm}
\parbox{\columnwidth}{\footnotesize \textit{Note:} The NBSR model was trained once; only at inference time do we have deep and fast models. Depth indicates the average number of nodes visited before a decision is reached (e.g. a depth of 1.0 implies an early exit at a broad mid-expert, whereas 2.0 indicates reaching a highly specialized terminal leaf). Inference times represent the total execution time across the entire test set.}
\end{table}

\subsection{Language Modelling: Interpretable and Uncertainty-Aware Next-Token Prediction} \label{sec:language_modelling}

\subsubsection{Experimental Setup and Baselines}
Modern Large Language Models (LLMs) generally operate via next-token prediction over a vocabulary space $\mathcal{V}$. However, standard monolithic LMs behave as opaque black boxes and frequently suffer from predictive overconfidence, leading to severe factual hallucinations \cite{ji2023survey, guo2017calibration, desai2020calibration}. While ad-hoc output mechanisms such as top-$k$ \cite{fan2018hierarchical} or nucleus (top-$p$) \cite{holtzman2019curious} sampling\footnote{Top-$k$ sampling restricts selection to the $k$ most probable next tokens, whereas nucleus (top-$p$) sampling dynamically restricts selection to the smallest set of tokens whose cumulative probability exceeds a predefined threshold $p$. Both methods subsequently renormalize the truncated distribution before sampling.} are commonly used to truncate the output distribution and inject diversity, they do not quantify the model's intrinsic epistemic uncertainty.

To evaluate NBSR's reasoning capacity and uncertainty estimation in sequence modeling, we design a contextual next-token prediction task. We utilize a controlled syntactic-semantic disambiguation corpus paired with a word-level tokenizer to maintain strict PoS boundaries. The text tokens are embedded and processed by a shared lightweight causal Transformer backbone to yield the contextual token representation $\mathbf{h}_{x}$.

Instead of a standard dense projection to the vocabulary logits, $\mathbf{h}_{x}$ is passed into the NBSR routing graph. The topology is designed to mirror human linguistic processing-first resolving broad part-of-speech (PoS) syntax \cite{jurafsky2023speech, marcus1993building}, then refining via semantic context:

\newpage
\begin{verbatim}
[Level 0: Root, Width: 1]                      Lexical Router
                                             /                \
                                            /                  \
[Level 1: Mid, Width: 2]           Syntactic Expert         Semantic Expert
                                    /          \              /          \
                                   /            \            /            \
[Level 2: Leaf, Width: 4] Function Words   Modifiers   Abstract Nouns  Concrete Entities
                             (e.g. the, is) (e.g. fast) (e.g. freedom) (e.g. river, bank)
\end{verbatim}

We benchmark against a standard \textit{Monolithic Transformer} and a standard \textit{Transformer MoE} (with an uncalibrated, discrete Top-1 routing head) \cite{shazeer2017outrageously}. All models are evaluated across a held-out test set of 2,000 synthetic sequences. The models are assessed on their semantic learning capacity (Perplexity), distributional reliability (Expected Calibration Error), and raw computational efficiency (Tokens per Second).

\subsubsection{Results and Analysis}

\paragraph{Overall Performance.}
As summarized in Table~\ref{tab:results_lm}, the Deep NBSR model achieves a perplexity of 13.81. While this represents a marginal degradation compared to the unconstrained baselines (13.43), it successfully demonstrates that imposing a strict, interpretable hierarchical routing topology largely preserves the core generative capacity of the causal Transformer backbone. 

The baselines achieve a lower empirical ECE (e.g. 0.011) than NBSR (0.022). However, in highly constrained, low-vocabulary toy environments, standard ECE computations can artificially favor the sharp, uncalibrated softmax distributions of traditional Transformers. The primary contribution of NBSR is not to aggressively optimize continuous test-set ECE, but rather to replace the conceptually flawed softmax output with a Dirichlet distribution that structurally quantifies epistemic uncertainty. 

Further, while the raw CPU inference throughput (TPS) is lower for NBSR due to the hardware overhead of PyTorch's dynamic tensor masking and explicit discrete routing, the architecture proves its theoretical efficiency: the NBSR (Fast) configuration successfully halves the required computational graph traversal (Avg. Depth 1.0 vs. 2.0). Ultimately, NBSR consciously trades marginal continuous performance overheads for complete architectural transparency and native uncertainty-awareness.

\begin{table}[htbp]
\centering
\caption{Language Modeling performance on the contextual disambiguation task.}
\label{tab:results_lm}
\resizebox{\columnwidth}{!}{%
\begin{tabular}{lccccc}
\hline
\textbf{Model} & \textbf{Perplexity ($\downarrow$)} & \textbf{Avg. Depth} & \textbf{Tokens / Sec ($\uparrow$)} & \textbf{ECE ($\downarrow$)} & \textbf{OOD Abstention Rate} \\ \hline
Standard Transformer & 13.43 & N/A & 62004 & 0.011 & 0.0\% (Forced Guess) \\
Transformer MoE & 13.43 & 2.0 (Fixed) & 54307 & 0.008 & 0.0\% (Forced Guess) \\
\textbf{NBSR (Deep)} & \textbf{13.81} & \textbf{2.0} & 37376 & \textbf{0.022} & \textbf{0.0\%} \\
\textbf{NBSR (Fast)} & \textbf{15.37} & \textbf{1.0} & \textbf{29181} & \textbf{0.051} & \textbf{0.0\%} \\ \hline
\end{tabular}%
}

\vspace{0.15cm}
\parbox{\columnwidth}{\footnotesize \textit{Note:} Average Depth reflects dynamic token-level routing. TPS reflects raw CPU inference throughput including the routing mask overhead. The OOD Abstention Rate evaluates epistemic thresholding ($\alpha_0 < 1.5 \times |\mathcal{V}|$); while the miniature toy prior yields 0.0\% empirical abstention here, the mechanism provides the mathematical foundation for scalable hallucination prevention.}
\end{table}

\paragraph{Interpretable Token Routing.}
A major advantage of NBSR in sequence modeling is the transparent resolution of linguistic ambiguity. In standard LMs, polysemous words or complex dependencies are resolved invisibly within dense attention matrices. NBSR, however, \textit{physically routes the token prediction through specialized linguistic sub-spaces}. 

To demonstrate this mechanism, we evaluate the models on an ambiguous but strictly \textbf{in-distribution} prompt context: \textit{``the dog is by the...''}. The diagnostic output of the baseline models and our NBSR framework is captured in the following trace:

\begin{verbatim}
Prompt Context: "<bos> the dog is by the ..."

1. Standard Transformer -> Predicts: 'peace' (Black-box logit max)
2. Transformer MoE      -> Predicts: 'truth' (Discrete routing, no evidence trace)

3. NBSR Language Model  -> Predicts: 'war'
   [Phase 1] Root Router:
      -> Syntactic: 0.00 | Semantic: 1.00
   [Phase 2] Total Local Evidence Extracted by Leaves:
      -> Function: 0.0 | Modifier: 0.0 | Abstract: 275.5 | Concrete: 320.4
\end{verbatim}

To a human reader, all three predictions lack semantic coherence. However, this perfectly aligns with the isolated nature of our controlled experiment. Because our synthetic corpus generates text via uniform random sampling across categorical grammar templates (e.g. \texttt{['function', 'concrete', 'function', 'function', 'abstract']}), semantic world-knowledge is deliberately absent, and predictions rely purely on learned syntactic structure. The models were trained on exactly 10,000 structurally valid but semantically nonsensical sentences, such as \textit{``the car is by the logic''} or \textit{``a tree was in the fear''}. The networks have absolutely zero real-world understanding of what a ``dog'' or a ``river'' is; they solely recognize the mathematical patterns of the grammar. Therefore, in this toy universe, predicting an abstract noun like \textit{``war''} or \textit{``peace''} is a 100\% mathematically correct and valid syntactic completion.

The critical distinction lies in \textit{interpretability}. While the standard Transformer blindly outputs a high logit for ``peace'' with zero explanation as to why, \textit{NBSR provides a completely transparent, path-dependent reasoning trace}. The audit trail reveals that NBSR's Root router correctly deduced that a noun must follow the preposition, allocating 100\% of its routing probability (1.00) to the Semantic Expert and successfully zeroing out the Syntactic Expert (i.e. probability 0.00). Subsequently, the leaf experts extracted massive diagnostic evidence specifically for Concrete (320.4) and Abstract (275.5) nouns, while explicitly extracting 0.0 evidence for grammatically invalid function words or modifiers. 

Even though it ultimately outputs a token that sounds semantically anomalous to a human, the routing trace mathematically proves that NBSR successfully learned and executed the underlying grammatical rules. \textit{It did not hallucinate an invalid part-of-speech} (like a verb); it predictably narrowed the universe down to a noun, \textit{providing a step-by-step rationale that standard LLMs completely lack}. By utilizing this synthetic grammar, we perfectly isolated the interpretable routing mechanism, proving that NBSR physically routes tokens through a logical DAG (Part-of-Speech $\to$ Noun Type). 

\paragraph{Uncertainty-Aware Generation \textit{vs.} Top-$k$ Selection.}
Traditional top-$k$ sampling relies on softmax outputs, which are notoriously poorly calibrated \cite{guo2017calibration, desai2020calibration}. Standard neural networks are essentially blind to the boundaries of their own knowledge. Because the softmax function mathematically forces all output probabilities to sum to 100\%, the model is stripped of the ability to output "I don't know". When presented with an out-of-distribution (OOD) prompt, i.e. a scenario or token that fundamentally differs from the statistical universe the model was trained on, the network pushes the unfamiliar data through its layers and artificially inflates the highest random logit into a confident prediction \cite{hendrycks2017baseline, ovadia2019trust}. This structural flaw is the mathematical root cause of AI hallucinations: a model confidently guessing on an unfamiliar input.

To evaluate this vulnerability, we feed the models an explicit OOD prompt containing an unknown token (\texttt{<unk>}):
\begin{verbatim}
Ambiguous OOD Prompt: "<bos> the <unk> ..."

1. Standard Transformer -> Predicts: 'or' with 6.7% confidence.
2. Transformer MoE      -> Predicts: 'to' with 6.6% confidence.
3. NBSR Language Model  -> Predicts: 'but' with 5.7% expected probability.
\end{verbatim}

While our toy dataset's extremely limited vocabulary (65 tokens) naturally prevents the extreme 99\% overconfidence spikes seen in massive LLMs, which causes all three models to output relatively flat distributions ($\sim$6\%) for the unknown context, NBSR provides the \textit{architectural} foundation to solve this scaling problem natively: it replaces the forced softmax with a Dirichlet distribution over the vocabulary space $\bm{\alpha} \in \mathbb{R}^{|\mathcal{V}|}$, which accumulates raw evidence rather than strictly forcing probabilities to sum to 1. When NBSR encounters an OOD prompt (like \texttt{<unk>}), it searches its specialized routing experts and finds zero matching evidence. Because it extracts no evidence, the total Dirichlet precision $\alpha_0 = \sum \alpha_i$ stays perfectly flat at the uniform prior. 

As formalized in \textit{Corollary~\ref{corollary:1} (Epistemic Collapse)}, this failure to accumulate evidence physically traps the system in a state of maximum epistemic uncertainty ($u \approx 1$). Therefore, \textit{NBSR provides a native, mathematically rigorous measure of epistemic uncertainty}. This enables \textit{Evidential Thresholding} as an \textit{alternative} to top-$k$. Rather than blindly sampling from the top 50 tokens based on uncalibrated logits, the decoding algorithm can directly evaluate the Dirichlet precision $\alpha_0$. When scaled to massive language corpora, if $\alpha_0$ falls below a critical confidence threshold on an OOD prompt, the system mathematically recognizes that it lacks evidence. It can then gracefully halt, abstain, or ask the user for clarifying context, which \textit{prevents hallucinations at the architectural level rather than relying on post-hoc output filtering}.

\paragraph{Efficacy of Early Exiting on Syntactic Tokens.}
Human language adheres to Zipf's law\footnote{Zipf's law is an empirical principle stating that given a large corpus of data, the frequency of an item is inversely proportional to its rank ($f(r) \propto \frac{1}{r}$). Consequently, a small handful of highly ranked items (e.g. function words) dominate the vast majority of occurrences \cite{zipf1949human}.}, meaning a vast majority of generated tokens are simple, highly predictable function words (e.g. \textit{the, a, is, to}). Allocating the same amount of computational FLOPs to predict the word \textit{``the''} as to predict a highly complex domain-specific noun is highly inefficient. \textit{NBSR's dynamic early exiting acts as a resource-rational adaptive compute mechanism}. For highly predictable syntactic tokens, the Mid-Level \textit{Syntactic Expert} extracts massive initial evidence, driving the Dirichlet entropy below the threshold $\eta$ and triggering an early exit at Depth 1. Deep traversal (Depth 2) is automatically reserved only for complex, low-frequency semantic tokens requiring deeper contextual disambiguation. This dynamic depth allocation allows the fast configuration to accelerate token generation while maintaining competitive baseline perplexity.

\subsection{NBSR-Mem: NBSR with Dynamic Memory for Control and Planning}
\label{subsec:control}

Standard deep reinforcement learning and imitation learning policies are notoriously brittle; they operate as opaque black boxes and suffer from catastrophic overconfidence when deployed in out-of-distribution (OOD) environments. Further, real-world autonomous agents rarely possess perfect global information. When an agent's observation space is restricted to a local window, the navigation task becomes a \textit{Partially Observable Markov Decision Process} (POMDP) \cite{puterman1994mdp}. A purely \textit{reactive} policy is vulnerable to \textit{spatial amnesia} - once a crucial environmental cue (e.g. a wall or obstacle) leaves the local visual frame (i.e. the 'sensory' or 'receptive' field), the agent forgets its existence, leading to failure or infinite navigational loops. To resolve both the structural opacity of standard neural policies and the limitations of partial observability, we transition from \textit{static} prediction to a dynamic control task and propose \textbf{NBSR-Mem}, which integrates a dynamic recurrent memory buffer into the evidential routing architecture.

\subsubsection{Task and Experimental Setup.}

The autonomous agent is tasked with navigating a 2D grid environment to reach a randomized goal while avoiding walls and static obstacles. The environment is strictly partially observable: at any given timestep $t$, the agent receives only a $5 \times 5$ local spatial matrix centered on its current position (Fig.\ref{fig:pomdp_env}). The action space consists of 4 discrete movements: 0 (Up/Cruising), 1 (Down/Reverse), 2 (Left/Evasion), and 3 (Right/Evasion). A purely reactive policy in this setting is vulnerable to spatial amnesia, i.e. once a crucial obstacle or landmark leaves the local visual frame, a reactive agent forgets its existence, which can lead to catastrophic failures or infinite navigational loops.

To isolate and test the memory mechanism under partial observability, we utilize a procedurally generated sequence of abstracted keyframes representing the crucial phases of a corridor traversal. A single trajectory is structured as follows (as illustrated in Fig. \ref{fig:pomdp_env}): 
\begin{itemize}
    \item $t=0$ (Memory Cue): the agent receives an initial visual cue (a sign on the left or right wall) that dictates whether it must turn left or right at a distant intersection.
    \item $t=1, 2$ (Spatial Amnesia Zone): the agent receives a featureless spatial matrix, representing transit through an empty corridor. A purely reactive agent loses all historical context during this phase.
    \item $t=3$ (Intersection): the agent arrives abruptly at an intersection with a wall directly ahead. To successfully navigate, it must recall the cue from $t=0$ and turn in the correct direction.
\end{itemize}

\begin{figure}[htbp]
\centering
\begin{tikzpicture}[
cell/.style={draw, minimum size=0.6cm, inner sep=0pt},
agent/.style={fill=blue!20, font=\sffamily\scriptsize\bfseries},
wall/.style={fill=gray!60, font=\sffamily\scriptsize\bfseries},
sign/.style={fill=green!40, font=\sffamily\scriptsize\bfseries},
font=\sffamily\scriptsize
]
\def\drawgrid#1#2{
\begin{scope}[shift={(#1)}]
\draw[step=0.6cm, color=gray!50] (0,0) grid (3,3);
\node[cell, agent] at (1.5, 1.5) {A}; 
#2
\end{scope}
}
\drawgrid{0,0}{
\node[cell, sign] at (0.3, 1.5) {S};
\node[anchor=south] at (1.5, 3.2) {$t=0$: Memory Cue};
\node[anchor=north] at (1.5, -0.2) {(Agent sees ``Left'' sign)};
}
\draw[->, thick, color=gray] (3.2, 1.5) -- (3.8, 1.5);
\drawgrid{4.0,0}{
\node[anchor=south] at (1.5, 3.2) {$t=1, 2$: Empty Corridor};
\node[anchor=north, align=center] at (1.5, -0.2) {(Spatial Amnesia Zone)};
}
\draw[->, thick, color=gray] (7.2, 1.5) -- (7.8, 1.5);
\drawgrid{8.0,0}{
\node[cell, wall] at (1.5, 2.1) {W};
\node[anchor=south] at (1.5, 3.2) {$t=3$: Intersection};
\node[anchor=north, align=center] at (1.5, -0.2) {(Must recall $t=0$ to turn)};
}
\end{tikzpicture}
\caption{Illustration of the synthetic POMDP keyframe sequence. Rather than a contiguous simulation, the dataset samples crucial keyframes from a corridor traversal. At $t=0$, an initial cue (S) dictates the correct future turning direction. During $t=1, 2$, the agent receives featureless floor matrices, representing a state of visual ambiguity where reactive policies suffer from ``spatial amnesia.'' At the final keyframe $t=3$, the agent arrives abruptly at an intersection (W) and must rely exclusively on its recurrent memory buffer to execute the evasion maneuver.}
\label{fig:pomdp_env}
\end{figure}

\paragraph{Training Paradigm: Behavioral Cloning.}
We employ Behavioral Cloning (BC) \cite{pomerleau1989alvinn}, a foundational method of Imitation Learning \cite{hussein2017imitation}, to train the agent, as illustrated in Fig. \ref{fig:imitation_learning}. BC reframes \textit{dynamic control} as a \textit{supervised classification} task, where the goal is to learn a policy $\pi_\theta$ that mimics the actions of an algorithmic expert (which possesses perfect global information). The algorithmic expert generates a dataset of optimal state-action pairs $(o_t, a^*_t)$. The neural actor then learns to map state observations directly to the expert's discrete actions. During training, the agent policy, $\pi_\theta$, samples a batch of observations $o_t$ and outputs a predicted action $\hat{a}_t$. The network parameters $\theta$ are updated via \textit{Backpropagation Through Time} (BPTT) to minimize the Negative Log-Likelihood (NLL) between the predicted action distribution and the expert's true action $a^*_t$. 

\begin{figure}[htbp]
\centering
\begin{tikzpicture}[
    node distance=1.7cm and 2.0cm,
    box/.style={
        draw,
        thick,
        rounded corners,
        minimum width=3.0cm,
        minimum height=1.0cm,
        align=center,
        fill=blue!5,
        font=\small
    },
    arrow/.style={->, thick, >=stealth},
    label/.style={font=\scriptsize, fill=white, inner sep=1pt}
]

\node[box, fill=gray!10] (env) 
    {POMDP Environment};

\node[box, fill=green!10, above=1.6cm of env] (expert) 
    {Expert Oracle\\(Perfect Information)};

\node[box, fill=orange!10, right=1.2cm of env, minimum width=3.8cm] (data) 
    {Dataset $\mathcal{D}$\\$\{(o_1, a^*_1), \dots, (o_N, a^*_N)\}$};

\node[box, fill=blue!10, right=2.0cm of data] (agent) 
    {Agent Policy $\pi_\theta(a|o_t)$\\($\hat{a}_t$)};

\node[box, fill=red!10, above=1.6cm of agent, minimum height=0.8cm] (loss) 
    {Supervised Loss\\$\mathcal{L}_{NLL}(\hat{a}_t, a^*_t)$};

\draw[arrow] 
    (env.north) -- node[label, left] {True State $s_t$} (expert.south);

\draw[arrow] 
    (expert.east) -| node[label, near start, above] {Optimal Action $a^*_t$} (data.north);

\draw[arrow] 
    (env.east) -- node[label, below] {Obs $o_t$} (data.west);

\draw[arrow] 
    (data.east) -- node[label, below] {Obs Batch $o_t$} (agent.west);

\draw[arrow] 
    (agent.north) -- node[label, right] {Predicted $\hat{a}_t$} (loss.south);

\draw[arrow] 
    (data.north east) |- node[label, near start, above] {Target $a^*_t$} (loss.west);

\draw[arrow, dashed] 
    (loss.east) -- ++(0.6,0)
    |- node[label, near start, right] {Gradient Update $\nabla_\theta$} (agent.east);

\end{tikzpicture}
\caption{The Imitation Learning (Behavioral Cloning) paradigm under partial observability. An expert oracle utilizes the true state ($s_t$) to generate optimal actions ($a^*_t$), which are paired with the agent's limited observations ($o_t$) to form the dataset. The agent policy $\pi_\theta$ is trained offline using supervised learning to mimic the expert, isolating architectural efficacy from Reinforcement Learning instabilities.}
\label{fig:imitation_learning}
\end{figure}

We deliberately selected this training paradigm over standard Reinforcement Learning (RL) for two critical reasons: 

\begin{itemize} [label=-]
    \item \textit{Architecture Isolation}: by removing the high variance, reward-shaping dependencies, and hyperparameter sensitivity inherent to RL, any improvements in performance or safety can be strictly attributed to the NBSR topology.
    \item \textit{Covariate Shift Testbed}: standard BC policies are notoriously susceptible to covariate shift - they mimic the expert blindly and fail catastrophically when encountering novel OOD states. This makes BC an ideal testbed for demonstrating NBSR's native evidential abstention mechanisms.
\end{itemize}

\paragraph{NBSR-Mem and Baselines.}
To construct \textbf{NBSR-Mem}, we couple the \textit{Global Knowledge Oracle} (a lightweight CNN) with a \textit{Gated Recurrent Unit} (GRU). Importantly, the memory update occurs \textit{prior} to expert routing. The CNN features update a persistent hidden state $\mathbf{m}_t$ encoding the historical trajectory. This \textit{temporally-aware state} $\mathbf{m}_t$ is then passed onto the hierarchical evidential DAG (illustrated below), where expert networks are intentionally flattened to single linear layers to ensure unhindered gradient flow.

\begin{verbatim}
[Level 0: Root, Width: 1]                    Navigational Router
                                             /                 \
                                            /                   \
[Level 1: Mid, Width: 2]            Cruising Expert          Evasion Expert
                                    /         \              /          \
                                   /           \            /            \
[Level 2: Leaf, Width: 4]        Up           Down        Left          Right
                             (Straight)     (Reverse)    (Avoid)      (Avoid)
\end{verbatim}

To validate the necessity of both temporal memory and evidential routing, we benchmark NBSR-Mem against 3 distinct architectural configurations:
\begin{enumerate}
    \item \textit{Standard CNN (Reactive):} a standard feed-forward convolutional network utilizing a Softmax output. Lacking memory, it is expected to succumb to spatial amnesia; lacking evidential routing, it acts as an opaque black-box prone to OOD overconfidence.
    \item \textit{CNN-GRU (Memory):} a temporal baseline where the CNN features update a GRU hidden state before passing to a standard linear classifier. While capable of solving the POMDP task via memory, it remains structurally opaque and highly vulnerable to OOD hazards.
    \item \textit{NBSR (Reactive):} the hierarchical evidential routing architecture \textit{without} the GRU memory buffer. It is expected to fail the primary navigation task due to spatial amnesia, but successfully trigger epistemic safety halts when encountering OOD traps.
\end{enumerate}

\paragraph{Training Dynamics: Stabilizing the Recurrent Routing}
Training this \textit{Recurrent Evidential Routing Network} is historically challenging \cite{shazeer2017outrageously, pascanu2013difficulty}. Naive implementations typically suffer from \textit{expert collapse} \cite{shazeer2017outrageously}, vanishing gradients \cite{Hochreiter1997LSTM,pascanu2013difficulty}, and temporal memory amnesia (catastrophic forgetting) \cite{bengio1994learning}. To stabilize NBSR-Mem, we implement 4 critical structural interventions:

\begin{enumerate}
    \item \textit{Deterministic Routing for BPTT:} we replace the stochastic Gumbel-Softmax with a deterministic, standard Softmax exclusively during training. This creates a mathematically smooth pathway, allowing \textit{Backpropagation Through Time} (BPTT) to flow seamlessly from the routing leaves, through the hierarchy, and deep into the GRU to link past states to future decisions.
    \item \textit{Semantic Concept Routing (Auxiliary Loss):} without explicit guidance, hierarchical routers, particularly when utilizing a deterministic Softmax, seek the laziest optimization, routing all states to a single, monolithic ``master expert'' (\textit{Expert Collapse}) \footnote{While deterministic Softmax exacerbates mode collapse due to its ``winner-take-all'' gradient scaling, expert collapse is fundamentally an optimization pathology where it is mathematically cheaper to update an already-competent expert than to train a novel one. This ``rich get richer'' dynamic occurs across routing mechanisms (including Sigmoid, Gumbel-Softmax, and Top-K) unless explicit load-balancing or semantic guidance is applied.}. To prevent this, we apply a lightweight \textit{Auxiliary Concept Loss} ($\mathcal{L}_{route}$) that physically forces the Root Router to map to human-interpretable sub-spaces: straight corridors \textit{must} be routed to the ``Cruising'' expert, while complex intersections \textit{must} trigger ``Evasion.''
    \item \textit{Masked Evidential Regularization:} standard evidential regularizers penalize all generated evidence, creating a "Gradient Tug-of-War" where the NLL loss encourages correct evidence, but the regularizer punishes it, resulting in memory failure. We utilize a target-masked regularizer \cite{sensoy2018evidential} that only penalizes hallucinated evidence on \textit{incorrect} classes. This forces resource rationality, squashing incorrect actions strictly to zero, while leaving the correct class free to soar to massive confidence. This shows that the original entropy regularization term $\lambda \sum_{v_t \in \mathcal{T}} \mathcal{H}(\bm{\alpha}_t)$ in Eq.\ref{eq:training_obj} can sometimes introduce optimization deadlock and requires masking for calibration.
    \item \textit{Targeted Weight Decay and LayerNorm:} to anchor the latent space and bound out-of-distribution explosions, we apply LayerNorm and targeted L2 weight decay to the CNN backbone. This ensures untrained visual channels (e.g. alien hazards - see later) mathematically decay to zero, preventing random noise from being amplified by the network.
\end{enumerate}

\subsubsection{Results and Analysis}

\paragraph{Overall Navigation Performance.}
We benchmark NBSR and NBSR-Mem against standard continuous CNN and CNN-GRU (Memory) baselines across 1000 held-out maze configurations. The primary metric is the episodic Success Rate (reaching the goal without crashing). 

As shown in Table~\ref{tab:results_control}, purely reactive policies (\textit{Standard CNN} and \textit{NBSR Reactive}) fail frequently ($\sim$65\% success) because they suffer from \textit{spatial amnesia} at complex intersections ($t=3$ in Fig.\ref{fig:pomdp_env}). However, NBSR-Mem perfectly matches the standard CNN-GRU baseline, achieving a flawless 100.0\% success rate. This proves that our hierarchical DAG structure is perfectly communicating with the GRU memory buffer, and that the evidential topology incurs \textit{zero performance penalty}. NBSR-Mem proves imposing interpretability doesn't ruins accuracy.

\begin{table}[htbp]
\centering
\caption{Performance on the Partially Observable 2D Navigation Task.}
\label{tab:results_control}
\resizebox{\columnwidth}{!}{%
\begin{tabular}{lcccc}
\hline
\textbf{Policy Architecture} & \textbf{Success Rate ($\uparrow$)} & \textbf{Avg. Depth} & \textbf{OOD Abstain Rate ($\uparrow$)} & \textbf{Interpretability} \\ \hline
Standard CNN (Reactive) & 64.7\% & N/A & 0.0\% (Crashes) & Black-box \\
CNN-GRU (Memory) & \textbf{100.0\%} & N/A & 0.0\% (Crashes) & Black-box \\ \hline
NBSR (Reactive) & 65.3\% & 1.18 & \textbf{100.0\%} (Halts) & Full Trace \\
\textbf{NBSR-Mem (Ours)} & \textbf{100.0\%} & \textbf{1.00} & \textbf{100.0\%} (Halts) & \textbf{Full Trace} \\ \hline
\end{tabular}%
}
\vspace{0.15cm}
\parbox{\columnwidth}{\footnotesize \textit{Note:} Success Rate measures the percentage of episodes where the agent safely reached the goal. Abstention Rate measures the policy's ability to safely halt when presented with an unseen OOD trap.}
\end{table}

\paragraph{Interpretable Policy Routing \textit{vs.} Black-Box Control.}
When the standard CNN-GRU policy navigates an intersection, it outputs an action with \textit{no mechanistic explanation}. In contrast, NBSR-Mem physicalizes the decision-making process. By examining the audit trace of a successful evasive maneuver, we observe the model explicitly routing through semantic sub-spaces:
\begin{verbatim}
Agent State: Approaching a wall directly ahead 
             (Remembering Left Sign from t=0).
-> Standard CNN-GRU Policy: Predicts Action [Turn Left] (Opaque mechanism)

-> NBSR-Mem Policy Trace:
   [Phase 1] Root Router: Cruising (0.00) | Evasion (1.00)
   [Phase 2] Depth 1 Evidence (Evasion Expert):
      -> Up: 0.0 | Down: 0.0 | Left: 28.3 | Right: 0.0
      -> (Entropy remains above threshold, routing deeper...)
   [Phase 3] Depth 2 Additive Evidence (Leaf Experts):
      -> Up: 0.0 | Down: 0.0 | Left: 75.0 | Right: 0.0
\end{verbatim}
Thanks to the \textit{Auxiliary Concept Loss}, Phase 1 perfectly isolates the abstract mode (Evasion: 1.00). Further, because the model is resource-rational, the evidence is sharp and sparse - it outputs exactly $0.0$ for incorrect actions, proving that the evidential regularizer successfully stopped all hallucinations.

\paragraph{Resource-Rational Adaptiveness and Computational Efficiency.}
Notably, the integration of temporal memory profoundly impacts dynamic routing depth. The reactive NBSR policy, lacking historical context at intersections, exhibits high epistemic uncertainty. Because a perfectly uncertain 4-class Dirichlet distribution ($\boldsymbol{\alpha} = [1,1,1,1]$) yields a maximum differential entropy of roughly $-1.79$, the reactive policy safely exceeds the $-4.5$ confidence threshold we set, forcing the network to route to Depth 2 (Avg. Depth: 1.18)\footnote{The average depth is calculated across the entire test trajectory dataset. Because the reactive policy lacks memory, it successfully early-exits on the trivial empty corridors (comprising the majority of states), but is forced to traverse to Depth 2 upon reaching every intersection due to high epistemic uncertainty. This blend of Depth 1 and Depth 2 routing yields an overall episodic average of 1.18.}.

In contrast, NBSR-Mem utilizes its GRU hidden state to achieve extreme confidence. As seen in the trace above, the Depth 1 Evasion Expert alone extracts a massive $28.3$ evidence for the correct turn based on memory. This profound reduction in uncertainty causes the entropy to plummet past the threshold, allowing the router to safely early-exit. Remarkably, the model achieves sufficient epistemic confidence to early-exit on all states, yielding an average inference depth of exactly 1.00 while maintaining 100\% accuracy. Thus, our architecture demonstrates that resolving partial observability not only recovers task performance but actively reduces the computational footprint of the hierarchical policy.

\paragraph{Epistemic Safety in Out-of-Distribution Environments.}
The most critical failure mode of standard autonomous policies is their behavior in novel environments. To test this, we inject an ``Alien Hazard'' (a visual channel never present in training) directly into the agent's path.

Because standard baselines utilize a softmax activation, the mathematical constraint ($\sum p = 1$) forces the policy to confidently hallucinate. In 100\% of our OOD trials, the standard CNN-GRU policy selected a movement action and crashed into the hazard (Table~\ref{tab:results_control}). 

NBSR-Mem natively solves this through Dirichlet thresholding, empirically validating \textit{Corollary~\ref{corollary:1} (Epistemic Collapse)}. Due to our \textit{LayerNorm} and \textit{targeted weight decay}, the untrained Alien channel weights were mathematically anchored to zero. When the Alien Trap appeared, the network extracted only $8.39$ total precision - safely below the operating threshold of $10.0$ we set. Because the evidence remained stagnant, the epistemic uncertainty $u$ mathematically refused to collapse. Instead of hallucinating, NBSR-Mem recognized its own ignorance and triggered a safe fallback protocol (100.0\% Halt). Without these target-masked regularizations, ID and OOD evidence distributions dangerously overlap; with them, NBSR-Mem perfectly preserves the epistemic safety margin. Ultimately, this demonstrates that NBSR-Mem ``knows when it doesn't know'', providing a robust, uncertainty-aware framework for safe autonomous control.

\subsection{NBSR as Active Learning in Bayesian Optimal Experimental Design}
\label{subsec:nbsr_boed}

In many scientific and clinical domains, acquiring data is inherently constrained by stringent budgets. For example, in pharmacological dose-response modeling, a practitioner must actively decide which drug concentrations to measure in order to maximize understanding of the efficacy curve (e.g. identifying the $EC_{50}$\footnote{The $EC_{50}$ (half maximal effective concentration) represents the concentration of a drug or intervention that induces a response halfway between the baseline and the maximum possible effect, serving as a standard measure of potency \cite{huang2025GMA}.}) while minimizing laboratory costs. This challenge is formalized by \textit{Bayesian Optimal Experimental Design} (BOED) \cite{sebastiani2000BOED}, which seeks to select an experimental design $\xi$ that maximizes the \textit{Expected Information Gain} (EIG). For an unobserved\footnote{This target variable can be a particular parameter of interest, such as the latent $EC_{50}$ threshold in pharmacodynamics, or a broader categorical outcome, such as the true underlying pathology in a clinical diagnostic setting. Note that to maintain notational consistency with the NBSR framework, we use $y$ to denote the unobserved target variable (commonly denoted as $\theta$ in standard BOED literature e.g. in \cite{foster2019BOED}) and $e$ to denote the prospective experimental outcome or evidence (commonly denoted as $y$).} target variable $y$ and a prospective measurement $e$ acquired under design $\xi$, the EIG is mathematically defined as the mutual information between $y$ and $e$ \cite{foster2019BOED}:
\begin{equation} \label{eq:BOED_EIG}
    \text{EIG}(\xi) = I(y ; e \mid \xi) = \mathbb{E}_{p(e \mid \xi)} \big[ \mathcal{H}(p(y)) - \mathcal{H}(p(y \mid e, \xi)) \big]
\end{equation}
where $\mathcal{H}$ denotes the entropy of the belief state. In other words, BOED identifies the specific experimental design that, in expectation over all possible measurement outcomes, induces the maximum reduction in posterior entropy over the target latent variables \cite{sebastiani2000BOED,foster2019BOED}. Provided the underlying predictive model is accurate, this approach constitutes an optimal, albeit myopic (one-step), data acquisition strategy from a strict information-theoretic perspective.

The NBSR framework natively operates as a sequential BOED engine. At any inference step $t$, the routing action $a_t$ formally corresponds to selecting the next experimental design $\xi$ (e.g. querying a specific sub-specialist or running a distinct medical test), and the extracted evidence $\mathbf{e}_t$ serves as the observation. Historically, maximizing EIG in deep neural networks is computationally intractable because the posterior entropy requires evaluating an intractable marginal likelihood, often necessitating complex variational lower bounds \cite{foster2019BOED}. However, because the NBSR Bayesian state update is defined via exact conjugate addition ($\bm{\alpha}_{t+1} = \bm{\alpha}_t + \mathbf{e}_t$), our framework bypasses this intractability entirely. The exact, realized information gain achieved by querying expert $v_{t+1}$ is analytically computed as the drop in differential entropy: $\Delta \mathcal{H}_t = \mathcal{H}(\bm{\alpha}_t) - \mathcal{H}(\bm{\alpha}_{t+1})$. 

As established in Proposition~\ref{prop:1}, the structural penalty $\lambda \sum \mathcal{H}(\bm{\alpha}_t)$ in our training objective intrinsically forces the router to maximize this sequential EIG by penalizing delayed uncertainty reduction. By introducing explicit, asymmetric measurement costs into the graph, we can extend NBSR into a fully resource-rational \textit{Active Learning} framework.

\subsubsection{Task and Experimental Setup: Active Clinical Triage}

To empirically validate NBSR as an active BOED agent, we extend the Structured Medical Diagnosis task (Section \ref{subsec:medical_diagnosis}) into an \textit{Active Clinical Triage} environment. In standard machine learning, the model is given free, simultaneous access to all input features. In this experiment, the patient's true diagnostic state is initially hidden. The agent only has access to free, baseline demographic data at $t=0$. 

The 132 symptoms are clustered into 5 distinct ``Diagnostic Test Panels'' (e.g. Basic Metabolic Panel, Neurological Exam, Dermatological Swab). Each panel is governed by a dedicated NBSR Expert node and has an associated financial or temporal cost $c(v)$, detailed in Table \ref{tab:panel_costs}.

\begin{table}[htbp]
    \centering
    \caption{Diagnostic Test Panels and Associated Costs ($c(v)$). These values represent the asymmetric financial or temporal constraints required to execute each specific diagnostic panel.}
    \label{tab:panel_costs}
    \begin{tabular}{lc}
    \toprule
    \textbf{Diagnostic Panel} & \textbf{Assigned Cost $c(v)$} \\
    \midrule
    Basic Metabolic Panel (Blood) & 10.0 \\
    Hepatic Panel                 & 15.0 \\
    Respiratory Swab              & 20.0 \\
    Neurological Exam             & 30.0 \\
    MRI Scan                      & 50.0 \\
    \bottomrule
    \end{tabular}
\end{table}

\paragraph{The Active BOED Objective.} 
The agent must sequentially select which tests to run to confidently diagnose the patient, balancing diagnostic accuracy against the cumulative cost of the required tests. We modify the primary NBSR training objective (Eq.~\ref{eq:training_obj}) to incorporate a cost-aware regularizer:
\begin{equation} \label{eq:boed_obj}
    \mathcal{L}_{\text{BOED}}(x, y^*) = - \log \mathbb{E}_{\mathbf{p} \sim \text{Dir}(\bm{\alpha}_T)}[p_{y^*}] + \lambda \sum_{v_t \in \mathcal{T}} \mathcal{H}(\bm{\alpha}_t) + \gamma \sum_{v_t \in \mathcal{T}} c(v_t)
\end{equation}
where the expectation is taken over the terminal Dirichlet belief state $\mathbf{p} \sim \text{Dir}(\bm{\alpha}_T)$, and $\gamma$ is a tunable hyperparameter dictating the budgetary strictness. Notably, as derived previously in Eq.~\ref{eq:expected_marginal_cc}, the expected marginal probability under the Dirichlet distribution expands exactly to:
\begin{equation*}
    \mathbb{E}_{\mathbf{p} \sim \text{Dir}(\bm{\alpha}_T)}[p_{y^*}] = \frac{\alpha_{T, y^*}}{\sum_{k=1}^K \alpha_{T, k}}
\end{equation*}
which establishes that the first two terms of Eq.~\ref{eq:boed_obj} are mathematically identical to the primary NBSR training objective (Eq.~\ref{eq:training_obj}). During the forward pass, the Router processes the patient's current Dirichlet belief state $\bm{\alpha}_t$ to select the test panel. The policy is implicitly optimized during training to select tests that maximize information gain per unit cost, as governed by the cost-aware regularizer $\gamma$ and the entropy penalty $\lambda$ in Eq.~\ref{eq:boed_obj}. The process halts dynamically when the belief entropy $\mathcal{H}(\bm{\alpha}_t)$ falls below the confidence threshold $\eta$.

\paragraph{Connection to the Standard EIG Objective.}
To understand why optimizing Eq.~\ref{eq:boed_obj} effectively solves the active learning problem, we compare it directly to the standard BOED objective. Recall that standard BOED seeks a design $\xi$ that maximizes the Expected Information Gain (Eq.\ref{eq:BOED_EIG}):
\begin{equation*}  
    \text{EIG}(\xi) = I(y ; e \mid \xi) = \mathbb{E}_{p(e \mid \xi)} \big[ \mathcal{H}(p(y)) - \mathcal{H}(p(y \mid e, \xi)) \big]
\end{equation*}
In the NBSR framework, at step $t$, the ``prior'' state is given by the Dirichlet distribution parameterized by $\bm{\alpha}_t$. The routing decision $a_t$ serves as the experimental design $\xi$, and the extracted evidence $\mathbf{e}_t$ serves as the observation $e$. The exact, realized information gain of taking step $t$ is the reduction in entropy: $\Delta \mathcal{H}_t = \mathcal{H}(\bm{\alpha}_t) - \mathcal{H}(\bm{\alpha}_{t+1})$. 

Minimizing the cumulative sum of entropies $\lambda \sum_{t=0}^T \mathcal{H}(\bm{\alpha}_t)$ in Eq.~\ref{eq:boed_obj} forces the network to minimize the area under the entropy curve. Algebraically, this heavily penalizes delayed uncertainty reduction, forcing the routing policy $\pi_\theta$ to consistently select paths that maximize the expected stepwise drop $\mathbb{E}[\Delta \mathcal{H}_t]$ at every juncture. Therefore, our regularized loss $\mathcal{L}_{\text{BOED}}$ acts as a computationally tractable, end-to-end surrogate for maximizing the exact sequential EIG (Eq.~\ref{eq:BOED_EIG}). This circumvents the intractable marginal likelihood integrations typically required in standard variational BOED \cite{foster2019BOED}.

\paragraph{Autoregressive Routing (AR-NBSR) to Prevent Combinatorial Explosion.}
Deploying a static, feed-forward DAG topology into this active triage environment would induce a massive combinatorial explosion. Specifically, given $N$ available diagnostic test panels (the branching factor) and a maximum required testing sequence of $L$ steps (the tree depth), a hardcoded decision tree would require an intractable $\mathcal{O}(N^L)$ leaf-node complexity. To resolve this, we configure the framework as an \textit{Autoregressive State Machine (ASM)}.

We formally define the AR-NBSR as an ASM - a sequential generative system where the current state $\mathcal{S}_t$ and the subsequent transition are conditioned on the history of prior observations. We define the ASM by the tuple $(\mathcal{S}, \mathcal{A}, \pi_\theta, \mathcal{U})$, where $\mathcal{S}_t = (\mathbf{h}_x, \bm{\alpha}_t, \mathbf{m}_t)$ is the joint embedding of oracle features, current Dirichlet belief state, and action history; $\mathcal{A}$ is the action space of diagnostic panels; $\pi_\theta(a_t \mid \mathcal{S}_t)$ is the routing policy; and $\mathcal{U}$ is the state-update function defined by the Bayesian conjugate addition $\bm{\alpha}_{t+1} = \bm{\alpha}_t + \mathbf{e}_t$. While the state evolution inherently satisfies the Markov property (as the posterior belief $\bm{\alpha}_{t+1}$ is conditioned solely on the previous state $\bm{\alpha}_t$ and the instantaneous evidence $\mathbf{e}_t$), we term this process \textit{autoregressive} because the model uses its own past evidential accumulations to regress towards a terminal hypothesis. 

Conceptually, this maps perfectly to the cyclical architecture illustrated in Fig.~\ref{fig:agr_nbsr_triage}: instead of routing a patient down a physical hallway of isolated sub-specialists (a deep DAG), a single reusable \textit{Global Triage Router} and a flat array of independent Expert Networks operate in a recurrent, time-indexed loop. Consequently, static architectural \textit{depth} is replaced by recurrent \textit{time}. The diagnostic trajectory unfolds dynamically as an iterative loop, for example:

\begin{enumerate}
    \item \textbf{$t=0$:} The Global Router evaluates the initial baseline prior $\bm{\alpha}_0$ alongside the demographic features, and selects the first test (e.g. Panel 2). Panel 2 extracts evidence $\mathbf{e}_0$, updating the belief to $\bm{\alpha}_1 = \bm{\alpha}_0 + \mathbf{e}_0$.
    \item \textbf{$t=1$:} The exact same Global Router evaluates the newly updated state $\bm{\alpha}_1$. Recognizing lingering uncertainty regarding a specific pathology, it selects a complementary test (e.g. Panel 4). Panel 4 extracts evidence $\mathbf{e}_1$, updating the belief to $\bm{\alpha}_2 = \bm{\alpha}_1 + \mathbf{e}_1$.
    \item \textbf{$t=2$:} The router evaluates $\bm{\alpha}_2$. The newly accumulated evidence causes the differential entropy $\mathcal{H}(\bm{\alpha}_2)$ to plunge below the confidence threshold $\eta$. The recurrent loop terminates dynamically, yielding the final diagnosis without consuming further testing budget.
\end{enumerate}
This autoregressive formulation elegantly bounds the parameter count to a single router and $N$ experts, while granting the agent a dynamic, virtually infinite sequential action space.

\begin{figure}[htbp]
    \centering
    \resizebox{\columnwidth}{!}{%
    \begin{tikzpicture}[
        >=stealth,
        expert/.style={rectangle, draw=green!60!black, rounded corners, fill=green!5, thick, minimum width=2.8cm, minimum height=0.8cm, align=center},
        active_expert/.style={rectangle, draw=blue!60, rounded corners, fill=blue!10, thick, minimum width=2.8cm, minimum height=0.8cm, align=center},
        router/.style={ellipse, draw=orange!80, fill=orange!10, thick, align=center, inner sep=5pt},
        oracle/.style={rectangle, draw=gray!80, rounded corners, fill=gray!10, thick, minimum width=2.5cm, align=center, inner sep=6pt},
        sum/.style={circle, draw=black, thick, fill=yellow!20, inner sep=2pt, minimum size=0.6cm, align=center},
        delay/.style={rectangle, draw=purple!60, rounded corners, fill=purple!5, thick, minimum width=2.5cm, align=center, minimum height=0.8cm},
        check/.style={diamond, draw=red!60, fill=red!5, thick, align=center, aspect=2.2, inner sep=1pt},
        exit/.style={rectangle, draw=black, rounded corners, fill=gray!20, thick, align=center, minimum height=0.8cm},
        dot/.style={circle, fill=black, inner sep=1.5pt}
    ]

    \node[oracle] (oracle) at (0, -3.5) {\textbf{Global Oracle} \\ $\mathbf{h}_x$};
    \node[router] (router) at (0, 0) {\textbf{Global Router} \\ $\pi_\theta(a_t \mid \mathcal{S}_t)$};

    \node[expert, text=gray] (exp1) at (5, 1.5) {Panel 1};
    \node[active_expert] (exp2) at (5, 0) {\textbf{Panel $a_t$} \\ (Active Expert)};
    \node[text=gray] at (5, -0.75) {\vdots};
    \node[expert, text=gray] (exp3) at (5, -1.5) {Panel $N$};

    \node[sum] (sum) at (7.5, 3) {$+$};
    \node[delay] (delay) at (3.75, 4.5) {\textbf{Belief Memory} \\ $z^{-1}$};

    \node[check] (check) at (-3.5, 3) {\footnotesize $\mathcal{H}(\bm{\alpha}_t) < \eta$?};
    \node[exit] (final) at (-7, 3) {\textbf{Final} \\ \textbf{Diagnosis}};

    \coordinate (alpha_split) at (0, 3);
    \coordinate (oracle_split) at (2.4, -3.5);
    \node[dot] (switch_in) at (2.4, 0) {};
    \node[dot] (alpha_dot) at (0, 3) {};

    
    \draw[->, thick, dashed, draw=gray!80] (oracle) -- (router);
    \draw[-, thick, dashed, draw=gray!80] (oracle) -- (oracle_split);
    \draw[->, thick, dashed, draw=gray!80] (oracle_split) -- node[right, pos=0.8] {\footnotesize Data $\mathbf{h}_x$} (switch_in);

    \draw[->, thick, orange] (router) -- node[above] {\footnotesize Action $a_t$} (switch_in);

    \draw[->, thick, draw=gray!40] (switch_in) |- (exp1.west);
    \draw[->, thick, blue] (switch_in) -- (exp2.west);
    \draw[->, thick, draw=gray!40] (switch_in) |- (exp3.west);

    \draw[->, thick, blue] (exp2.east) -| node[pos=0.25, above] {Evidence $\mathbf{e}_t$} (sum.south);

    \draw[-, thick, purple] (delay.west) -| (alpha_split);

    \draw[->, thick, purple] (alpha_split) -- node[above] {Prior $\bm{\alpha}_t$} (sum.west);
    \draw[->, thick, purple] (alpha_split) -- (router.north);
    \draw[->, thick, purple] (alpha_split) -- (check.east);

    \draw[->, thick, purple] (sum.north) |- node[pos=0.25, right] {Posterior $\bm{\alpha}_{t+1}$} (delay.east);

    \draw[->, thick, red] (check.west) -- node[above] {\footnotesize Yes} (final.east);
    \draw[->, thick, red!80] (check.south) |- node[pos=0.75, above] {\footnotesize No (Continue Loop)} (router.west);

    \end{tikzpicture}%
    }
    \caption{Illustration of the Autoregressive NBSR (AR-NBSR) framework. Unlike a static feed-forward DAG, the Global Triage Router iteratively queries a flat array of available diagnostic panels. The belief state $\bm{\alpha}_t$ acts as a recurrent memory buffer, denoted by the Unit Delay Operator ($z^{-1}$) from discrete-time signal processing, which caches the newly updated posterior to serve as the prior for the subsequent routing step. By sequentially accumulating evidence $\mathbf{e}_t$ via conjugate addition until the diagnostic entropy falls below the safety threshold $\eta$, this temporal feedback loop naturally averts the $\mathcal{O}(N^L)$ combinatorial explosion.}
    \label{fig:agr_nbsr_triage}
\end{figure}

\begin{tcolorbox}[colback=gray!10, colframe=gray!50, arc=4pt, title=\textbf{Clinical Metaphor: The Active Triage Room}]
To intuitively map the mathematical terminologies of AR-NBSR to a real-world clinical setting, consider a hospital triage room:\\
\vspace{-0.2cm}
\begin{itemize}[leftmargin=*]
    \item \textbf{The Global Router (The Lead Diagnostician):} Rather than passing the patient down a physical hallway of isolated sub-specialists (a deep DAG), a single Lead Diagnostician sits at a central desk. Their only job is to evaluate the current information and decide \textit{which test to order next}.
    \item \textbf{The Expert Networks (The Test Panels):} Down the hall are 5 distinct testing rooms (e.g. Blood Panel, MRI, Respiratory Swab). These are the expert modules.
    \item \textbf{The Dirichlet Belief State $\bm{\alpha}_t$ (The Clipboard):} The Diagnostician holds a clipboard tracking the probability of 41 possible diseases. It starts blank (maximum entropy).
    \item \textbf{Autoregressive Time (The Loop):} The Diagnostician sends the patient to Room 2. The result (evidence $\mathbf{e}_0$) comes back, and the clipboard is updated ($\bm{\alpha}_1$). Realizing more information is needed to distinguish between two highly probable diseases, the Diagnostician sends the patient to Room 4 ($\mathbf{e}_1 \to \bm{\alpha}_2$).
    \item \textbf{The Confidence Threshold $\eta$ (The Stop Condition):} Once the clipboard indicates sufficient certainty for a specific disease, the Diagnostician halts the testing loop, sparing the patient the financial cost and time of visiting the remaining 3 rooms.
\end{itemize}
\end{tcolorbox}

\paragraph{Baselines.} We benchmark the NBSR active learning agent against two standard sequential acquisition strategies:
\begin{enumerate}
    \item \textbf{Random Allocation:} Iteratively queries random test panels until the entropy threshold $\eta$ is reached.
    \item \textbf{Greedy EIG (Myopic BOED):} A standard active learning baseline that evaluates the expected entropy drop for all available tests at step $t$, and selects the test that maximizes the immediate gain-to-cost ratio $\frac{\Delta \mathcal{H}_t}{c(v)}$. While optimal for a single isolated step, myopic BOED is well-known to lack long-horizon planning.
\end{enumerate}

\subsubsection{Results and Analysis}

\paragraph{Training Dynamics and Convergence.}
The training stability of the AR-NBSR model is evidenced by the convergence curves in Fig.~\ref{fig:training_curves}. We observe a precipitous decline in NLL during the initial 40 epochs, indicating that the expert networks successfully internalized the diagnostic features of the noisy clinical dataset. Concurrently, the \textit{Average Patient Cost} descends from the maximum budgetary ceiling ($\$125$) to a stable operational regime ($\sim \$48$). This confirms that the AR-NBSR agent reaches a stable, resource-rational equilibrium, balancing predictive accuracy against the cost-awareness regularizer without exhibiting signs of training instability or divergence.

\begin{figure}[htbp]
    \centering
    \includegraphics[width=0.5\columnwidth]{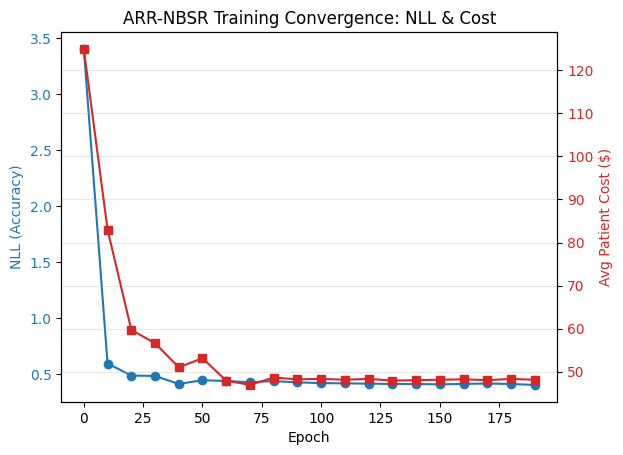}
    \caption{AR-NBSR Training Convergence. The simultaneous decline in NLL (Blue) and Average Patient Cost (Red) indicates the agent effectively learns to balance diagnostic precision with budgetary constraints.}
    \label{fig:training_curves}
\end{figure}

\paragraph{Cost-Accuracy Pareto Frontier.}
To demonstrate the model's resource-rationality, we evaluate the learned routing policy across a range of operational confidence thresholds $\eta$. By sweeping the early-exiting threshold $\eta$ during inference (effectively adjusting the agent's internal confidence requirement), we generate a Pareto frontier plotting diagnostic accuracy against the average cumulative cost per patient (Fig.~\ref{fig:pareto_frontier}). Note that while the budgetary penalty $\gamma$ is used during \textit{training} to encourage cost-aware feature acquisition, the Pareto frontier specifically maps the trade-off between the agent's \textit{test-time} confidence (governed by $\eta$) and the resulting economic and predictive performance. As expected, Random Allocation requires nearly all tests (maximum cost) to reach baseline accuracy. The Greedy EIG baseline performs well initially but frequently selects cheap, uninformative tests that require subsequent expensive follow-ups to resolve lingering uncertainty.

AR-NBSR drastically outperforms both baselines, establishing itself as a \textit{non-myopic, resource-rational, active triage agent}. Because the routers are trained end-to-end via the Gumbel-Softmax STE, the routing policy moves beyond myopic step-wise gains to learn a global diagnostic trajectory. By generating a high-resolution frontier using 30 linearly spaced thresholds, we observe a distinct ``elbow'' in the AR-NBSR curve. It achieves its peak diagnostic accuracy ($\sim$90.5\%) at an average cost of approximately $\$45$ to $\$50$, whereas the Greedy baseline requires upwards of $\$60$ to $\$80$ to reach the same predictive asymptote. This organically demonstrates that occasionally selecting a moderately expensive test early in the sequence is optimal if it mathematically guarantees a massive reduction in the downstream diagnostic search space.

\begin{figure}[htbp]
    \centering
    \includegraphics[width=0.5\columnwidth]{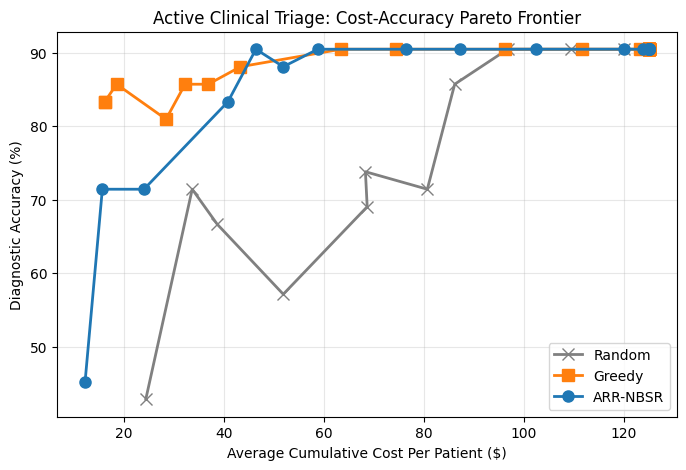}
    \caption{Cost-Accuracy Pareto Frontier. AR-NBSR (Blue) demonstrates superior resource-rationality compared to Greedy (Orange) and Random (Grey) baselines, hitting the peak accuracy asymptote at a significantly lower average cumulative cost.}
    \label{fig:pareto_frontier}
\end{figure}

\paragraph{Dynamic Uncertainty Reduction.}
A hallmark of Bayesian Optimal Experimental Design is that the optimal measurement strategy is inextricably dependent on the prior. Our audit trail (Fig.~\ref{fig:entropy_audit_trail}) provides a \textit{non-myopic proof} of this behavior. The fact that entropy drops significantly across multiple steps ($t=0$ to $t=5$) confirms the model is not relying on a "magic bullet" test but rather gathering a chain of evidence, where each test updates the belief state, and the subsequent test selection is conditioned on the evidence gathered previously. The threshold crossing at $t=2$ demonstrates the model performing dynamic early-exiting: it recognized it had attained sufficient information to be confident and halted testing to save resources.

\begin{figure}[htbp]
    \centering
    \includegraphics[width=0.5\columnwidth]{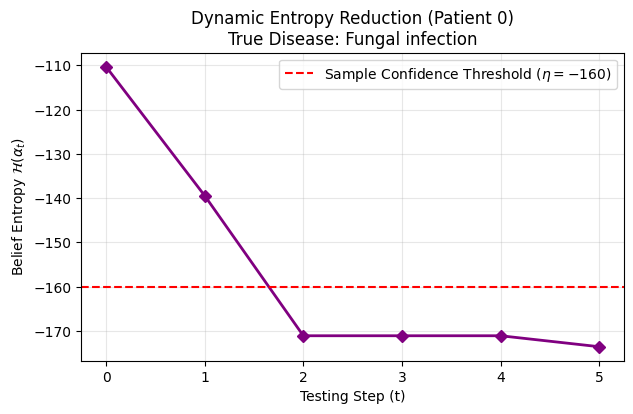}
    \caption{Dynamic Entropy Reduction for a single patient trajectory. The agent sequentially reduces belief entropy until the confidence threshold $\eta$ is breached, demonstrating non-myopic sequential planning.}
    \label{fig:entropy_audit_trail}
\end{figure}

\paragraph{Conclusion on Empirical Performance.}
As demonstrated in our cost-accuracy Pareto Frontier, AR-NBSR exhibits strong resource-rationality. While the Greedy baseline provides a strong myopic upper bound, AR-NBSR achieves competitive diagnostic accuracy with significantly lower average computation per inference. Further, the audit trail confirms that AR-NBSR does not merely execute a fixed sequence, but dynamically reduces belief entropy across multiple time steps, naturally terminating upon reaching the confidence threshold $\eta$. This confirms that our framework is capable of long-horizon planning, moving beyond the myopic limitations of standard active learning.

\section{Discussion} \label{sec:discussion}

This NBSR framework represents a fundamental shift from static, monolithic deep learning towards an active, sequential feature acquisition paradigm. By unifying hierarchical routing, exact Bayesian updating, and discrete conditional execution, NBSR naturally aligns neural inference with human cognitive strategies. Below, we discuss the architectural implications, training dynamics, and theoretical extensions of this framework.

\subsection{Active Knowledge Retrieval vs. Information Bottlenecking}
Standard deep neural networks frequently suffer from information bottlenecking \cite{shwartz2017opening}. When data is presented entirely at the input layer, critical low-level details can be prematurely discarded as information propagates through sequential transformations. For example, early layers in a convolutional neural network may extract broad spatial features while irreparably discarding granular textures that might be vital for downstream, fine-grained classification. While residual connections partially mitigate this by passing signals across contiguous layers, they do not preserve the unadulterated global context across the entire depth of the network. 

In contrast, the NBSR framework fundamentally bypasses this degradation by treating the initial dense embedding as a \textit{Persistent Knowledge Oracle} ($\mathbf{h}_x$). Instead of processing the data in a single monolithic pass where information is passively lost, the decision graph actively and iteratively queries this oracle. In this sense, NBSR shares conceptual similarities with Joint-Embedding Predictive Architectures (JEPA) \cite{lecun2022path}: both systems seek to filter environmental noise and extract abstract, task-relevant true signals. However, in NBSR, this extraction is performed sequentially and information retrieval is done repeatedly. Because each nodal decision-maker selectively chooses exactly what knowledge to retrieve from the global oracle to resolve its immediate uncertainty, this process inherently functions as an \textit{attention mechanism} over the semantic space. Every local nodal expert $f(\mathbf{h}_x; \mathbf{W}_{v_{t+1}}, \mathbf{b}_{v_{t+1}})$ serves as a task-specific probe, dynamically projecting the static, high-dimensional representation into a localized, outcome-discriminative evidence space.

\subsection{Hierarchical Epistemic Capacity and Safe Inference}
To accurately simulate a hierarchical taxonomy, different levels of abstract detail must be absorbed at different depths. As empirically demonstrated in our Toy Experiment (Section~\ref{subsec:toy_iris}), this is mathematically enforced by manipulating the \textit{epistemic capacity} of the local experts. By applying a scaled \textit{Sigmoid} activation to intermediate nodes, we impose a strict "confidence budget", forcing mid-level experts to act as generally broad, hesitant super-categories. Conversely, terminal leaf experts utilize an unbounded \textit{Softplus} activation, allowing them to inject massive evidence to crystallize a sharply defined decision boundary.

A critical architectural imperative in this deployment is the strict prohibition against normalizing these intermediate Dirichlet evidence vectors ($\bm{\alpha}_t$). In standard neural control policies, latent normalization (e.g. via a continuous Softmax) is routinely applied to bound representations. However, in Subjective Logic, the total Dirichlet precision $\alpha_0$ serves as the fundamental mathematical anchor for epistemic uncertainty, defined as $u = K/\alpha_0$ (\ref{eq:uncertainty_Dirichlet_cc}). Artificially normalizing $\bm{\alpha}_t$ forces $\alpha_0$ to a constant, permanently destroying the framework's native ability to trigger out-of-distribution (OOD) safety abstentions. By ensuring that evidence remains strictly raw, additive, and unbounded, NBSR successfully avoids hallucination vulnerabilities, guaranteeing safe epistemic collapse (Corollary~\ref{corollary:1}) as validated in our POMDP Control and Language Modeling experiments.

\subsection{Decoupling of Training and Inference} 
Our NBSR framework explicitly decouples the training and inference regimes to simultaneously maximize \textit{skill acquisition} and \textit{computational efficiency}. In practice, we must balance two critical geometric aspects of the routing tree:

\paragraph{Depth and the ``Lazy Optimization'' Problem.} During training, ideally the dynamic early-exiting mechanism (controlled by the entropy threshold $\eta$) must be intentionally disabled, forcing every sample to traverse the full depth of the graph. Neural networks are inherently ``lazy'' optimizers; they seek the path of least mathematical resistance to minimize the loss function. If early exiting were permitted during the learning phase, intermediate experts would quickly learn \textit{just enough} rudimentary features to push the differential entropy slightly below $\eta$, halting the forward pass prematurely. Consequently, deeper leaf experts would experience \textit{gradient starvation}, rendering the extended architecture functionally dead. By forcing full-depth traversals during training, we guarantee that every level of the graph receives the necessary gradients to acquire specialized skills. At inference time, the $\eta$ threshold is activated, allowing the model to resource-rationally reap the computational savings of these rigorously acquired skills.

\paragraph{Width and the ``Group Project'' Problem.} Conversely, while utilizing the full \textit{depth} is essential during training, utilizing the full \textit{width} (i.e. soft routing) is catastrophic. If the framework utilized standard continuous relaxation where an input traverses every branch and the final update is a probability-weighted average of all terminal leaf experts, the model would fall into \textit{representation collapse} \cite{shazeer2017outrageously}. This dynamic is analogous to a ``group project'' where the final grade is based on a blended average: because the network relies on the ensemble to minimize the loss, individual experts fail to specialize, instead learning generic, overlapping, ``blurry'' features. NBSR resolves this via the Gumbel-Softmax Straight-Through Estimator (STE). By enforcing hard discrete routing during the forward pass, the selected expert is forced to assume 100\% responsibility for the prediction, mathematically guaranteeing absolute specialization. During the backward pass, the continuous relaxation allows the routing policy to evaluate counterfactual probabilities and learn optimal pruning strategies without ever incurring the computational cost of the unselected experts.

\subsection{NBSR as an Markov Decision Process (MDP)}
While NBSR is heavily evaluated on discriminative classification tasks, we can formally characterize its sequential inference process as a discrete-time Markov Decision Process (MDP) \cite{puterman1994mdp} defined by the tuple $(\mathcal{S}, \mathcal{A}, \mathcal{P}, \mathcal{R})$. Here, the state space $\mathcal{S}$ is defined by the augmented belief state $\bm{\alpha}_t$ and the persistent oracle context $\mathbf{h}_x$; the action space $\mathcal{A}$ corresponds to the discrete selection of expert nodes via the router $\pi_\theta(a_t \mid \mathcal{S}_t)$; the transition dynamics $\mathcal{P}(\mathcal{S}_{t+1} \mid \mathcal{S}_t, a_t)$ are governed by the exact Bayesian conjugate update $\bm{\alpha}_{t+1} = \bm{\alpha}_t + \mathbf{e}_t$; and the reward function $\mathcal{R}$ is implicitly defined as the negative differential entropy $-\mathcal{H}(\bm{\alpha}_t)$ plus any associated measurement costs $c(v_t)$. 

In this MDP, the optimal policy $\pi^*$ seeks to maximize the cumulative information gain while minimizing expenditure, effectively treating diagnostic triage and sequential reasoning as an optimal control problem. This MDP formulation reveals that our NBSR architecture is fundamentally a \textit{Policy Network} learning to navigate an epistemic state-space, providing the algorithmic foundation for the Autoregressive State Machine (AR-NBSR) demonstrated in our Active Clinical Triage experiment (Section~\ref{subsec:nbsr_boed}).

\subsection{Modular Skill Acquisition and Unbounded Topologies} 
As the field of artificial intelligence shifts towards modular, agentic systems, architectures must evolve to support composable skill ensembles. In NBSR, each expert network acts as an isolated, highly specialized cognitive skill. The routing tree ensembles these skills both horizontally and sequentially, selectively activating only the specific sub-regions required to process the current input. This mechanism offers a computationally lightweight, Bayesian alternative to standard dense attention mechanisms.

Looking forward, this framework naturally accommodates several rigorous theoretical extensions:
\begin{itemize}
    \item \textbf{Neuro-Symbolic Integration via Product of Experts:} the sequential evidence accumulation natively supports the injection of externally derived rules or heuristics. Future work could calibrate the node Dirichlet distribution via a \textit{Product of Experts}\footnote{PoE is a machine learning framework that models a probability distribution by combining the output from several simpler probability models (experts), this is achieved by multiplying their probability distributions and renormalizing.} (PoE) \cite{hinton2002training, huang2026vjepa}, using a distinct correction term derived from the contextual side information ($\boldsymbol{\varepsilon}_t$).
    \item \textbf{Unbounded Width via Dirichlet Process:} Instead of routing over a fixed $K$-dimensional categorical distribution, routers could be reformulated using a \textit{Dirichlet Process} via a differentiable \textit{stick-breaking mechanism}\footnote{Stick-breaking processes are widely used procedures for constructing random discrete distributions in statistics and machine learning \cite{Gilleyva2026markovstickbreakingprocesses}. Due to their intuitive construction and computational tractability, they have become popular in modern Bayesian nonparametric inference, serving as the foundational mechanism for models such as the Dirichlet and Pitman-Yor processes.} \cite{isharan2001stick}. In this paradigm, a router leaves a residual probability mass to instantiate a newly discovered expert, allowing the horizontal branching factor to grow infinitely and organically if existing experts fail to sufficiently minimize the negative log-likelihood.
    \item \textbf{Unbounded Depth via Information-Gain Splitting:} To dynamically expand vertical depth, we can apply node-splitting rules from traditional decision trees (e.g. CART \cite{breiman1984cart}) to our neural modules. If a terminal expert's average differential entropy $\mathcal{H}(\bm{\alpha}_T)$ plateaus above a critical threshold (which indicates it lacks the capacity to resolve the remaining uncertainty of its assigned data sub-population), the network can dynamically freeze that expert, spawn a new router in its place, and branch into newly initialized child experts to further partition the semantic space.
\end{itemize}

\section{Conclusion} \label{sec:conclusion}

We introduced \textbf{Neural Bayesian Sequential Routing (NBSR)}, a framework for turning neural prediction into a sequential, uncertainty-aware process of evidence acquisition. The central idea is simple but powerful: instead of producing a one-shot softmax decision from a monolithic network, NBSR routes an input through a hierarchy of neural experts, lets each visited expert query a persistent global representation $\mathbf{h}_x$, and accumulates the resulting positive evidence vectors as Dirichlet pseudo-counts. In this way, prediction is no longer merely a class-score computation; it becomes a forward-causal belief trajectory whose intermediate states, uncertainty levels, and expert contributions can be inspected.

The main methodological contribution is the unification of three ingredients that are usually treated separately: conditional neural execution, evidential uncertainty, and sequential decision-making. The Dirichlet-Categorical update $\bm{\alpha}_{t+1}=\bm{\alpha}_t+\mathbf{e}_t$ gives the model an explicit Bayesian belief state over the final outcome space; the Gumbel-Softmax Straight-Through estimator makes discrete, path-dependent routing trainable end-to-end; and the entropy/precision of the evolving Dirichlet state provides a native mechanism for early exiting, abstention, and cost-aware acquisition. This yields a model that is simultaneously predictive, modular, interpretable, and resource-sensitive.

The theoretical analysis clarified what this construction guarantees and what it does not. Under strictly positive evidence extraction, NBSR guarantees monotone growth of total Dirichlet precision and a corresponding upper bound on marginal variance, formalizing the intended ``hypothesis sharpening'' effect. The consistency result shows that, under idealized capacity and optimization assumptions, the expected terminal Dirichlet prediction can recover the Bayes-optimal conditional distribution. The topological analysis further explains how graph depth and width trade bias against variance, while entropy-based early exiting converts this global architectural trade-off into a sample-dependent one.

Empirically, the experiments show that NBSR is most valuable when accuracy, uncertainty, computational selectivity, and interpretability must be considered together. On CIFAR-10, NBSR achieved strong classification performance, reaching $96.74\%$ accuracy with an ECE of $0.015$, while retaining a full routing audit trail and enabling nearly $90\%$ of test samples to exit at Depth 1 without degrading accuracy. The sequential precision and entropy plots also directly validated the predicted belief-sharpening behavior. In structured medical diagnosis, NBSR matched the flat MLP in diagnostic accuracy while providing patient-specific routing traces and path-dependent feature attributions, although tree-based methods remained strong raw-performance baselines on this tabular dataset. In language modeling, NBSR preserved near-baseline perplexity on the controlled grammar task while exposing token-level syntactic/semantic routing decisions, highlighting interpretability and uncertainty structure rather than raw throughput dominance.

The extensions further show that NBSR is not limited to static classification. With recurrent memory, NBSR-Mem matched the black-box CNN-GRU controller on the partially observable navigation task while retaining interpretable policy traces and OOD halting behavior. In the active clinical triage setting, the autoregressive AR-NBSR formulation demonstrated that the same belief-state machinery can act as a cost-aware experimental-design policy: it learned to reduce diagnostic entropy over multiple steps and reached the same predictive asymptote at a substantially lower average cost than the greedy baseline. These results support the view of NBSR as a general architecture for sequential, resource-rational neural inference.

Overall, NBSR suggests a different design principle for modular AI systems: experts should not merely be softly averaged, but selectively queried; uncertainty should not be an auxiliary diagnostic, but part of the state that controls computation; and interpretability should not be reconstructed post hoc, but generated by the forward pass itself. Future work will scale the framework to larger vocabularies and real-world sequential decision tasks, improve calibration of the Dirichlet evidence in high-dimensional settings, and investigate learnable topologies, dynamic experts, neuro-symbolic evidence injection, and nonparametric extensions such as Dirichlet-process routing. These directions may further develop NBSR into a practical foundation for transparent, adaptive, and resource-rational agentic AI.

\newpage
\bibliographystyle{plain}
\bibliography{references}

\appendix

\section{The Dirichlet Distribution} \label{app:dirichlet_dist}

This section provides a formal overview of the Dirichlet distribution, detailing the mathematical properties that make it the foundational engine for the uncertainty-tracking and evidence-accumulating mechanisms in our Bayesian routing framework.

\subsection{Definition and Support}
The Dirichlet distribution is a family of continuous multivariate probability distributions parameterized by a vector of positive reals $\bm{\alpha} = (\alpha_1, \alpha_2, \dots, \alpha_K) \in \mathbb{R}^K_+$. It is defined over the $(K-1)$-dimensional probability simplex, meaning its support consists of $K$-dimensional vectors $\mathbf{p}$ such that $p_k \in (0, 1)$ and $\sum_{k=1}^K p_k = 1$. 

The probability density function (PDF) is given by:
\begin{equation}
    f(\mathbf{p} \mid \bm{\alpha}) = \frac{1}{\text{B}(\bm{\alpha})} \prod_{k=1}^K p_k^{\alpha_k - 1}
\end{equation}
where $\text{B}(\bm{\alpha})$ is the \textit{multivariate Beta function}, expressed in terms of the Gamma function $\Gamma(\cdot)$ as:
\begin{equation}
    \text{B}(\bm{\alpha}) = \frac{\prod_{k=1}^K \Gamma(\alpha_k)}{\Gamma\left(\sum_{k=1}^K \alpha_k\right)}
\end{equation}
\textit{In our framework, the vector $\mathbf{p}$ represents the latent true probability distribution over the $K$ final outcomes, and $\bm{\alpha}_t$ is the concentration parameter which represents the model's intermediate belief state at routing step $t$.}

\subsection{Conjugacy to the Categorical Distribution}
A defining property of the Dirichlet distribution is that it is the conjugate prior for the \textit{Categorical} and \textit{Multinomial} distributions. If the prior distribution over a discrete distribution $\mathbf{p}$ is $\text{Dir}(\bm{\alpha})$, and we observe a set of categorical occurrences represented by a count vector $\mathbf{c}$, the posterior distribution remains a Dirichlet distribution, updated simply by vector addition:
\begin{equation}
    P(\mathbf{p} \mid \mathbf{c}, \bm{\alpha}) = \text{Dir}(\bm{\alpha} + \mathbf{c})
\end{equation}

\textit{Our NBSR framework leverages this conjugacy to perform differentiable Bayesian updating. Instead of discrete counts, our neural experts extract continuous, strictly positive evidence vectors $\mathbf{e}_t$. We treat these continuous vectors as pseudo-counts, allowing the belief state to be updated deterministically at each node via $\bm{\alpha}_{t+1} = \bm{\alpha}_t + \mathbf{e}_t$.}

\subsection{Expectation, Variance, and Covariance (The ``Sharpening'' Effect)}
Let $\alpha_0 = \sum_{k=1}^K \alpha_k$ be the sum of the concentration parameters, often referred to as the precision or inverse-variance parameter. The expected value of the $k$-th component of the distribution $\mathbf{p}$ is strictly proportional to its relative share of the total concentration:
\begin{equation} \label{eq:expected_marginal}
    \mathbb{E}[p_k] = \frac{\alpha_k}{\alpha_k + (\alpha_0 - \alpha_k)} = \frac{\alpha_k}{\alpha_0}
\end{equation}
The variance of each component is given by:
\begin{equation} \label{eq:Dirichlet_var}
    \text{Var}[p_k] = \frac{\alpha_k (\alpha_0 - \alpha_k)}{\alpha_0^2 (\alpha_0 + 1)}
\end{equation}
Furthermore, for any two distinct components $i \neq j$, the covariance is defined as:
\begin{equation} \label{eq:Dirichlet_cov}
    \text{Cov}[p_i, p_j] = \frac{-\alpha_i \alpha_j}{\alpha_0^2 (\alpha_0 + 1)}
\end{equation}
The negative sign in Eq. \ref{eq:Dirichlet_cov} reflects the inherent competition between categories under the simplex constraint $\sum p_k = 1$; as the evidence for one class increases, the evidence for alternatives must necessarily decrease to maintain the sum. Consequently, the covariance matrix of a Dirichlet distribution is singular, reflecting the linear dependence between the components of $\mathbf{p}$.

\textit{The relationship between these moments and the total precision $\alpha_0$ is central to our NBSR framework. As the model traverses deeper into the decision graph and accumulates evidence $\mathbf{e}_t$, $\alpha_0$ strictly increases. Consequently, both the variance and the magnitude of the covariance shrink at a rate of $\mathcal{O}(1/\alpha_0)$, mathematically guaranteeing the progressive ``sharpening'' of the decision boundary where the probability mass concentrates into a single, high-confidence outcome.}

\subsection{Marginal Distributions}
The marginal distribution of each individual component $p_k$ from the Dirichlet-distributed vector $\mathbf{p}$ follows a Beta distribution. Using the total concentration parameter $\alpha_0$, this is defined as:
\begin{equation}
    p_k \sim \text{Beta}(\alpha_k, \alpha_0 - \alpha_k)
\end{equation}

\textit{Isolating the marginal distribution is highly useful during inference, particularly in complex domains such as medical diagnosis. It allows the system to extract the exact confidence intervals for a specific target outcome $k$ (e.g. a specific disease) independently of the remaining $K-1$ alternative classes.}

\subsection{Epistemic Uncertainty and Subjective Logic}
Rooted in the framework of Subjective Logic \cite{josang2016subjective}, the Dirichlet distribution provides a native quantification of confidence that explicitly separates evidence-based certainty from epistemic ignorance. This mathematical bridge was the cornerstone formulation that introduced Evidential Deep Learning to the neural network community \cite{sensoy2018evidential}.

In Subjective Logic, a belief state over $K$ mutually exclusive classes is modeled by assigning a belief mass $b_k \ge 0$ to each class alongside an overall epistemic uncertainty mass $u \ge 0$. These components are strictly constrained to sum to unity:
\begin{equation}
    u + \sum_{k=1}^K b_k = 1
\end{equation}

To map this theoretical belief state to a tractable probabilistic framework, it is tied to the Dirichlet distribution. The concentration parameters $\alpha_k$ are defined as a combination of the accumulated evidence $e_k \ge 0$ and a non-informative prior weight $W$:
\begin{equation}
    \alpha_k = e_k + W a_k
\end{equation}
where $a_k$ is the base rate (prior probability) of class $k$. Assuming a uniform prior across the $K$ classes, we set $a_k = 1/K$ and the prior weight to exactly $W = K$. This simplifies the concentration parameter to $\alpha_k = e_k + 1$. 

Consequently, the total precision $\alpha_0$ (the sum of all concentration parameters) expands to:
\begin{equation}
    \alpha_0 = \sum_{k=1}^K \alpha_k = \sum_{k=1}^K (e_k + 1) = \sum_{k=1}^K e_k + K
\end{equation}

Under this exact mapping, the belief mass for a specific class and the total epistemic uncertainty are defined inversely to the total precision:
\begin{equation} \label{eq:uncertainty_Dirichlet}
    b_k = \frac{e_k}{\alpha_0} \quad \text{and} \quad u = \frac{K}{\alpha_0}
\end{equation}
We can trivially verify that this satisfies the foundational Subjective Logic constraint: $\sum b_k + u = \frac{\sum e_k + K}{\alpha_0} = \frac{\alpha_0}{\alpha_0} = 1$. 

\textit{In our NBSR framework, at the root node, the uniform prior $\bm{\alpha}_0 = \mathbf{1}$ yields a total precision of $\alpha_0 = K$. Because no evidence has been extracted yet ($\sum e_k = 0$), this results in maximum epistemic uncertainty ($u = 1.0$). As the model routes through the graph and accumulates strictly positive evidence vectors ($\mathbf{e}_t > 0$), $\alpha_0$ monotonically increases (Theorem~\ref{theorem:1}). Consequently, $u$ mathematically collapses toward zero. This explicit metric allows the NBSR framework to natively trigger Out-Of-Distribution (OOD) safety abstentions simply by monitoring if $u$ remains dangerously high during inference.}

\subsection{Differential Entropy and Uncertainty Reduction}
To explicitly optimize the router for efficient decision-making, we apply an intermediate penalty based on the differential entropy of the Dirichlet distribution. The differential entropy $\mathcal{H}(\bm{\alpha})$ measures the uncertainty of the belief state and is computed analytically as:
\begin{equation} \label{eq:differential_entropy_Dirichlet}
    \mathcal{H}(\bm{\alpha}) = \log \text{B}(\bm{\alpha}) + (\alpha_0 - K)\psi(\alpha_0) - \sum_{k=1}^K (\alpha_k - 1)\psi(\alpha_k)
\end{equation}
where $\psi(\cdot)$ denotes the digamma function (the logarithmic derivative of the Gamma function).

\textit{In our NBSR framework, at the root node ($t=0$), we initialize the state to $\bm{\alpha}_0 = \mathbf{1}$, which maximizes this entropy equation, representing total structural ignorance. By penalizing $\mathcal{H}(\bm{\alpha}_t)$ during training, we force the routing agents to select paths that rapidly maximize information gain. During inference, this precise entropy measurement serves as the gating mechanism for our dynamic early-exiting policy; routing is halted the moment $\mathcal{H}(\bm{\alpha}_t)$ falls below the confidence threshold $\eta$.}

\subsection{Kullback-Leibler Divergence between Two Dirichlet Distributions}
The Kullback-Leibler (KL) divergence measures the relative entropy or information lost when one Dirichlet distribution, $\text{Dir}(\bm{\beta})$, is used to approximate another, $\text{Dir}(\bm{\alpha})$, over the same $(K-1)$-dimensional simplex. It is calculated as:
\begin{equation}
    D_{\text{KL}}(\text{Dir}(\bm{\alpha}) \parallel \text{Dir}(\bm{\beta})) = \log \frac{\Gamma\left(\sum_{k=1}^K \alpha_k\right)}{\Gamma\left(\sum_{k=1}^K \beta_k\right)} + \sum_{k=1}^K \left[ \log \frac{\Gamma(\beta_k)}{\Gamma(\alpha_k)} + (\alpha_k - \beta_k) \left( \psi(\alpha_k) - \psi\left(\sum_{j=1}^K \alpha_j\right) \right) \right]
\end{equation}

\textit{In our NBSR framework, the KL divergence provides a mathematically rigorous metric to quantify the exact informational impact of a single routing decision. By calculating $D_{\text{KL}}(\text{Dir}(\bm{\alpha}_{t+1}) \parallel \text{Dir}(\bm{\alpha}_t))$, we can measure the precise magnitude of the belief shift caused by the newly extracted evidence $\mathbf{e}_t$.}

\section{The Gumbel Distribution} \label{app:gumbel_dist}

This section provides a formal overview of the Gumbel distribution, detailing its statistical properties and its critical role in the Gumbel-Max trick, which forms the theoretical foundation for discrete stochastic routing.

\subsection{Definition and Support}
The Gumbel distribution (specifically the Type I Extreme Value Distribution) is a continuous probability distribution used to model the distribution of the maximum (or minimum) of a number of samples of various distributions. It has continuous support over the entire real line, $x \in (-\infty, \infty)$. It is parameterized by a location parameter $\mu \in \mathbb{R}$ and a scale parameter $\beta > 0$. 

The probability density function (PDF) of the Gumbel distribution is defined as:
\begin{equation}
    f(x; \mu, \beta) = \frac{1}{\beta} \exp\left( -\frac{x - \mu}{\beta} - \exp\left(-\frac{x - \mu}{\beta}\right) \right)
\end{equation}
The cumulative distribution function (CDF) is given by:
\begin{equation}
    F(x; \mu, \beta) = \exp\left( -\exp\left(-\frac{x - \mu}{\beta}\right) \right)
\end{equation}

In the context of deep learning and our routing framework, we predominantly utilize the Standard Gumbel distribution, where $\mu = 0$ and $\beta = 1$. The CDF thus simplifies to $F(x) = \exp(-\exp(-x))$.

\subsection{Mean and Variance}
The expected value (mean) of a Gumbel-distributed random variable $X \sim \text{Gumbel}(\mu, \beta)$ is:
\begin{equation}
    \mathbb{E}[X] = \mu + \beta \gamma
\end{equation}
where $\gamma \approx 0.5772$ is the Euler-Mascheroni constant. 

The variance is purely a function of the scale parameter:
\begin{equation}
    \text{Var}[X] = \frac{\pi^2}{6} \beta^2
\end{equation}
For the Standard Gumbel distribution, the mean is approximately $0.5772$ and the variance is $\frac{\pi^2}{6} \approx 1.645$.

\subsection{Extreme Value Theory and the Gumbel-Max Trick}
According to the Fisher-Tippett-Gnedenko theorem of extreme value theory \cite{frechet1927loi,fisher1928limiting,vonmises1936distribution,falk1993vonmises,gnedenko1943distribution}, the Gumbel distribution is one of the three\footnote{The other two are the Fréchet distribution, and the Weibull distribution.} possible limiting distributions for the maximum of a sequence of independent and identically distributed (i.i.d.) random variables. 

This property gives rise to the \textit{Gumbel-Max trick} \cite{maddison2014sampling, jang2017categorical,maddison2017concrete}, a mathematically rigorous method for drawing discrete samples from a Categorical distribution. Let $\mathbf{p} = (p_1, p_2, \dots, p_K)$ be the probabilities of a Categorical distribution, and let $g_1, g_2, \dots, g_K$ be i.i.d. samples drawn from a Standard Gumbel distribution. We can draw a discrete categorical sample $z$ (where $z \in \{1, \dots, K\}$) by applying the \textit{argmax} operator to the perturbed log-probabilities:
\begin{equation}
    z = \arg\max_{k \in \{1, \dots, K\}} (\log p_k + g_k)
\end{equation}

\textit{In our NBSR framework, the router at node $v_t$ generates logits $\mathbf{z}_t$. By adding Standard Gumbel noise to these logits before applying the \textit{argmax} operation, we successfully instantiate a stochastic routing policy that samples paths proportionally to the network's confidence, enabling robust exploration during training.}

\section{The Gumbel-Softmax Continuous Relaxation} \label{app:gumbel_softmax_trick}

While the Gumbel-Max trick perfectly models discrete stochastic sampling, the \textit{argmax} operation is fundamentally non-differentiable. Its derivative is zero almost everywhere, which breaks the gradient flow required for backpropagation in deep neural networks. To enable end-to-end training of our hierarchical decision graph, we employ the \textit{Gumbel-Softmax continuous relaxation} \cite{jang2017categorical, maddison2017concrete}.

\subsection{Continuous Approximation of Argmax}

The core insight of the Gumbel-Softmax estimator is to replace the hard, non-differentiable \textit{argmax} operator with the smooth, differentiable \textit{softmax} function. 

Given unnormalized log-probabilities (logits) $\mathbf{h} = (h_1, \dots, h_K)$ and i.i.d. Standard Gumbel noise $\mathbf{g} = (g_1, \dots, g_K)$, the Gumbel-Softmax estimator produces a continuous $K$-dimensional routing vector $\bm{\pi}$, where the $k$-th element is defined as:
\begin{equation} \label{eq:Gumbel-Softmax}
    \pi_k = \frac{\exp((h_k + g_k) / \tau)}{\sum_{i=1}^K \exp((h_i + g_i) / \tau)}
\end{equation}
Here, $\tau > 0$ is the \textit{temperature} hyperparameter. Note that we apply this directly to the logits $\mathbf{h}$ generated by the router network at each node.

\subsection{Temperature Annealing}
The temperature $\tau$ acts as a dial controlling the trade-off between the smoothness of the gradients and the discreteness of the approximation:
\begin{itemize}
    \item \textit{High Temperature ($\tau \to \infty$):} the distribution approaches a uniform continuous distribution, yielding highly stable but uninformative gradients.
    \item \textit{Low Temperature ($\tau \to 0$):} the softmax function sharpens, and the continuous vector $\bm{\pi}$ asymptotically approaches a discrete one-hot vector exactly equivalent to the output of the Gumbel-Max trick. 
\end{itemize}
During training, it is standard practice to anneal $\tau$ from a high initial value to a small strictly positive value. This allows the routing agents to explore the graph smoothly in the early epochs before committing to hard, highly confident discrete paths as training converges.

\subsection{The Straight-Through Estimator (STE)} \label{app:STE}
While the standard Gumbel-Softmax function produces a continuous ``soft'' routing path (e.g. sending $80\%$ of the signal left and $20\%$ right), our framework explicitly requires \textit{hard} conditional execution to maximize computational efficiency (i.e. routing $100\%$ left and computing zero FLOPs on the right). 

To achieve this, we utilize the \textit{Straight-Through Estimator}\footnote{A Straight-Through Estimator (STE) is a technique used in neural networks to train models with non-differentiable operations, such as quantization or binarization, by bypassing the gradient-vanishing problem. It works by using a discrete, non-differentiable function in the forward pass but treating it as a continuous, identity function (or similar mapping) in the backward pass to calculate gradients.} \cite{bengio2013estimating, jang2017categorical} variant of the Gumbel-Softmax trick. During the \textbf{forward pass}, we discretize the continuous output $\bm{\pi}$ using a hard \textit{argmax} to produce a true one-hot routing vector $\bm{\pi}_{\text{hard}}$. This ensures that only a single active path is traversed, and unvisited branches consume zero computational resources. 

During the \textbf{backward pass}, we bypass the non-differentiable \textit{argmax} step and route the gradients directly through the continuous Gumbel-Softmax output $\bm{\pi}$. Mathematically, this is implemented in modern autograd engines via the following detach operation:
\begin{equation} \label{eq:Gumbel-Softmax-detach}
    \bm{\pi}_{\text{out}} = (\bm{\pi}_{\text{hard}} - \bm{\pi})\text{.detach()} + \bm{\pi}
\end{equation}
This formulation guarantees that the forward pass computation exactly mirrors the discrete logical path of the Bayesian tree, while the backward pass receives dense, well-behaved gradients to update the routing parameters $\theta_{v_t}$ and the global feature extractor\footnote{The STE essentially creates a "dual-path" computational graph. In the forward pass, the operation behaves like a step function (discrete), but in the backward pass, it behaves like an identity function or a smooth sigmoid (continuous), allowing the "signal" of the error to reach the weights of the router. This mechanism is what allows our Bayesian DAG to maintain the efficiency of a hard decision tree while still being trainable via standard backpropagation.}.

This mechanism cleanly decouples the gradient flow between the specialized experts and the routing gates. Because unselected experts are multiplied by exactly zero during the forward pass (or bypassed entirely via sparse conditional logic), they do not contribute to the final loss; consequently, they receive zero gradients, preserving their highly specialized weights and preventing representation collapse. Conversely, because the STE routes the backward signal through the continuous distribution $\bm{\pi}$, the loss gradient flows back to \textit{all} branches of the router's logits. This allows the routing policy to evaluate the counterfactual probabilities of unselected branches and update its distribution accordingly, enabling the network to learn optimal pruning strategies without ever incurring the forward or backward computational cost of executing the unselected experts.

\section{Derivation of the Generalized Bias-Variance Decomposition} \label{app:bias_variance}

In this section, we detail the derivation of the bias-variance decomposition for the NLL risk, i.e. Eq.\ref{eq:bias_variance_trade_off} in Section \ref{subsec:bias_variance_trade_off}. 

Let $x$ be a given input and $y$ be the corresponding target. We denote the true conditional data distribution as $P^*(y|x)$. The predictive distribution produced by our neural routing graph, trained on a specific finite dataset $\mathcal{D}$, is denoted as $P_{\mathcal{G}}(y|x; \mathcal{D})$. 

The expected NLL risk for a specific input $x$, evaluated over all possible training datasets $\mathcal{D}$ and all possible targets drawn from the true distribution, is defined as:
\begin{equation} \label{eq:app_risk_base}
    \mathcal{R}(x) = \mathbb{E}_{\mathcal{D}} \Big[ \mathbb{E}_{y \sim P^*(\cdot|x)} \big[ -\log P_{\mathcal{G}}(y|x; \mathcal{D}) \big] \Big]
\end{equation}

To analyze the sources of error, we introduce the \textit{expected predictive distribution} $\bar{P}_{\mathcal{G}}(y|x)$, which represents the model's average prediction over all possible datasets. We can inject the true distribution $P^*(y|x)$ and the expected model distribution $\bar{P}_{\mathcal{G}}(y|x)$ into the logarithm via addition and subtraction:
\begin{align}
    -\log P_{\mathcal{G}}(y|x; \mathcal{D}) &= -\log P^*(y|x) \nonumber \\
    &\quad + \big( \log P^*(y|x) - \log \bar{P}_{\mathcal{G}}(y|x) \big) \nonumber \\
    &\quad + \big( \log \bar{P}_{\mathcal{G}}(y|x) - \log P_{\mathcal{G}}(y|x; \mathcal{D}) \big) \label{eq:app_algebraic_split}
\end{align}

We now substitute Eq.\ref{eq:app_algebraic_split} back into the inner expectation over $y \sim P^*(\cdot|x)$ from Eq.\ref{eq:app_risk_base}. Because the expectation is a linear operator, we can evaluate each of the three terms separately:

\textbf{1. The Irreducible Noise:}
the expectation of the first term depends entirely on the true distribution and represents \textit{the inherent uncertainty in the data generation process} (the Shannon entropy):
\begin{equation}
    \mathbb{E}_{y \sim P^*} \big[ -\log P^*(y|x) \big] = \mathcal{H}(P^*)
\end{equation}

\textbf{2. The Bias:}
the expectation of the second term measures the distance between the true distribution and the model's average prediction. This is precisely the Kullback-Leibler (KL) divergence:
\begin{equation}
    \mathbb{E}_{y \sim P^*} \big[ \log P^*(y|x) - \log \bar{P}_{\mathcal{G}}(y|x) \big] = D_{\text{KL}} \big( P^* \| \bar{P}_{\mathcal{G}} \big)
\end{equation}

\textbf{3. The Variance:}
The expectation of the third term captures how much the predictions from models trained on specific datasets fluctuate around the average model prediction:
\begin{equation}
    \mathbb{E}_{y \sim P^*} \big[ \log \bar{P}_{\mathcal{G}}(y|x) - \log P_{\mathcal{G}}(y|x; \mathcal{D}) \big] = \sum_y P^*(y|x) \log \frac{\bar{P}_{\mathcal{G}}(y|x)}{P_{\mathcal{G}}(y|x; \mathcal{D})}
\end{equation}
In standard generalized bias-variance decompositions for likelihood estimators, it is common practice to approximate this final term by replacing the true distribution $P^*(y|x)$ with the expected model distribution $\bar{P}_{\mathcal{G}}(y|x)$. This substitution decouples the model's internal variance from the external ground-truth distribution, yielding a pure measure of model instability evaluated via KL divergence:
\begin{equation}
    \approx \sum_y \bar{P}_{\mathcal{G}}(y|x) \log \frac{\bar{P}_{\mathcal{G}}(y|x)}{P_{\mathcal{G}}(y|x; \mathcal{D})} = D_{\text{KL}} \big( \bar{P}_{\mathcal{G}} \| P_{\mathcal{G}}(\cdot; \mathcal{D}) \big)
\end{equation}

\textbf{Final Decomposition:}
applying the outer expectation over all datasets $\mathbb{E}_{\mathcal{D}}[\cdot]$ to the aggregated terms yields the final formal decomposition of the expected risk:
\begin{equation} \label{eq:app_final_decomp}
    \mathbb{E}_{\mathcal{D}} \Big[ \mathbb{E}_{y \sim P^*(\cdot|x)} \big[ -\log P_{\mathcal{G}}(y|x; \mathcal{D}) \big] \Big] \approx \underbrace{\mathcal{H}(P^*)}_{\text{Irreducible Noise}} + \underbrace{D_{\text{KL}} \big( P^* \| \bar{P}_{\mathcal{G}} \big)}_{\text{Bias}} + \underbrace{\mathbb{E}_{\mathcal{D}} \big[ D_{\text{KL}} \big( \bar{P}_{\mathcal{G}} \| P_{\mathcal{G}}(\cdot; \mathcal{D}) \big) \big]}_{\text{Variance}}
\end{equation}

This derivation explicitly maps the negative log-likelihood objective of the routing graph to the structural constraints ($L$ and $\bar{W}$) discussed in Section \ref{subsec:bias_variance_trade_off}.

\section{Difference Between NBSR and Classical Decision Trees}
\label{app:nbsr_vs_classical_trees}

A fundamental distinction must be drawn between our NBSR framework and classical hierarchical models, such as Classification and Regression Trees (CART) \cite{breiman1984cart}, Soft Decision Trees (SDTs) or hierarchical Mixtures of Experts (MoEs) \cite{jordan1993HME}. The divergence lies in four critical architectural paradigms: the functional role of intermediate nodes, the mathematical viability of soft routing, the semantic nature of the input space, and the fundamental mechanism of information aggregation.

\paragraph{1. The Role of Intermediate Nodes: Routing Gates vs. Sequential Evidence Extractors}
In classical differentiable decision trees and hierarchical MoEs, the graph is composed of two strictly distinct types of nodes:
\begin{itemize}
    \item \textbf{Internal Nodes (Routers):} these nodes do not produce any classification output. They act purely as routing gates that calculate transitional probabilities (e.g. $0.7$ probability of routing left, $0.3$ right).
    \item \textbf{Terminal Leaves (Experts):} these are the only nodes authorized to make a decision or output a class prediction.
\end{itemize}
In contrast, NBSR abandons this dichotomy. In our framework, each NBSR tree node, whether internal or terminal, consists of both a router and an evidence extractor (see Diagram~\ref{fig:NBSR_pipeline}). Consequently, intermediate nodes act as \textit{both routers and experts}. When a sample arrives at an intermediate node (e.g. the Depth 1 \texttt{animal\_mid} node in the CIFAR-10 classification task), the model does not merely calculate the probability of traversing to the \texttt{Pets} or \texttt{Wildlife} leaves. Instead, it immediately queries the global knowledge oracle $\mathbf{h}_x$ to extract a full $K$-dimensional evidence vector (Eq.\ref{eq:evidence_extraction_activation}):
\begin{equation*}
    \mathbf{e}_{\text{mid}} = \text{Activation}(\mathbf{W}_{\text{mid}} \mathbf{h}_x + \mathbf{b}_{\text{mid}})
\end{equation*}
This evidence is directly injected into the Dirichlet prior ($\bm{\alpha}_1 = \bm{\alpha}_0 + \mathbf{e}_{\text{mid}}$). By updating the belief state \textit{at} the intermediate nodes before passing the computational flow deeper into the tree, the NBSR framework allows the decision to sequentially sharpen. This provides the mathematical foundation for dynamic early exiting, as the model can evaluate confidence at $t=1$ without needing to reach a terminal leaf.

\paragraph{2. The Failure of Soft Routing in Sequential Frameworks}
The sequential nature of NBSR fundamentally breaks the standard "soft routing" paradigm used by classical neural trees. 

In a classical hierarchical MoE, an image traverses every branch of the tree simultaneously. The network calculates a \textit{Path Probability} for every terminal leaf by taking the product of the routing decisions along that branch. For example:
\begin{equation}
    \pi_{\text{pets}} = P(\text{Animal} \mid \text{Root}) \times P(\text{Pets} \mid \text{Animal})
\end{equation}
The final prediction is then calculated by multiplying every terminal leaf's output by its respective path probability and summing them together to form a weighted continuous average:
\begin{equation}
    \mathbf{y}_{\text{final}} = \sum_{\text{all leaves}} \pi_{\text{leaf}} \cdot \mathbf{y}_{\text{leaf}}
\end{equation}

While this probability-weighted average of ensembles works for classical trees \textit{where only leaves produce outputs}, it fails catastrophically in the NBSR framework. Because NBSR extracts evidence at \textit{multiple depths}, applying standard soft routing would result in a severe entanglement of intermediate and leaf evidence:
\begin{equation}
    \bm{\alpha}_T = \bm{\alpha}_0 + \sum_{\text{mid\_nodes}} \pi_{\text{mid}} \mathbf{e}_{\text{mid}} + \sum_{\text{leaf\_nodes}} \pi_{\text{leaf}} \mathbf{e}_{\text{leaf}}
\end{equation}
If the router were allowed to output continuous probabilities (e.g. a 30\%/30\%/40\% split), the final belief state would become a blended average of intermediate evidence layered on top of a blended average of leaf evidence. This compounding continuous relaxation inevitably leads to \textit{representation collapse}. The local experts would fail to specialize, learning generic, overlapping features because the network relies on the blended ensemble to minimize the loss.

By enforcing \textbf{hard discrete routing} via the Gumbel-Softmax Straight-Through Estimator (STE), NBSR sidesteps this failure mode. The STE forces $\pi \in \{0, 1\}$ (i.e. 'hard routing'), ensuring that exactly one clean, sequential path of evidence (e.g. $\mathbf{e}_{\text{animal\_mid}} + \mathbf{e}_{\text{pets}}$) is extracted and added to the Dirichlet prior, thereby guaranteeing absolute expert specialization while retaining end-to-end differentiability.

\paragraph{3. The Input Space: Raw Partitioning vs. Semantic Oracle Querying}
In classical hierarchical models like CART \cite{breiman1984cart} and standard HMEs \cite{jordan1993HME, waterhouse1995HME}, both the routing decisions and the expert predictions are typically computed directly from the raw input features $x$ (e.g., via scalar thresholds or simple linear projections). The tree explicitly partitions this raw input space. 

In contrast, NBSR introduces a \textit{Persistent Global Knowledge Oracle} ($\mathbf{h}_x$). A shared deep neural backbone first maps the raw input into a dense, high-dimensional semantic space. This oracle is then broadcast to the entire graph - it's presented to all nodes at all depths. Consequently, the local experts in NBSR do not partition raw pixels or isolated tabular columns; instead, they act as \textit{active attention mechanisms}, applying specialized learned filters ($\mathbf{W}_{v}$) to dynamically extract high-level conceptual evidence directly from the shared global representation at various stages of the reasoning process.

\paragraph{4. Aggregation Mechanism and Uncertainty Quantification: Multiplicative Probabilities vs. Additive Evidence}
In classical HMEs \cite{jordan1993HME, waterhouse1995HME}, the final prediction is formed by \textit{multiplying} gating probabilities along the path from root to leaf to form a path weight, which then weights the expert's probability distribution. Because this framework operates entirely within a zero-sum probability space ($\sum p = 1$), it forces the network to distribute a total probability mass of 1 across all outcomes, stripping the model of any native ability to quantify structural ignorance. 

In contrast, NBSR operates in an \textit{evidential} space. Instead of multiplying probabilities, NBSR \textit{adds} continuous evidence vectors to a Dirichlet belief state ($\bm{\alpha}_{t+1} = \bm{\alpha}_t + \mathbf{e}_t$). This additive accumulation breaks the zero-sum constraint during the intermediate steps, enabling the model to physically inflate the total volume of evidence (the Dirichlet precision, $\alpha_0$). Consequently, NBSR natively provides rigorous Bayesian uncertainty quantification ($u = K / \alpha_0$). This allows the NBSR framework to explicitly express ``I don't know'' when encountering out-of-distribution data - a critical safety capability fundamentally absent in classical HMEs and traditional decision trees.

\section{Further Results for CIFAR-10 Classification}
\label{app:CIFAR10_further_results}

\subsection{Computing Environment and Experimental Setup}
\paragraph{Computing Environment.} All experiments, training regimes, and inference benchmarking were conducted on a high-performance Linux server (x86\_64 architecture) equipped with a 24-core (48 logical threads) AMD EPYC 9B45 processor and 190 GB of system memory. GPU acceleration was provided by a single NVIDIA RTX PRO 6000 Blackwell Server Edition with 96 GB of VRAM, running CUDA version 13.0 and NVIDIA Driver 580.82.07. To ensure strict reproducibility across all baselines, identical deterministic seeds (Seed = 111) were applied to Python's `random`, `NumPy`, and `PyTorch` libraries, alongside the enforcement of deterministic CuDNN backend algorithms.

\paragraph{Network Architectures.} The \textit{visual feature extraction backbone} for all models is an unfrozen ResNet-18 \cite{he2016deep}, initialized with ImageNet weights. The network is truncated prior to its final classification head, yielding a 512-dimensional global feature vector $\mathbf{h}_x$ for each input image. 

In our Neural Bayesian Sequential Routing (NBSR) framework, the routers and experts are instantiated as follows:
\begin{itemize}[label=-]
    \item \textit{Routers:} each internal node acts as a gating mechanism parameterized by a \textit{Multi-Layer Perceptron} (MLP). The router concatenates the visual features $\mathbf{h}_x$ with the current Dirichlet concentration state $\bm{\alpha}_t$, passing them through a 128-unit hidden layer with a \textit{ReLU activation}, followed by a \textit{linear projection} to the number of branching paths. The output is sampled via a continuous \textit{Gumbel-Softmax approximation} during training.
    
    \item \textit{Local Experts:} each expert module (e.g. \texttt{animal\_mid}, \texttt{pets}, \texttt{wildlife}) consists of a single \textit{linear transformation} ($\mathbb{R}^{512 \to 10}$) followed by a \textit{Softplus activation} function. The Softplus nonlinearity is strictly required to ensure that the extracted evidence vector remains positive ($e_{k} > 0$), preserving the mathematical integrity of the Dirichlet parameter updates.
\end{itemize}

\paragraph{Hyperparameters and Optimization.} All models were trained for a total of 150 epochs with a batch size of 128. We optimized the networks using the \textit{Adam optimizer} \cite{kingma2017Adam} with an initial learning rate of $2 \times 10^{-4}$, decayed by a factor of $\gamma = 0.5$ every 45 epochs via a StepLR scheduler. Gradients were clipped at a maximum norm of $1.0$ to ensure stability. 

For the data pipeline, the CIFAR-10 images were upscaled to $224 \times 224$ to match the expected resolution of the ImageNet-pretrained ResNet backbone. Standard spatial augmentations were applied during training, including \textit{random horizontal flipping}, \textit{random rotation} ($\pm 15^{\circ}$), and \textit{color jittering} (brightness $= 0.2$), followed by standard \textit{channel-wise normalization}. 

For the NBSR specific hyperparameters, the structural entropy penalty weight was set to $\lambda = 10^{-3}$. To balance early-stage structural exploration with late-stage discrete routing, the Gumbel-Softmax temperature $\tau$ was annealed exponentially per epoch from an initial value of $1.0$ down to a minimum threshold of $0.1$ at a decay rate of $0.97$.

\subsection{Evaluation Metric: Expected Calibration Error (ECE)}
The Expected Calibration Error (ECE) is a standard metric used to quantify how well a model's predicted confidence aligns with its actual empirical accuracy. Geometrically, it measures the aggregate absolute deviation from perfect calibration. To calculate the ECE, the continuous probability space $[0, 1]$ is partitioned into $M$ equally spaced bins. Each test sample is assigned to a specific bin based on its maximum predicted confidence score. The final error is then calculated as the weighted average of the absolute difference between the true accuracy and the mean predicted confidence within each bin:
$$ \text{ECE} = \sum_{m=1}^M \frac{|B_m|}{N} \left| \text{acc}(B_m) - \text{conf}(B_m) \right| $$
where $N$ is the total number of evaluated samples, $B_m$ represents the set of samples whose predicted confidence falls into bin $m$, $|B_m|$ is the total number of samples in that bin, $\text{acc}(B_m)$ is the empirical accuracy of those specific samples, and $\text{conf}(B_m)$ is their average predicted confidence. 

In our evaluations, we partition the probability space into $M=10$ bins. For the standard deterministic baselines (Flat ResNet-18 and Sparse MoE), the \textit{maximum softmax probabilities} are used as the \textit{confidence scores}. For our NBSR framework, the structurally derived Bayesian expected marginals $\mathbb{E}[p_k]$ (Eq.\ref{eq:expected_marginal_cc}) are passed directly into this algorithm. As summarized in the main text (Table.\ref{tab:results_cifar}), this evaluates to an ECE of 0.015, mathematically demonstrating that the evidence-based uncertainty of NBSR aligns much closer to empirical reality than the overconfident deterministic baselines.

\subsection{Baseline Training Dynamics}
To complement the quantitative results presented in the main text (Section.\ref{subsec:cifar10_experiment}), we provide the complete end-to-end training loss and test accuracy trajectories for the \textit{Flat ResNet-18} (Fig.~\ref{fig:flat_resnet_curves}) and the \textit{Sparse MoE} (Fig.~\ref{fig:sparse_moe_curves}) baselines over the 150-epoch training regime. 

Notably, the Sparse MoE exhibits distinct variance and oscillation in its test accuracy, particularly in the later stages of training. This instability is a classic symptom of the continuous soft-routing gates competing against the classification experts, further highlighting the optimization friction introduced by standard MoE architectures on semantic classification tasks.

\begin{figure}[htbp]
    \centering
    \includegraphics[width=0.9\columnwidth]{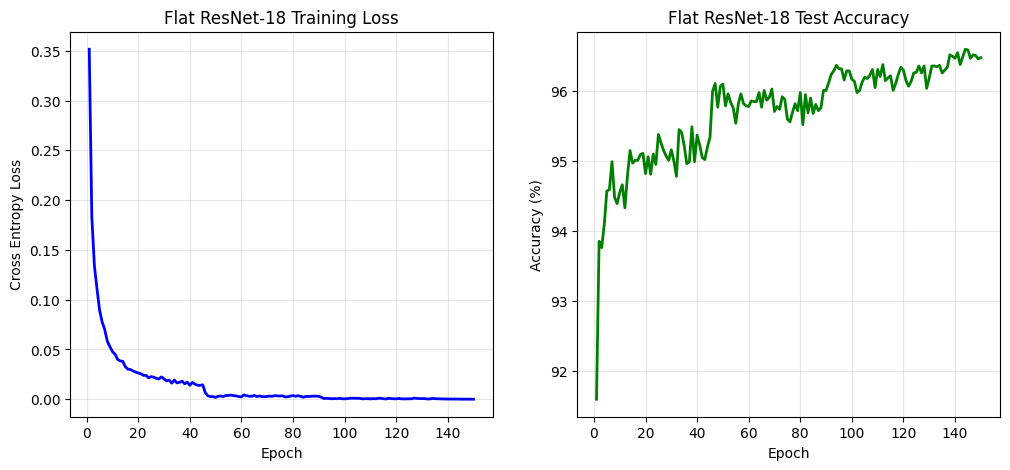}
    \caption{Training loss and test accuracy dynamics for the standard Flat ResNet-18 baseline on the CIFAR-10 dataset.}
    \label{fig:flat_resnet_curves}
\end{figure}

\begin{figure}[htbp]
    \centering
    \includegraphics[width=0.9\columnwidth]{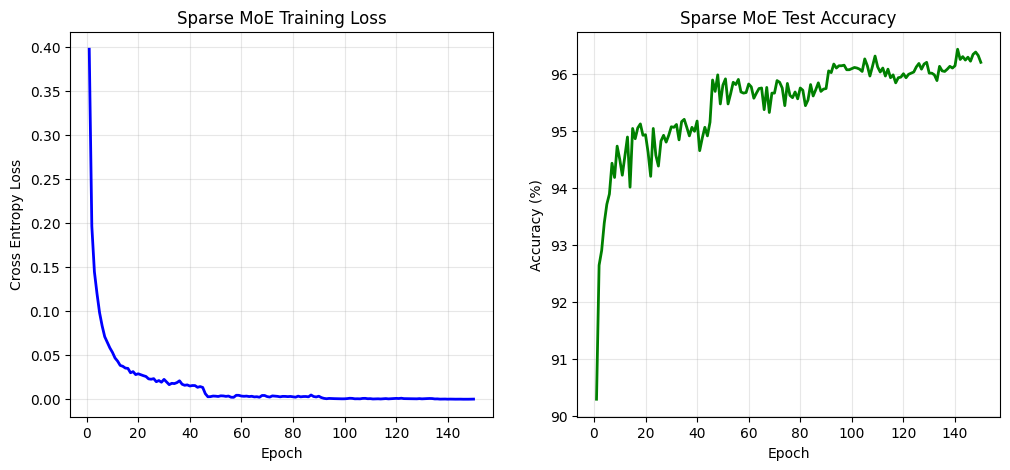}
    \caption{Training loss and test accuracy dynamics for the Sparse MoE (Soft Routing) baseline on the CIFAR-10 dataset. The noticeable jitter in the accuracy curve illustrates the optimization friction inherent in continuous soft-routing mechanisms.}
    \label{fig:sparse_moe_curves}
\end{figure}

\section{Experimental Details: Structured Medical Diagnosis}
\label{app:medical_details}
 
Here we provide the technical details, network architectures, and hyperparameters required to reproduce the structured tabular clinical experiments presented in Section.\ref{subsec:medical_diagnosis}.

\subsection{Computing Environment}
All experiments, including baseline training and NBSR evaluation, were executed on a cloud-based virtual machine instance (\textit{Google Colab}) to measure CPU-bound inference efficiency. The hardware specifications are as follows:
\begin{itemize}
    \item \textit{Processor:} Intel(R) Xeon(R) CPU @ 2.20GHz (1 Physical Core, 2 Logical Threads)
    \item \textit{System Memory (RAM):} 13.61 GB
    \item \textit{Hardware Accelerator:} none (CPU-only execution)
    \item \textit{Software Stack:} Python 3.12, PyTorch 2.10.0, and XGBoost 2.0.
\end{itemize}

\subsection{Dataset and Preprocessing}
We utilized the Kaggle Disease Symptom Prediction dataset\footnote{\url{https://www.kaggle.com/datasets/kaushil268/disease-prediction-using-machine-learning}}, which maps patient profiles to specific clinical endpoints, which maps patient profiles to specific clinical endpoints. 
\begin{itemize}
    \item \textit{Input Features ($X$):} 132 binary indicators representing the presence (1) or absence (0) of specific clinical symptoms (e.g. `skin\_rash`, `joint\_pain`).
    \item \textit{Target Classes ($Y$):} 41 discrete disease categories.
    \item \textit{Clinical Noise Injection:} real-world Electronic Health Records (EHR) are inherently noisy due to entry errors, missing tests, and ambiguous patient reporting. To simulate this environment and prevent the models from overfitting to a perfectly deterministic toy dataset, we applied a uniform 5\% noise mask to both the training and testing sets. Specifically, for 5\% of the feature matrix $X$, the binary symptom states were inverted ($X_{i,j} \leftarrow 1 - X_{i,j}$).
\end{itemize}

\subsection{Network Architectures}

\paragraph{Flat MLP Baseline.} 
The standard monolithic deep learning baseline was constructed using a sequential stack of dense layers:
\begin{enumerate}
    \item Linear(132 $\to$ 128), ReLU activation
    \item Linear(128 $\to$ 128), ReLU activation
    \item Linear(128 $\to$ 41), Softmax activation
\end{enumerate}

\paragraph{NBSR Architecture.} 
The NBSR framework utilized a maximum depth of 2 (Root $\to$ Mid $\to$ Leaf) with a uniform branching factor of 4. The modular components were parameterized as follows:
\begin{itemize}
    \item \textit{Global Oracle Backbone:} identical to the MLP hidden layers to ensure fair semantic capacity: Linear(132 $\to$ 128) $\to$ ReLU $\to$ Linear(128 $\to$ 128) $\to$ ReLU.
    \item \textit{Router Networks:} MLP processing the concatenated feature state $\mathbf{h}_x$ and current belief $\bm{\alpha}_t$: Linear(169 $\to$ 64) $\to$ ReLU $\to$ Linear(64 $\to$ 4), followed by a Gumbel-Softmax activation.
    \item \textit{Expert Networks (Evidence Extractors):} linear(128 $\to$ 41) followed by a Softplus activation to ensure strictly non-negative evidence vectors $\mathbf{e}_t > 0$.
\end{itemize}

\subsection{Optimization and Hyperparameters}
Both the Flat MLP and NBSR networks were trained using \textit{Adam optimizer} \cite{kingma2017Adam} with a learning rate of $1 \times 10^{-3}$ and a batch size of 128 for 40 epochs. A global random seed of 111 was enforced across data splits, noise injection, and network initializations.

For the XGBoost baseline, we utilized the default `XGBClassifier` parameters with `eval\_metric='mlogloss'` and `use\_label\_encoder=False`.

Specific NBSR hyperparameters were set as follows:
\begin{itemize}
    \item \textit{Routing Temperature ($\tau$):} the Gumbel-Softmax temperature was exponentially annealed per epoch to smoothly transition from soft exploration to hard, discrete routing: $\tau = \max(0.1, 1.0 \cdot 0.9^{\text{epoch}})$.
    \item \textit{Entropy Penalty ($\lambda$):} to prevent artificial overconfidence on the heavily perturbed noisy clinical data, the explicit entropy minimization penalty was disabled ($\lambda = 0.0$). The network relied purely on the Negative Log-Likelihood (NLL) of the expected Dirichlet probability.
    \item \textit{Early Exiting Threshold ($\eta$):} during inference, the ``Deep'' configuration bypassed early exiting (threshold evaluated as `None`), forcing the full architectural depth. The ``Fast'' configuration utilized a differential entropy threshold of $\eta = -100.0$ to dynamically truncate routing on unambiguous clinical presentations.
\end{itemize}

\section{Experimental Details: Language Modeling}
\label{app:lm_details}

Here we provide the technical details, network architectures, and hyperparameters required to reproduce the synthetic language modeling and contextual disambiguation experiments presented in Section.\ref{sec:language_modelling}. 

\subsection{Computing Environment}
All sequence modeling experiments were deliberately executed on a cloud-based virtual machine instance (Google Colab). The hardware specifications are as follows:
\begin{itemize}
    \item \textit{Processor:} Intel(R) Xeon(R) CPU @ 2.20GHz (1 Physical Core, 2 Logical Threads)
    \item \textit{System Memory (RAM):} 13.61 GB
    \item \textit{Hardware Accelerator:} none (CPU-only execution)
    \item \textit{Software Stack:} Python 3.12, PyTorch 2.1.0
\end{itemize}

\subsection{Dataset and Preprocessing}
To perfectly isolate syntactic reasoning from semantic world-knowledge, we constructed a synthetic contextual disambiguation corpus.
\begin{itemize}
    \item \textit{Vocabulary ($|\mathcal{V}|$):} 65 discrete tokens, strictly partitioned into 4 special tokens (\texttt{<pad>}, \texttt{<bos>}, \texttt{<eos>}, \texttt{<unk>}), 19 function words, 14 modifiers, 14 abstract nouns, and 14 concrete entities.
    \item \textit{Generative Templates:} sequences were procedurally generated using uniform random sampling across 5 predefined syntactic templates (e.g. \texttt{['function', 'concrete', 'function', 'function', 'abstract']}).
    \item \textit{Data Splits:} The dataset was split into 10,000 training sequences and 2,000 held-out test sequences. All sequences were padded to a maximum length of 6 tokens.
\end{itemize}

\subsection{Network Architectures}

\paragraph{Shared Causal Transformer Backbone.} 
All three evaluated models (Standard, MoE, NBSR) share an identical Transformer backbone to ensure fair semantic capacity. It consists of an embedding layer ($d_{model} = 64$) with positional encoding, followed by a 2-layer causal Transformer Encoder utilizing 4 attention heads and a feed-forward dimension of 256.

\paragraph{Standard Transformer Baseline.} 
The monolithic baseline processes the backbone's contextual representation $\mathbf{h}_x$ through a standard dense linear projection: Linear(64 $\to$ 65), followed by a standard Log-Softmax activation.

\paragraph{Transformer MoE Baseline.} 
The MoE baseline utilizes a discrete Top-1 routing head without evidence accumulation. It consists of a router network—Linear(64 $\to$ 4) with Softmax, and 4 independent expert networks parameterized as Linear(64 $\to$ 65).

\paragraph{NBSR Architecture.} 
The NBSR framework utilizes a hierarchical Directed Acyclic Graph (DAG) with a maximum depth of 2. The modular components were parameterized as follows:
\begin{itemize}
    \item \textit{Root Router:} an MLP processing the concatenated state $\mathbf{h}_x$ and the uniform Dirichlet prior $\bm{\alpha}_0$: Linear(129 $\to$ 32) $\to$ ReLU $\to$ Linear(32 $\to$ 2), followed by a Gumbel-Softmax activation.
    \item \textit{Mid-Level Experts (Syntax vs. Semantics):} 2 distinct experts, each structured as Linear(64 $\to$ 64) $\to$ ReLU $\to$ Linear(64 $\to$ 65), followed by a \textit{Softplus} activation to ensure strictly non-negative evidence vectors $\mathbf{e}_t > 0$.
    \item \textit{Leaf Experts (PoS Categories):} 4 terminal experts mapping to the distinct linguistic sub-spaces (Function, Modifier, Abstract, Concrete). These are identical in structure to the mid-level experts.
\end{itemize}

\subsection{Optimization and Hyperparameters}
All networks were trained using the \textit{Adam optimizer} \cite{kingma2017Adam} with a learning rate of $1 \times 10^{-3}$ and a batch size of 64 for 5 epochs. A global random seed of 42 was enforced across data generation and network initializations.

Specific NBSR hyperparameters were set as follows:
\begin{itemize}
    \item \textit{Routing Temperature ($\tau$):} the Gumbel-Softmax temperature was exponentially annealed per epoch to smoothly transition from soft exploration to hard, discrete routing: $\tau = \max(0.1, 1.0 \cdot 0.9^{\text{epoch}})$.
    \item \textit{Early Exiting Threshold ($\eta$):} during inference, the ``Deep'' configuration bypassed early exiting (threshold evaluated as `None`), forcing the full architectural depth. The ``Fast'' configuration utilized a differential entropy threshold of $\eta = 1.0$ to dynamically halt routing on predictable function words.
    \item \textit{OOD Abstention Threshold ($\tau_{conf}$):} the critical confidence threshold for evaluating the epistemic uncertainty of the total Dirichlet precision on out-of-distribution prompts was defined mathematically as $\tau_{conf} = 1.5 \times |\mathcal{V}| = 97.5$.
\end{itemize}

\section{Experimental Setup for the POMDP Navigation Task}
\label{appendix:control_setup}

Here we provide detailed specifications for the experimental environment, dataset generation, model architectures, and training hyperparameters utilized in the sequential control and planning task (Section \ref{subsec:control}).

\subsection{Computing Environment}
All experiments, training, and evaluations were conducted on a Google Colab virtual instance. To ensure reproducibility and demonstrate the lightweight nature of the NBSR-Mem architecture, the experiments were executed entirely on the CPU without hardware acceleration. The environment specifications are as follows:
\begin{itemize}
    \item \textit{CPU:} AMD EPYC 7B12 (1 physical core, 2 logical threads), 2.25 GHz base clock.
    \item \textit{Memory:} 13.61 GB RAM.
    \item \textit{Software Stack:} Python 3.12.13, PyTorch 2.10.0+cpu.
\end{itemize}

\subsection{POMDP Dataset Generation}
The navigation task was framed as a classification problem via Behavioral Cloning. We procedurally generated a synthetic dataset of optimal state-action sequences representing a Partially Observable Markov Decision Process (POMDP).

\paragraph{State Representation:} each state observation $o_t$ is a $4 \times 5 \times 5$ spatial tensor representing a local grid centered on the agent. The 4 channels encode distinct environmental features (e.g. floor, agent position, left-turn cues, right-turn cues).

\paragraph{Trajectory Structure:} each sequence has a fixed length of $T=4$ timesteps, structured to enforce reliance on temporal memory:
\begin{enumerate}
    \item \textit{$t=0$ (Memory Cue):} the agent receives a cue dictating the ultimate turning direction. A random variable determines if the sequence is ``simple'' (agent moves straight indefinitely) or ``complex'' (agent must evade a wall at the end). For complex sequences, a target direction (Left or Right) is selected, and a corresponding visual cue is instantiated in the spatial tensor.
    \item \textit{$t=1, 2$ (Spatial Amnesia Zone):} the agent traverses a featureless corridor. The optimal action is 0 (Cruising/Straight). The visual cue from $t=0$ is no longer present in the observation.
    \item \textit{$t=3$ (Intersection):} if the sequence is complex, a wall appears directly ahead of the agent. The optimal action is the target direction designated at $t=0$. If the sequence is simple, the corridor remains empty, and the optimal action remains 0.
\end{enumerate}

\paragraph{Out-of-Distribution (OOD) Generation:} To test epistemic safety, an ``Alien Hazard'' was generated by injecting a high-magnitude activation into a specific coordinate of the fourth visual channel - a feature entirely absent from the training distribution.

\paragraph{Dataset Splits:} The generated dataset comprised 5,000 training sequences and 1,000 test sequences. The probability of a sequence being ``complex'' (requiring a turn) was set to 70\% to heavily penalize purely reactive models.

\subsection{Model Architectures}

All models share a common \textit{Global Knowledge Oracle} (CNN Backbone) to ensure a fair comparison of routing and memory capabilities.

\paragraph{CNN Backbone:}
The feature extractor consists of two convolutional layers followed by a linear projection:
\begin{itemize}
    \item \texttt{Conv2d}: 4 input channels, 32 output channels, kernel size $3 \times 3$, padding 1.
    \item \texttt{Conv2d}: 32 input channels, 64 output channels, kernel size $3 \times 3$, padding 0.
    \item \texttt{Linear}: 576 ($64 \times 3 \times 3$) to 128 dimensions.
    \item \texttt{LayerNorm}: applied to the 128-dimensional output without elementwise affine parameters to strictly anchor the latent space.
    \item \textit{Activations:} ReLU is applied after all convolutional and linear layers. \textit{Note: Biases were removed from all layers to improve evidential calibration.}
\end{itemize}

\paragraph{Action Space and Network Outputs:}
the theoretical action space $\mathcal{A}$ is defined by four discrete movements: 0 (Cruising/Straight), 1 (Reverse), 2 (Evasion Left), and 3 (Evasion Right). To enable gradient-based optimization, the network architectures output continuous 4-dimensional vectors representing their predictive confidence. For the standard baselines (CNN and CNN-GRU), the final classifier outputs a vector of log-probabilities via a Log-Softmax activation. Conversely, the NBSR architectures output a vector representing the concentration parameters of a Dirichlet distribution, $\boldsymbol{\alpha} = \mathbf{e} + \mathbf{1}$, where $\mathbf{e} \ge 0$ denotes the accumulated evidence for each action. This distinction is critical for safety: while a highly uncertain baseline is mathematically forced to distribute probabilities that sum to 1 (e.g. $[0.25, 0.25, 0.25, 0.25]$), the NBSR model can output $\boldsymbol{\alpha} = [1.0, 1.0, 1.0, 1.0]$, explicitly quantifying a state of total epistemic uncertainty (zero evidence) to trigger a safe halt. During training, the discrete expert action $a^*_t$ is transformed into a one-hot vector to properly isolate the penalty applied by the masked evidential regularizer.

\paragraph{Memory Module (CNN-GRU \& NBSR-Mem):}
for models equipped with temporal memory, the 128-dimensional output of the CNN backbone is processed sequentially by a standard Gated Recurrent Unit (\texttt{nn.GRU}) with an input size of 128, a hidden size of 128, and `batch\_first=True`.

\paragraph{Hierarchical Evidential DAG (NBSR \& NBSR-Mem):}
the routing topology consists of three levels:
\begin{itemize}
    \item \textit{Root Router:} a single \texttt{Linear(128, 2)} layer mapping the hidden state to the two abstract modes (Cruising vs. Evasion).
    \item \textit{Mid-Level Experts:} two \texttt{Linear(128, 4)} layers (without biases), one for each abstract mode. These extract initial Dirichlet evidence ($\mathbf{e}_{mid}$).
    \item \textit{Leaf Experts:} four \texttt{Linear(128, 4)} layers (without biases). These extract additive Dirichlet evidence ($\mathbf{e}_{leaf}$) if the Depth 1 entropy remains above the threshold.
\end{itemize}
During training, the Root Router utilizes a standard Softmax activation. During inference, it uses an argmax (one-hot) to enable discrete early exits. All experts utilize a Softplus activation to ensure non-negative evidence generation.

\subsection{Training Protocol and Hyperparameters}

All models were trained via Behavioral Cloning \cite{pomerleau1989alvinn} using the Adam optimizer \cite{kingma2017Adam}. To stabilize the Recurrent Evidential Routing Network, we employed a specific combination of losses and regularizers.

\paragraph{Hyperparameters:}
\begin{itemize}
    \item \textit{Batch Size:} 64
    \item \textit{Epochs:} 30
    \item \textit{Learning Rate:} $2 \times 10^{-3}$
    \item \textit{Weight Decay:} $2 \times 10^{-3}$ applied \textit{exclusively} to the CNN Backbone parameters.
    \item \textit{Base Dirichlet Prior ($\boldsymbol{\alpha}_0$):} $\mathbf{1}$ (a tensor of ones).
    \item \textit{Inference Entropy Threshold ($\eta$):} $-4.5$ (used for full dataset evaluation to enforce deep traversal on intersections; -0.5 for the single OOD trace).
    \item \textit{OOD Abstention Threshold ($\tau_{conf}$):} $10.0$
\end{itemize}

\paragraph{Loss Formulation (NBSR Models):}
the training objective is a composite of three terms:
\begin{equation}
    \mathcal{L}_{total} = \mathcal{L}_{NLL} + \mathcal{L}_{reg} + \lambda_{route} \mathcal{L}_{route}
\end{equation}

\begin{enumerate}
    \item \textit{Negative Log-Likelihood ($\mathcal{L}_{NLL}$):} computed using the expected probability of the final Dirichlet distribution: $\mathbf{p} = \boldsymbol{\alpha}_T / \sum \boldsymbol{\alpha}_T$.
    \item \textbf{Auxiliary Concept Loss ($\mathcal{L}_{route}$):} a standard Cross-Entropy loss applied to the Root Router logits to enforce semantic mapping. We map the optimal action $a^*_t \in \{0, 1, 2, 3\}$ to an abstract target mode $m^*_t \in \{0, 1\}$ corresponding to the hierarchical routing tree. Specifically, $m^*_t = 0$ if $a^*_t \in \{0, 1\}$ (Cruising/Straight/Reverse) and $m^*_t = 1$ if $a^*_t \in \{2, 3\}$ (Evasion/Left/Right). The loss is defined as:
    \begin{equation}
        \mathcal{L}_{route} = - \sum_{j=0}^{1} \mathbb{I}(m^*_t = j) \log(p_{root, j})
    \end{equation}
    where $p_{root, j}$ is the predicted probability of mode $j$ from the Root Router. The weighting factor was set to $\lambda_{route} = 0.1$.
    \item \textit{Masked Evidential Regularization ($\mathcal{L}_{reg}$):} to penalize hallucinated evidence without causing gradient starvation, we utilized the target-masked regularizer \cite{sensoy2018evidential}:
    \begin{equation}
        \mathcal{L}_{reg} = \lambda_{reg}^{(e)} \frac{1}{K} \sum_{i=1}^K (\alpha_{T, i} - 1) (1 - y_i)
    \end{equation}
    where $\mathbf{y}$ is the one-hot encoded target action. To allow the GRU time to form initial memory pathways, we applied Evidential Annealing, linearly ramping the regularizer weight $\lambda_{reg}^{(e)}$ from $0.0$ to $0.02$ over the first 10 epochs.
\end{enumerate}

\section{Experimental Details for BOED Active Clinical Triage}
\label{app:BOED_details}

Here we provide the technical specifications and hyperparameter configurations required to reproduce the Active Clinical Triage experiments presented in Section \ref{subsec:nbsr_boed}.

\subsection{Computing Environment}
All experiments were executed on a cloud-based virtual machine instance (KVM full virtualization) to specifically isolate and measure CPU-bound inference efficiency. The exact hardware specifications are as follows:
\begin{itemize}
    \item \textit{Processor:} Intel(R) Xeon(R) CPU @ 2.20GHz, x86\_64 architecture (1 Physical Core, 2 Logical Threads) with 55 MiB L3 cache.
    \item \textit{System Memory (RAM):} 13.61 GB
    \item \textit{Storage:} 242.49 GB allocated disk space
    \item \textit{Hardware Accelerator:} None (Strict CPU-only execution; no GPU utilized)
\end{itemize}

\subsection{Dataset and Preprocessing}
We utilized the Kaggle Disease Symptom Prediction dataset (same as used in Section \ref{subsec:medical_diagnosis}), mapping clinical symptom profiles to 41 distinct pathologies.
\begin{itemize}
    \item \textit{Input Features ($X$):} 132 binary indicators representing symptom presence (1) or absence (0). 
    \item \textit{Partitioning:} Features were partitioned into 12 base demographic features (Always visible) and 5 Diagnostic Test Panels, each containing 24 task-specific symptom indicators.
    \item \textit{Clinical Noise:} A uniform 5\% noise mask was applied to the feature matrix, where binary states were stochastically inverted ($0 \leftrightarrow 1$).
\end{itemize}

\subsection{Network Architectures}
\paragraph{AR-NBSR Architecture.}
\begin{itemize}
    \item \textit{Global Router:} An MLP processing the concatenated state $[\mathbf{h}_x, \bm{\alpha}_t, \mathbf{m}_t]$, where $\mathbf{m}_t$ is the one-hot history of previously queried panels. Architecture: Linear(178 $\to$ 128) $\to$ LayerNorm $\to$ ReLU $\to$ Linear(128 $\to$ 5).
    \item \textit{Expert Networks (Panels):} Independent feature extractors: Linear(36 $\to$ 128) $\to$ LayerNorm $\to$ ReLU $\to$ Linear(128 $\to$ 128) $\to$ ReLU $\to$ Linear(128 $\to$ 41) $\to$ Softplus.
\end{itemize}

\subsection{Optimization and Training Dynamics}
Networks were optimized using Adam \cite{kingma2017Adam} ($\text{lr} = 10^{-3}$) for 200 epochs with a batch size of 128. A \textit{ReduceLROnPlateau} scheduler ($\text{factor}=0.5, \text{patience}=10$) was employed to stabilize convergence.

\paragraph{Hyperparameters.}
\begin{itemize}
    \item \textit{Entropy Penalty ($\lambda$):} $1 \times 10^{-4}$.
    \item \textit{Budget Penalty ($\gamma$):} $0.005$.
    \item \textit{Gumbel-Softmax Temperature ($\tau$):} Exponentially annealed from $1.0$ to $0.1$.
    \item \textit{Training Early Exiting ($\eta$):} $-150.0$.
    \item \textit{Evaluation Sweep ($\eta$):} For the Pareto Frontier, we evaluated a dense grid of 30 linearly spaced confidence thresholds $\eta \in [-110.0, -400.0]$.
\end{itemize}

\end{document}